\documentclass[11pt]{article}
\usepackage[margin=1in]{geometry}

\usepackage{macros}

\usepackage{etoolbox}

\bluehyperref

\newtoggle{arxiv}
\toggletrue{arxiv}

\usepackage[style=alphabetic,backend=bibtex,maxnames=12,maxbibnames=10,maxcitenames=10,maxalphanames=10,giveninits=true,doi=false,url=true]{biblatex}
\newcommand*{\citet}[1]{\AtNextCite{\AtEachCitekey{\defcounter{maxnames}{2}}} \textcite{#1}}

\newcommand*{\citep}[1]{\cite{#1}}

\bibliography{bib}
\let\citealp\citep

\usepackage{overpic}

\usepackage{macros-paper}

\usepackage[noend]{algorithmic}
\usepackage{algorithm}

\usepackage{comment}

\usepackage{cellspace, tabularx}                 

\newcolumntype{C}{>{\centering\arraybackslash}X}
\addparagraphcolumntypes{C}
\newcolumntype{P}[1]{>{\arraybackslash}p{#1}}
\usepackage{array}
\usepackage{multirow}
\renewcommand{\arraystretch}{3}
\newcolumntype{x}[1]{%
	>{\raggedleft\hspace{0pt}}p{#1}}%
\usepackage{soul}

\newif\ifcomments
\commentsfalse
\ifcomments
\newcommand{\ha}[1]{
		\textcolor{blue}{\textbf{HA:} {#1}}
}

\newcommand{\jn}[1]{
		\textcolor{olive}{\textbf{JN:} {#1}}
}

\newcommand{\vf}[1]{
		\textcolor{magenta}{\textbf{VF:} {#1}}
}

\newcommand{\hn}[1]{
		\textcolor{red}{\textbf{HN:} {#1}}
}

\newcommand{\knote}[1]{
\textcolor{red}{\textbf{KT:} {#1}}
}

\else
\newcommand{\ha}[1]{}

\newcommand{\jn}[1]{}

\newcommand{\vf}[1]{}

\newcommand{\hn}[1]{}

\newcommand{\knote}[1]{}
\fi
\title{Fast Optimal Locally Private Mean Estimation \\ via Random Projections}
\author{Hilal Asi\thanks{Apple Inc. \texttt{hilal.asi94@gmail.com}.} \and Vitaly Feldman\thanks{Apple Inc. \texttt{vitaly.edu@gmail.com}.} \and Jelani Nelson\thanks{UC Berkeley. \texttt{minilek@berkeley.edu}. Supported by NSF grant CCF-1951384, ONR grant N00014-18-1-2562, and ONR DORECG award N00014-17-1-2127.} \and Huy L. Nguyen\thanks{Northeastern. \texttt{huylenguyen@gmail.com}.  Supported in part by NSF CAREER grant CCF-1750716.} \and Kunal Talwar\thanks{Apple Inc. \texttt{kunal@kunaltalwar.org}.}}

\begin{document}

\maketitle

\begin{abstract}
We study the problem of locally private mean estimation of high-dimensional vectors in the Euclidean ball. Existing algorithms for this problem either incur sub-optimal error or have high communication and/or run-time complexity. We propose a new algorithmic framework, \pjl, for private mean estimation that yields algorithms that are computationally efficient, have low communication complexity, and incur optimal error up to a $1+o(1)$-factor. Our framework is deceptively simple: each randomizer projects its input to a random low-dimensional subspace, normalizes the result, and then runs an optimal algorithm such as \pug in the lower-dimensional space. In addition, we show that, by appropriately correlating the random projection matrices across devices, we can achieve fast server run-time. We mathematically analyze the error of the algorithm in terms of properties of the random projections, and study two instantiations. Lastly, our experiments for private mean estimation and private federated learning demonstrate that our algorithms empirically obtain nearly the same utility as optimal ones while having significantly lower communication and computational cost.
\end{abstract}

\section{Introduction}
Distributed estimation of the mean, or equivalently the sum, of vectors $v_1,\ldots,v_n\in\r^d$ is a fundamental problem in distributed optimization and federated learning. For example, in the latter, each of $n$ devices may compute some update to parameters of a machine learning model based on its local data, at which point a central server wishes to apply all updates to the model, i.e.\ add $\sum_{i=1}^n v_i$ to the vector of parameters. The typical desire to keep local data private necessitates methods for computing this sum while preserving privacy of the local data on each individual device, so that the central server essentially only learns the noisy sum and (almost) nothing about each individual summand $v_i$~\cite{DworkKeMcMiNa06,BonawitzIvKrMaMcPaRaSeSe17}.

The gold standard for measuring privacy preservation is via the language of {\it differential privacy} \cite{DworkMcNiSm06}. In this work, we study this problem in the setting of {\it local differential privacy} (LDP). We consider (one-round) protocols for which there exists some randomized algorithm $\R:\r^d\rightarrow \mathcal M$ (called the {\it local randomizer}), such that each device $i$ sends $\R(v_i)$ to the server. We say the protocol is {\it $\diffp$-differentially private} if for any $v, v'\in\r^d$ and any event $S\subseteq \mathcal M$,
$\Pr(\R(v) \in S) \le e^\diffp \Pr(\R(v') \in S)$.
If $\diffp = 0$ then the distribution of $\R(v)$ is independent of $v$, and hence the output of $\R(\cdot)$ reveals nothing about the local data (perfect privacy); meanwhile if $\diffp = \infty$ then the distributions of $\R(v)$ and $\R(v')$ can be arbitrarily far, so that in fact one may simply set $\R(x) = x$ and reveal local data in the clear (total lack of privacy). Thus, $\diffp\ge 0$ is typically called the {\it privacy loss} of a protocol.

There has been much previous work on private protocols for estimating the mean $\mu := \frac 1n\sum_{i=1}^n v_i$ in the LDP setting. Henceforth we assume each $v_i$ lives on the unit Euclidean sphere\footnote{One often considers the problem for the vectors being of norm at most 1, rather than exactly 1. It is easy to show that vectors $v$ in the unit ball in $\r^d$ can be mapped to $\mathbb{S}_{d} \subseteq \r^{d+1}$, simply as $(v, 1 - \|v\|_2^2)$. Thus up to changing $d$ to $d+1$, the two problems are the same. Since we are interested in the case of large $d$, we choose the version that is easier to work with.} $\mathbb S_{d-1}\subset \r^d$.
Let $\hat \mu$ be the estimate computed by the central server based on the randomized messages $\R(v_1),\ldots,\R(v_n)$ it receives. \citet{DuchiR19} showed that the asymptotically optimal expected mean squared error $\E \ltwo{\mu - \hat \mu}^2$ achievable by any one-round protocol must be at least $\Omega(\frac d{n \min (\diffp, \diffp^2)})$, which is in fact achieved by several protocols \cite{DuchiJW18,BhowmickDuFrKaRo18,ChenKO20,FeldmanTa21}. These protocols however achieved empirically different errors, with some having noticeably better constant factors than others.

Recent work of~\citet{AsiFeTa22} sought to understand the optimal 
error achievable by any protocol. Let $\R$ be any local randomizer satisfying $\diffp$-DP, and $\mathcal A$ be an aggregation algorithm for the central server such that it computes its mean estimate as $\hat \mu := \mathcal A(\R(v_1), \ldots, \R(v_n))$. Furthermore, suppose that the protocol is {\it unbiased}, so that $\E \hat \mu = \mu$ for any inputs $v_1,\ldots,v_n$. Lastly, let $\mathcal A_{\pu_\diffp}, \R_{\pu_\diffp}$ denote the $\pu_\diffp$ protocol of \cite{BhowmickDuFrKaRo18} (parameterized to satisfy $\diffp$-DP\footnote{There are multiple ways to set parameters of $\pu$ to achieve $\diffp$-DP; we assume the setting described by \citet{AsiFeTa22}, which optimizes the parameters to minimize the expected mean squared error.}). Let $\err_{n,d}(\mathcal A, \R)$ denote
$$
\sup_{v_1,\ldots,v_n\in\mathbb \mathbb{S}^{d-1}} \ltwo{\mathcal A(\R(v_1),\ldots,\R(v_n)) - \mu}^2 .
$$
\citet{AsiFeTa22} proved the remarkable theorem that for any $n, d$: 
\begin{equation*}
    \err_{n,d}(\mathcal A, \R) \ge \err_{n,d}(\mathcal A_{\pu_\diffp}, \R_{\pu_\diffp}) .
\end{equation*}

Thus, $\privunit$ is not only asymptotically optimal, but in fact actually optimal in a very strong sense (at least, amongst unbiased protocols). 

While this work thus characterizes the optimal error achievable for $\diffp$-LDP mean estimation, there are other desiderata that are important in practice. The most important amongst them are the {\it device runtime} (the time to compute $\R(v)$), the {\it server runtime} (the time to compute $\mathcal A$ on $(\R(v_1),\ldots,\R(v_n))$), and the {\it communication cost} ($\lceil \log_2 |\mathcal M|\rceil$ bits for each device to send its $\R(v)$ to the server). 
The known error-optimal algorithms (\pu \cite{BhowmickDuFrKaRo18} and \pug \cite{AsiFeTa22}) either require communicating $d$ floats or have a slower device runtime of $\Omega(e^\diffp d)$. As mean estimation is often used as a subroutine in high-dimensional learning settings, this communication cost can be prohibitive and this has led to a large body of work on reducing communication cost ~\cite{AgarwalSYKM18, girgis2020shuffled, ChenKO20, GKMM19, FeldmanTa21,ChaudhuriMoSa11}. Server runtimes of these optimal algorithms are also slow, scaling as $nd$, whereas one could hope for nearly linear time $\tilde O(n+d)$ (see \cref{tab:comp}).


\citet{ChenKO20} recently studied this tradeoff and proposed an algorithm called \sqkr, which has an optimal communication cost of $\diffp$ bits \jn{cite proof of comm lower bound?} and device runtime only $O(d \log^2 d)$.
However, this algorithm incurs error that is suboptimal by a constant factor, which can be detrimental in practice. Indeed our experiments in \cref{sec:experiments} demonstrate the significance of such constants as the utility of these algorithms does not match that of optimal algorithms even empirically, resulting for example in $10\%$ degradation in accuracy for private federated learning over MNIST with $\diffp=10$.

\citet{FeldmanTa21} give a general approach to reducing communication via rejection sampling. When applied to \pug, it yields a natural algorithm that we will call Compressed \pug. While it yields optimal error and near-optimal communication, it requires device run time that is $O(e^{\diffp} d)$. These algorithms are often used for large $d$ (e.g. in the range $10^5-10^7$) corresponding to large model sizes. The values of $\diffp$ are often in the range $4$-$12$ or more, which may be justifiable due to privacy being improved by aggregation or shuffling~\citep{Bittau17,Cheu:2019,ErlingssonFeMiRaTaTh19,FeldmanMcTa20}. For this range of values, the $\Theta(e^{\diffp} d)$ device runtime is prohibitively large and natural approaches to reduce this in~\citet{FeldmanTa21} lead to increased error. To summarize, in the high-dimensional setting, communication-efficient local randomizers are forced to choose between high device runtime or suboptimal error (see~\Cref{tab:comp}). 

Another related line of work is non-private communication efficient distributed mean estimation where numerous papers have recently studied the problem due to its importance in federated learning~\cite{SureshYuKuMc17,AlistrahGLTV17,KonevcnyRi18,AgarwalSYKM18,GKMM19,FaghriTMARR20,MT20,VargaftikBaPoGaBeYaMi21,VargaftikBaPoMeItMi22}.
Similarly to our paper, multiple works in this line of work have used random rotations to design efficient algorithms~\cite{SureshYuKuMc17,VargaftikBaPoGaBeYaMi21,VargaftikBaPoMeItMi22}. However, the purpose of these works is to develop better quantization schemes for real-valued vectors to reduce communication to $(1+o(1)) \cdot d$ bits. This is different from our goal, which is to send $k \ll d$ parameters while still obtaining the statistically optimal bounds up to a $1+o(1)$ factor. Moreover, in order to preserve privacy, our algorithms require new techniques to handle the norms of the projected vectors, and post-process them using different normalization schemes.

\subsection{Contributions}
Our main contribution is a new framework, \pjl, for private mean estimation which results in near-optimal, efficient, and low-communication algorithms. 
Our algorithm obtains the same optimal utility as \pu and \pug but with a significantly lower polylogarithmic communication complexity, and device runtime that is $O(d \log d)$ and server runtime $\tilde O(n+d)$. We also implement our algorithms and show that both the computational cost and communication cost are small empirically as well. 
\Cref{fig:err-comp} plots the error as a function of $\diffp$ for several algorithms and demonstrates the superiority of our algorithms compared to existing low-communication algorithms (see more details in~\Cref{sec:experiments}).
Moreover, we show that the optimal error bound indeed translates to fast convergence in our private federated learning simulation.

At a high level, each local randomizer in our algorithm first projects the vector to a randomly chosen lower-dimensional subspace, and then runs an optimal local randomizer in this lower-dimensional space. At first glance, this is reminiscent of the use of random projections in the Johnson-Lindenstrauss (JL) transform or the use of various embeddings in prior work (such as \citep{ChenKO20}). However, unlike the JL transform and embeddings in prior work, in our application, each point uses its own projection matrix \jn{not really true anymore}. The JL transform is designed to preserve {\em distances} between points, not the points themselves. In our application, a random projection is used to obtain a low-dimensional  unbiased estimator for a point; however the variance of this estimator is quite large (of the order of $d/k$). Our main observation is that while large, this variance is small compared to the variance added due to privacy when $k$ is chosen appropriately. This fact allows us to use the projections as a pre-processing step. With some care needed to control the norm of the projected vector which is no longer fixed, we then run a local randomizer in the lower dimensional space. We analyze the expected squared error of the whole process and show that as long as the random projection ensemble satisfies certain specific properties, the expected squared error of our algorithm is within $(1+o(1))$ factors of the optimal; the $o(1)$ term here falls with the projection dimension $k$. The required properties are easily shown to be satisfied by random orthogonal projections. We further show that more structured projection ensembles, that allow for faster projection algorithms, also satisfy the required properties, and this yields even faster device runtime. 

Although these structured projections result in fast device runtime, they still incur an expensive computational cost for the server which needs to apply the inverse transformation for each client, resulting in runtime $O(nd \log d)$. Specifically, each device sends a privated version  $\hat v_i$ of $W_i v_i$, and the server must then compute $\sum_i W^\top \hat v_i$.
To address this, we use correlated transformations in order to reduce server runtime while maintaining optimal accuracy up to $1+o(1)$ factors. In our correlated \pjl protocol, the server pre-defines a random transformation $W$, which all devices then use to define $W_i = S_i W$ where $S_i$ is a sampling matrix. The advantage is then that $\sum_i W_i^\top \hat v_i$ is replaced with $\sum_i W^\top \hat v_i = W^\top (\sum_i S_i \hat v_i)$, which can be computed more quickly as it requires only a single matrix-vector multiplication. The main challenge with correlated transformations is that the correlated client transformations result in increased variance. However, we show that the independence in choosing the sampling matrices $S_i$ is sufficient to obtain optimal error. 

Finally, we note without correlating projections each client using its own projection would imply that each projection needs to be communicated to the server. Doing this naively would require communicating $kd$ real values completely defeating the benefits of our protocol. However the projection matrix does not depend on the input and therefore can be communicated cheaply using a seed to an appropriate pseudorandom generator.




\newcommand{\vopt}{\mathsf{OPT}}

\begin{table*}[t]
\begin{center}
		\begin{tabular}{| Sc | Sc | Sc | Sc | Sc |}
		    \hline
			  & \textbf{\darkblue{Utility}} & 
           \textbf{\shortstack{\darkblue{Run-time} \\ (client)}} &
            \textbf{\shortstack{\darkblue{Run-time} \\ (server)}} &
            \textbf{\darkblue{Communication}} \\
			\hline
                \shortstack{\small Repeated \phs \\ \scriptsize{\cite{DuchiJW18,FeldmanTa21}}} & $O(\vopt)$  & $\diffp d$ & $n \ceil{\diffp}$ &  $\diffp \cdot \poly(\log d)$ \\ 
			\hline
                \shortstack{\small  \pu \\ \scriptsize{\cite{BhowmickDuFrKaRo18}}} & $\vopt$  & $ d$ & $n d$ &  $d$ \\
			\hline 
			\shortstack{\small  \sqkr \\ \scriptsize{\cite{ChenKO20}}} & $ O(\vopt)$  & $d \log^2 d$ & $n \diffp \log d + d \log^2 d$ & $\diffp \log d$ \\ 
			\hline
                \shortstack{\iftoggle{arxiv}{\small Compressed \pu}{\small Comp\pu} \\ \scriptsize{\cite{FeldmanTa21}}} & $(1+o(1)) \cdot \vopt$  & $e^\diffp d$ & $n d \log d$ & $\diffp \cdot \poly(\log d)$ \\
			\hline
                \shortstack{\small  \pfjl \\ \small{\textbf{(\Cref{sec:algs}})}} & $(1+o(1)) \cdot \vopt$  & $d \log d$ & $n d \log d $ & $\diffp \log^2 d$\\
                \hline
                \shortstack{\small \pfjl-corr \\ \small{\textbf{(\Cref{sec:algs-corr})}}} & $(1+o(1)) \cdot \vopt$  & $d \log d$ & $ n\log^3 d + n \diffp \log d +  d \log d$  & $\diffp \log^2 d$\\
			\hline
		\end{tabular}

     \caption{Comparison of Error-Runtime-Communication trade-offs for different algorithms for private mean estimation.
     The last two rows use our algorithms from~\Cref{sec:algs} and~\Cref{sec:algs-corr} with a communication budget $k \approx \diffp \log d$.
     We omit constant factors from the run-time and communication complexities.}
     \label{tab:comp}
     \end{center}
\end{table*}

\iftoggle{arxiv}{
\begin{figure}[h]
  \begin{center}
   \iftoggle{arxiv}{}{\vspace{-.5cm}}
      \begin{overpic}[width=.65\columnwidth]{
      {plots/err-comparison_lst_eps_d_32768}.pdf}
	\put(-0.5,26){
			\rotatebox{90} {\small Squared Error}
	}
        \put(45,-3){{\small $\diffp$}}
      \end{overpic} 
    \caption{\label{fig:err-comp} Squared error of different algorithms as a function of $\diffp$ for $d = 32768$  averaged over $50$ runs with 90\% confidence intervals. The lines for the top three algorithms are almost identical.
    }
  \end{center}
\end{figure}
}
{
\begin{figure}
\floatbox[{\capbeside\thisfloatsetup{capbesideposition={right,center},capbesidewidth=4cm}}]{figure}
{\caption{\label{fig:err-comp} Squared error of different algorithms as a function of $\diffp$ for $d = 32768$  averaged over $50$ runs with 90\% confidence intervals. The lines for the top three algorithms are almost identical.
    }}
{
    \begin{overpic}[width=1.7\columnwidth]{
      {plots/err-comparison_lst_eps_d_32768}.pdf}
	\put(-1,20){
			\rotatebox{90} {\small Squared Error}
	}
        \put(48,-1){{\small $\diffp$}}
      \end{overpic}
      }
\end{figure}

}





\section{A random projection framework for low-communication private algorithms}
\label{sec:algs}
In this section we propose a new algorithm, namely \pjl, which has low communication complexity and obtains near-optimal error (namely, up to a $1+o(1)$ factor of optimum). The starting point of our algorithms is a randomized projection map in $\reals^{k \times d}$ which we use to project the input vectors to a lower-dimensional space. The algorithm then normalizes the vector as a necessary pre-processing step. Finally, the local randomizer applies \pug~\cite{AsiFeTa22} (see~\Cref{alg:pug} in Appendix) over the normalized projected vector and sends the response to the server. The server then applies the inverse transformation and aggregates the responses in order to estimate the mean. We present the full details of the client and server algorithms in~\Cref{alg:privJL-cl} and~\Cref{alg:privJL-serv}.

To analyze this algorithm, we first present our general framework for an arbitrary distribution over projections. In the next sections we utilize different instances of the framework using different random projections such as random rotations and more structured transforms.  

\begin{algorithm}
	\caption{\pjl (client)}
	\label{alg:privJL-cl}
	\begin{algorithmic}[1]
		\REQUIRE Input vector $v \in \reals^d$, Distribution over projections $\mathcal{W}$.
            \STATE Randomly sample transform $\ts \in \reals^{k \times d}$ from $\mathcal{W}$
            \STATE Project the input vector $v_p = \ts v$
            \STATE Normalize $u = \frac{v_p}{\ltwo{v_p}}$
            \STATE Let $\hat u = \pug(u)$ (as in~\Cref{alg:pug})
            \STATE Send $\hat u$ and (encoding of) $\ts$ to server
	\end{algorithmic}
\end{algorithm}

\begin{algorithm}
	\caption{\pjl (server)}
	\label{alg:privJL-serv}
	\begin{algorithmic}[1]
            \STATE Receive $\hat u_1, \dots, \hat u_1$ from clients with (encodings of) transforms $\ts_1,\dots,\ts_n$
            \STATE Return the estimate 
            \iftoggle{arxiv}{
            \begin{equation*}
                \hat \mu = \frac{1}{n} \sum_{i=1}^n \ts_i^\top \hat u_i
            \end{equation*}
        }
        {
        $ \hat \mu = \frac{1}{n} \sum_{i=1}^n \ts_i^\top \hat u_i$
        }
	\end{algorithmic}
\end{algorithm}

The following theorem states the privacy and utility guarantees of \pjl for a general distribution over transformation $\mathcal{W}$ that satisfies certain properties. For ease of notation, let $\Rjl$ denote the \pjl local randomizer of the client (\Cref{alg:privJL-cl}), and $\Ajl$ denote the server aggregation of \pjl (\Cref{alg:privJL-serv}). 
To simplify notation, we let \begin{equation*}
   \err_{n,d}(\pug) 
   = \err_{n,d}(\mathcal A_{\pug_\diffp}, \R_{\pug_\diffp})
   = c_{d,\diffp} \frac{d}{n \diffp}
\end{equation*}
denote the error of the \pug $\diffp$-DP protocol where $A_{\pug_\diffp}$, $\R_{\pug_\diffp}$ denote the \pug protocol with optimized parameters (see~\Cref{alg:pug}) and $c_{d,\diffp} = O(1)$ is a constant~\cite{AsiFeTa22}.
\iftoggle{arxiv}{}{We defer the proof to~\Cref{sec:proof-thm-jl-gen}.}
\begin{theorem}
\label{thm:err-jl-gen}
    Let $k \le d$ and assume that the transformations $\ts_i \in \reals^{k \times d}$ are independently chosen from a distribution $\mathcal{W}$ that satisfies:
    \begin{enumerate}
        \item Bounded operator norm: $\E\left[\norm{\ts_i^\top}^2\right] \le {d/k} + \beta_{\mathcal{W}}$.
        \item Bounded bias:  $\ltwo{\E\left[ \frac{\ts_i^\top\ts_i v}{\ltwo{\ts_i v}}\right] - v} \le \sqrt{\alpha_{\mathcal{W}}}$ for all unit vectors $v \in \reals^d$. 
    \end{enumerate}
    Then for all unit vectors $v_1,\ldots,v_n \in \reals^d$, setting $\hat \mu = \Ajl \left( \Rjl(v_1), \dots, \Rjl(v_n) \right)$, the local randomizers $\Rjl$ are $\diffp$-DP and  
\begin{align*}
        \E\left[\ltwo{  \hat \mu - \frac{1}{n}\sum_{i=1}^n v_i  }^2 \right] 
        \le \err_{n,d}(\pug) \cdot \left(1 + \frac{k\beta_{\mathcal{W}}}{d} + O\left( \frac{\diffp + \log k}{k} \right) \right)
         + \alpha_{\mathcal{W}}.
    \end{align*}
\end{theorem}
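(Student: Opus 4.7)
Privacy is essentially free: the projection matrix $\ts_i$ does not depend on $v$ (so sampling it costs nothing against DP), the normalized vector $u = \ts_i v/\ltwo{\ts_i v}$ is a unit vector in $\reals^k$ on which \pug is $\diffp$-DP, and the server's aggregation is pure post-processing. So I would spend one line on this and focus the rest on utility.

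For utility, let $X_i = \ts_i^\top \hat u_i$, so that $\hat \mu = \frac1n\sum_i X_i$. Since the $\ts_i$'s and the internal randomness of \pug are independent across clients, $X_1,\dots,X_n$ are independent. I would use the standard bias/variance split
\begin{equation*}
\E\ltwo{\hat\mu-\tfrac1n\sum_i v_i}^2 = \ltwo{\E\hat\mu-\tfrac1n\sum_i v_i}^2 + \tfrac{1}{n^2}\sum_i \E\ltwo{X_i-\E X_i}^2.
\end{equation*}
For the bias term, conditioning on $\ts_i$ makes $u_i$ deterministic and \pug is unbiased, so $\E X_i = \E[\ts_i^\top u_i] = \E[\ts_i^\top \ts_i v_i / \ltwo{\ts_i v_i}]$; Assumption~2 then gives $\ltwo{\E X_i - v_i} \le \sqrt{\alpha_{\mathcal{W}}}$, and a triangle inequality collapses the bias contribution to $\alpha_{\mathcal{W}}$.

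For the per-client variance I would apply the law of total variance by conditioning on $\ts_i$ (which determines $u_i$):
\begin{equation*}
\E\ltwo{X_i-\E X_i}^2 = \E\bigl[\E[\ltwo{\ts_i^\top(\hat u_i - u_i)}^2\mid \ts_i]\bigr] + \mathrm{Var}(\ts_i^\top u_i).
\end{equation*}
For the first (``\pug'') piece, I bound $\ltwo{\ts_i^\top(\hat u_i-u_i)}^2 \le \norm{\ts_i^\top}^2 \cdot \ltwo{\hat u_i-u_i}^2$, take the conditional expectation of the second factor (which is the single-sample MSE of $k$-dimensional \pug on the unit vector $u_i$, namely $c_{k,\diffp}\,k/\diffp$), and then apply Assumption~1 on the outer expectation. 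For the second (``projection'') piece, I use $\mathrm{Var}(\ts_i^\top u_i) \le \E\ltwo{\ts_i^\top u_i}^2 \le \E\norm{\ts_i^\top}^2 \le d/k + \beta_{\mathcal{W}}$.

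Combining and dividing by $n^2$, the variance contribution is at most
\begin{equation*}
\tfrac{1}{n}(d/k+\beta_{\mathcal{W}})\bigl(c_{k,\diffp}\, k/\diffp + 1\bigr) = \tfrac{c_{k,\diffp}\, d}{n\diffp}\Bigl(1 + \tfrac{k\beta_{\mathcal{W}}}{d}\Bigr) + \tfrac{d/k+\beta_{\mathcal{W}}}{n}.
\end{equation*}
Finally, I would rewrite this in units of $\err_{n,d}(\pug) = c_{d,\diffp}\, d/(n\diffp)$. The main obstacle (and the only non-routine step) is converting the dimension-$k$ \pug constant into the dimension-$d$ one: using the known asymptotics of the optimal \pug constant (both $c_{k,\diffp}$ and $c_{d,\diffp}$ are $1 + O((\diffp+\log m)/m)$ for the relevant dimension $m$), one gets $c_{k,\diffp}/c_{d,\diffp} \le 1 + O((\diffp+\log k)/k)$ since $k\le d$, and the trailing additive term $(d/k+\beta_\mathcal{W})/n$ is at most $O(\diffp/k + \diffp\beta_\mathcal{W}/d)\cdot \err_{n,d}(\pug)$, which is absorbed into the $O((\diffp+\log k)/k)$ and $k\beta_{\mathcal{W}}/d$ factors. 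Adding back the $\alpha_\mathcal{W}$ bias term yields the stated bound.
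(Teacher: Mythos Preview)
Your proposal is correct and uses essentially the same ingredients as the paper's proof: unbiasedness of \pug\ to peel off the randomizer, Assumption~1 on $\E\norm{\ts_i^\top}^2$ to control the \pug\ contribution, Assumption~2 to bound the bias, and the ratio $c_{k,\diffp}/c_{d,\diffp}$ to convert the $k$-dimensional \pug\ error into the $d$-dimensional one. The only organizational difference is that you do a bias/variance split on $X_i=\ts_i^\top\hat u_i$ followed by the law of total variance (conditioning on $\ts_i$), whereas the paper first splits at $\ts_i^\top u_i$ and then expands $\frac{1}{n^2}\E\ltwo{\sum_i(\ts_i^\top u_i - v_i)}^2$ into diagonal and cross terms, handling the cross terms via independence and Cauchy--Schwarz; the two decompositions are equivalent and yield the same arithmetic.

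One small imprecision: you justify $c_{k,\diffp}/c_{d,\diffp}\le 1+O((\diffp+\log k)/k)$ by asserting that each constant is individually $1+O((\diffp+\log m)/m)$. That is not quite how the paper states it; the needed bound on the \emph{ratio} is exactly the content of \Cref{prop:C_eps}, which you should cite directly rather than rederive from per-dimension asymptotics.
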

\knote{As stated, the theorems don't say anything about the estimators being closed to unbiased. Is there some statement we can make?}
\iftoggle{arxiv}{
\begin{proof}
    First, note that the claim about privacy follows directly from the privacy guarantees of \pug~\cite{AsiFeTa22} as our algorithm applies \pug over a certain input vector with unit norm. \\
    For accuracy, note that $\hat \mu =  \frac{1}{n}\sum_{i=1}^n \ts_i^\top \hat{u}_i$, therefore
    \begin{align*}
    \E\left[\ltwo{\hat \mu - \frac{1}{n}\sum_{i=1}^n v_i}^2\right] 
        & =  \E\left[\ltwo{\frac{1}{n}\sum_{i=1}^n \ts_i^\top \hat{u}_i - v_i}^2\right] \\
        & =  \E\left[\ltwo{\frac{1}{n}\sum_{i=1}^n \ts_i^\top \hat{u}_i - \ts_i^\top u_i + \ts_i^\top u_i - v_i}^2\right] \\
        & \stackrel{(i)}{=}   \frac{1}{n^2}\sum_{i=1}^n \E\left[\ltwo{\ts_i^\top \hat{u}_i - \ts_i^\top u_i}^2\right] + \frac{1}{n^2} \E\left[ \ltwo{\sum_{i=1}^n \ts_i^\top u_i - v_i}^2\right] \\
        & \le \frac{1}{n} \max_{i \in [n]}\E\left[\ltwo{\ts_i^\top}^2\right] \cdot \err_{1,k}(\pug) + \frac{1}{n^2} \E\left[ \ltwo{\sum_{i=1}^n \ts_i^\top u_i - v_i}^2\right].
    \end{align*}
\noindent where $(i)$ follows since $\E[\hat u] = u$ as $\pug$ is unbiased. Now we analyze each of these two terms separately. For the first term, as $\E[\norm{\ts_i}^2] \le d/k + \beta_{\mathcal{W}}$ for all $i \in [n]$ we have that is is bounded by
    \begin{align*}
    \max_{i \in [n]} \E\left[\ltwo{\ts^\top}^2\right] \cdot \err_{1,k}(\pug)
        & \le  \left(\frac{d}{k}+\beta_{\mathcal{W}}\right) c_{k,\diffp} \frac{k}{\diffp} \\
        & = \left(\frac{d}{\diffp}+\frac{\beta_{\mathcal{W}} k}{\diffp}\right) c_{d,\diffp} \frac{c_{k,\diffp}}{c_{d,\diffp}} \\
        & = \left(\frac{d}{\diffp}+\frac{\beta_{\mathcal{W}} k}{\diffp}\right) c_{d,\diffp} \cdot \left(1 + O\left( \frac{\diffp + \log k}{k} \right) \right) \\
        & = \err_{1,d}(\pug) \cdot \left(1 + \frac{\beta_{\mathcal{W}} k}{d} + O\left( \frac{\diffp + \log k}{k} \right) \right),
    \end{align*}
    where the third step follows from~\Cref{prop:C_eps}.
    For the second term,
    \begin{align*}
    \E\left[ \ltwo{\sum_{i=1}^n \ts_i^\top u_i - v_i}^2\right]
        &=    \sum_{i=1}^n\sum_{j\ne i}\E\left[ \left\langle\ts_i^\top u_i - v_i, \ts_j^\top u_j - v_j\right\rangle\right] + \sum_{i=1}^n\E\left[ \ltwo{\ts_i^\top u_i - v_i}^2\right]\\
        &\le \sum_{i=1}^n\sum_{j\ne i} \ltwo{ \E\ts_i^\top u_i - v_i}\cdot \ltwo{ \E\ts_j^\top u_j - v_j} + \sum_{i=1}^n\E\left[ \ltwo{\ts_i^\top u_i - v_i}^2\right]\\
        &\le n(n-1)\alpha_{\mathcal{W}} + \sum_{i=1}^n\E\left[ \ltwo{\ts_i^\top u_i}^2 +\ltwo{v_i}^2 - 2v_i^\top W_i^\top  u_i\right]\\
        &= n(n-1)\alpha_{\mathcal{W}} + \sum_{i=1}^n\E\left[ \ltwo{\ts_i^\top u_i}^2 + 1 - 2\ltwo{W_i v_i}\right]\\
        &\le n(n-1)\alpha_{\mathcal{W}} + n \max_{i \in [n]} \E\left[ \ltwo{\ts_i^\top}^2\right] + n.
    \end{align*} 
    Overall, this shows that 
    \begin{align*}
        \E\left[\ltwo{ \hat \mu- \frac{1}{n}\sum_{i=1}^n v_i  }^2 \right] 
        & \le \err(\pug_d,n) \cdot \left(1 + O\left( \frac{\diffp + \log k}{k} \right) \right) \\
        & \quad     + O\left(\frac{d}{nk}\right) + \frac{1}{n} + \frac{(n-1)\alpha_{\mathcal{W}}}{n}.
    \end{align*} 
    Noticing that $\err_{n,d}(\pug) = c_{d,\diffp} \cdot \frac{d}{n\diffp}$ for some constant $c_{d,\diffp}$ (see~\cite{AsiFeTa22}), this implies that 
    \begin{align*}
        \E\left[\ltwo{  \hat \mu- \frac{1}{n}\sum_{i=1}^n v_i  }^2 \right] 
        & \le \err_{n,d}(\pug) \cdot \left(1 + O\left( \frac{\diffp + \log k}{k} \right) \right)    + \alpha_{\mathcal{W}}.
    \end{align*}
This completes the proof.
\end{proof}
}

\subsection{\pjl~using Random Rotations}
\label{sec:algs-jl}
Building on the randomized projection framework of the previous section, in this section we instantiate it with a random rotation matrix. In particular, we sample $\ts \in \reals^{k \times d}$ with the structure
\begin{equation}
\label{eq:ts-rot}
    \tsH = \sqrt{\frac{d}{k}} S U,
\end{equation}
where $U \in \reals^{d \times d}$ is a random rotation matrix such that $U^\top U = I$ , and $S \in \reals^{k \times d}$ is a sampling matrix where each row has a single $1$ in a uniformly random location (without repetitions). 
The following theorem states our guarantees for this distribution.
\begin{theorem}
\label{thm:err-rand-rot}
    Let $k \le d$ and $\ts \in \reals^{k \times d}$ be a random rotation matrix sampled as described in~\eqref{eq:ts-rot}.
    Then for all unit vectors $v_1,\ldots,v_n \in \reals^d$, setting $\hat \mu =  \Ajl \left( \Rjl(v_1), \dots, \Rjl(v_n) \right) $, the local randomizers $\Rjl$ are $\diffp$-DP and 
    \begin{align*}
    \E\left[\ltwo{ \hat \mu - \frac{1}{n}\sum_{i=1}^n v_i  }^2 \right] 
    \le  \err_{n,d}(\pug) \cdot  \left(1 +   O\left(  \frac{\diffp + \log k}{k} \right) \right)
     + O \left(\frac{1}{k^2}\right).
    \end{align*}
\end{theorem}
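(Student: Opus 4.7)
The plan is to apply the general framework of Theorem~\ref{thm:err-jl-gen} to the specific choice $\ts = \sqrt{d/k}\, S U$ in~\eqref{eq:ts-rot} by verifying its two hypotheses with $\beta_{\mathcal{W}} = 0$ and $\alpha_{\mathcal{W}} = O(1/k^2)$.

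The operator-norm condition is immediate and in fact deterministic. Since the rows of the sampling matrix $S$ are distinct standard basis vectors we have $S S^\top = I_k$, and combined with $U U^\top = I_d$ this gives $\ts \ts^\top = (d/k)\, S U U^\top S^\top = (d/k)\, I_k$, so $\norm{\ts^\top}^2 = d/k$ almost surely and the first hypothesis of Theorem~\ref{thm:err-jl-gen} holds with $\beta_{\mathcal{W}} = 0$.

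For the bias condition, I would first exploit rotational symmetry: the distribution of $U$, and hence of $\ts$, is invariant under right multiplication by any orthogonal matrix, so the map $f(v) := \E[\ts^\top \ts v / \ltwo{\ts v}]$ is equivariant, $f(O v) = O f(v)$ for every orthogonal $O$. Letting $O$ range over the stabilizer of $v$ forces $f(v)$ to be parallel to $v$, say $f(v) = c\, v$ with $c = v^\top f(v) = \E[\ltwo{\ts v}]$, and in particular $c$ is independent of the unit vector $v$. It thus remains to prove $|c - 1| = O(1/k)$. Let $Y := \ltwo{S U v}^2$, so $\ltwo{\ts v}^2 = (d/k)\, Y$; since $U v$ is uniform on the unit sphere and $S$ retains $k$ coordinates chosen uniformly without replacement, $Y \sim \mathrm{Beta}(k/2, (d-k)/2)$ with $\E[Y] = k/d$ and $\mathrm{Var}(Y) = 2 k (d-k) / (d^2 (d+2)) = O(k/d^2)$. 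A second-order Taylor expansion of $\sqrt{\cdot}$ around $k/d$, or equivalently the closed form
\[
\E[\sqrt{Y}] \;=\; \frac{\Gamma((k+1)/2)\,\Gamma(d/2)}{\Gamma(k/2)\,\Gamma((d+1)/2)}
\]
together with Stirling's approximation, yields $\E[\sqrt{Y}] = \sqrt{k/d}\,(1 - 1/(4k) + O(1/k^2))$. Hence $c = \sqrt{d/k}\, \E[\sqrt Y] = 1 - O(1/k)$ and $\ltwo{f(v) - v} = |c - 1| = O(1/k)$, so the second hypothesis holds with $\alpha_{\mathcal{W}} = O(1/k^2)$.

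Substituting $\beta_{\mathcal{W}} = 0$ and $\alpha_{\mathcal{W}} = O(1/k^2)$ into Theorem~\ref{thm:err-jl-gen} immediately gives the claimed inequality. The main technical obstacle is the sharpness of the bias step: the symmetry argument yields $f(v) = c v$ but no rate for $c$, and a careless application of Jensen's inequality (bounding only $c \le 1$) leaves $c$ uncontrolled from below, resulting in $\alpha_{\mathcal{W}} = \Theta(1)$. The target $O(1/k^2)$ requires a genuinely second-order comparison of $\E[\sqrt Y]$ with $\sqrt{\E[Y]}$, which is precisely what the Taylor-plus-$\mathrm{Var}(Y)$ analysis or the $\Gamma$-ratio with Stirling provides.
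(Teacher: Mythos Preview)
Your proof is correct and follows essentially the same route as the paper: both plug into Theorem~\ref{thm:err-jl-gen} with $\beta_{\mathcal{W}}=0$ (deterministic operator-norm bound) and $\alpha_{\mathcal{W}}=O(1/k^2)$ by using rotational invariance to reduce the bias to $|c-1|$ with $c=\E[\ltwo{\ts v}]$, then evaluating $c$ via a Gamma-ratio and Stirling computation. The only cosmetic difference is that you invoke the $\mathrm{Beta}(k/2,(d-k)/2)$ law of $\ltwo{SUv}^2$ directly to obtain the closed-form $\E[\sqrt{Y}]$, whereas the paper derives the identical Gamma ratio through an explicit spherical-coordinate integration in a helper lemma (Lemma~\ref{lemma:unit-proj}).
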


The proof follows directly from~\Cref{thm:err-jl-gen} and the following proposition which proves certain properties of random rotations.
\iftoggle{arxiv}{}{We defer the proof to~\Cref{sec:proof-prop-rotation}}.
\begin{proposition}
\label{prop:rotation-prop}
    Let $\ts \in \reals^{k \times d}$ be a random rotation matrix sampled as described in~\eqref{eq:ts-rot}. Then
    \begin{enumerate}
        \item Bounded operator norm: 
        \iftoggle{arxiv}{
        \begin{equation*}
            \|W^\top\|\le \sqrt{\frac{d}{k}} .
        \end{equation*}
        }
        {
       $ \|W^\top\|\le \sqrt{\frac{d}{k}} .$
        }
        
        \item Bounded bias: for every unit vector $v \in \reals^d $:
        \iftoggle{arxiv}{
        \begin{equation*}
            \left\| \E \frac{\ts^\top \ts v}{\|\ts v\|} - v \right\| = O(1/k). 
        \end{equation*}
        }
        {
        $ \left\| \E \frac{\ts^\top \ts v}{\|\ts v\|} - v \right\| = O(1/k).$
        }
    \end{enumerate}
\end{proposition}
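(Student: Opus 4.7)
The plan is to treat the two parts of the proposition separately. Part (1) is purely deterministic: the matrix $U$ is orthogonal so $\|U\|=1$, and the $k$ rows of $S$ are distinct standard basis vectors so $\|S\|=1$. Therefore $\|W^\top\|=\|W\|\le\sqrt{d/k}\,\|S\|\|U\|=\sqrt{d/k}$ almost surely; no probabilistic content is needed.

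For Part (2) the strategy is to exploit rotational symmetry to collapse the matrix-valued expectation into a scalar moment computation. Because $U$ is Haar-distributed on the orthogonal group, $W$ and $WR$ have the same distribution for every orthogonal $R$. For any $R$ in the stabilizer of $v$, the substitution $W\mapsto WR$ shows $\E[W^\top Wv/\|Wv\|]$ is fixed by $R^\top$, which forces $\E[W^\top Wv/\|Wv\|]=c\,v$ for a scalar $c$. Pairing both sides with $v$ identifies $c=\E[v^\top W^\top Wv/\|Wv\|]=\E[\|Wv\|]$. Thus the bias bound reduces to showing $|\E[\|Wv\|]-1|=O(1/k)$.

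The next step is to control the moments of $X:=\|Wv\|^2$. Writing $y:=Uv$, which is uniform on $S^{d-1}$, and letting $T\subseteq[d]$ denote the random size-$k$ subset encoded by $S$, we have $X=(d/k)\sum_{i\in T}y_i^2$. Taking expectation over $T$ first and using $\Pr(i\in T)=k/d$ together with $\|y\|=1$ yields $\E[X]=1$ identically. For the variance, a direct calculation using $\Pr(i,j\in T)=k(k-1)/(d(d-1))$ for $i\ne j$, the spherical fourth moment $\E[y_i^4]=3/(d(d+2))$, and the identity $\sum_{i\ne j}y_i^2 y_j^2=1-\sum_i y_i^4$ gives, after collecting terms, $\mathrm{Var}(X)=2(d-k)/(k(d+2))=O(1/k)$.

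Finally I would translate this variance bound into a bias bound for $\sqrt{X}$. Expanding $(1-\sqrt{X})^2=1-2\sqrt{X}+X$ and using $\E[X]=1$ gives the identity $1-\E[\sqrt{X}]=\tfrac12\,\E[(1-\sqrt{X})^2]$. The pointwise inequality $(1-\sqrt{X})^2\le(1-X)^2$, which follows from $(1-\sqrt{X})^2(1+\sqrt{X})^2=(1-X)^2$ and $(1+\sqrt{X})^2\ge 1$, then yields $|1-\E[\sqrt{X}]|\le \mathrm{Var}(X)/2=O(1/k)$, which is exactly the required bound on $|c-1|$. The main obstacle is the fourth-moment computation for $X$: everything else is a short symmetry argument or a Taylor-type comparison, but the bookkeeping that mixes spherical moments with without-replacement sampling is the one place where care is required.
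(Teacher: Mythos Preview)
Your proof is correct. The reduction to a scalar via rotational symmetry is the same as the paper's: both arguments show that $\E[W^\top Wv/\|Wv\|]=c\,v$ with $c=\E\|Wv\|$, so everything comes down to estimating $|c-1|$.

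Where you diverge is in how you bound $|c-1|$. The paper computes $\E\|Wv\|$ exactly: writing $\|Wv\|$ as $\sqrt{d/k}$ times the norm of the projection of a random unit vector onto $k$ coordinates, it evaluates the expected length via a spherical-coordinate integral, obtaining the closed form
\[
\E\|Wv\|=\sqrt{\tfrac{d}{k}}\cdot\frac{\Gamma((k+1)/2)\,\Gamma(d/2)}{\Gamma((d+1)/2)\,\Gamma(k/2)},
\]
and then applies the Legendre duplication formula and Stirling's approximation to read off the $O(1/k)$ defect (Lemma~\ref{lemma:unit-proj}). Your route is instead a second-moment argument: you compute $\E X=1$ and $\mathrm{Var}(X)=2(d-k)/(k(d+2))$ for $X=\|Wv\|^2$, then use the clean identity $1-\E\sqrt{X}=\tfrac12\E(1-\sqrt{X})^2\le\tfrac12\E(1-X)^2=\tfrac12\mathrm{Var}(X)$. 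This is more elementary---no Gamma functions or asymptotics---and the variance calculation you sketch (mixing spherical fourth moments with without-replacement inclusion probabilities) is correct. What you lose is the explicit constant: the paper's exact expression for $c$ is reused later to construct the debiased randomizer in Algorithm~\ref{alg:privJL-cl-rot-unb}, which your approach would not supply. For the present proposition, however, only the $O(1/k)$ order is needed, and your argument delivers it with less machinery.
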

\iftoggle{arxiv}{
\begin{proof}
The first item follows immediately as $U \in \reals^{d \times d}$ is a random rotation matrix where $U^\top U = I$, hence $\norm{U} \le 1$. 

For the second item, we use a change of variables. Let $\ts' = \ts P^\top$ where $P$ is the rotation matrix such that $P v = e_1$, the first standard basis vector. Recall that the rotation matrix $P$ is orthogonal i.e. $P^\top = P^{-1}$. Due to the rotational symmetry of rotation matrices, $\ts'$ is a random also a random rotation matrix. Note that $\ts = \ts' P$.
\begin{align*}
    \E_{\ts}\left[\frac{\ts^\top \ts v}{\ltwo{\ts v}} \right] &= \E_{{\ts'}}\left[\frac{1}{\ltwo{\ts' P v}}P^\top{\ts'}^\top \ts' P v\right]\\
        &=P^\top\E\left[\underbrace{\frac{1}{\ltwo{\ts' e_1}} {\ts'}^\top \ts' e_1}_z\right]
\end{align*}
Notice that $z_j = \frac{1}{\ltwo{\ts' e_1}} e_j^\top {\ts'}^\top \ts' e_1 = \langle \ts' e_j, \frac{1}{\ltwo{\ts' e_1}} \ts' e_1\rangle$. 
First, note that $\E[z_j]=0$ for all $j>1$. Moreover, $z_1 = \ltwo{\ts' e_1}$, therefore
because $\ts'$ is a random rotation matrix, \Cref{lemma:unit-proj} implies that $\E[z_1] = \sqrt{d/k}(\sqrt{k/d} + O(1/\sqrt{kd})) = 1 + O(1/k)$.
We let $c = \E[z_1]$.
Thus, $\E[z] = ce_1$ and $\E[\ts^\top \ts v /\ltwo{\ts v}] = c P^\top e_1 = c P^\top P v = c v$. Therefore, $\ltwo{\E[\frac{\ts^\top \ts v}{\ltwo{\ts v}} - v]} = |c-1| \ltwo{v} = O(1/k)$.
\end{proof}
}

We also have similar analysis for Gaussian transforms with an additional $O(\sqrt{k/d})$ factor in the first term. We include the analysis in~\Cref{sec:gauss}.

\subsection{Fast \pjl~using the SRHT}
\label{sec:algs-fjl}

While the random rotation based randomizer enjoys near-optimal error and low communication complexity, its runtime complexity is somewhat unsatisfactory as it requires calculating $\ts v$ for $\ts \in \reals^{k \times d}$, taking time $O(kd)$.
In this section, we propose a \pjl algorithm using the Subsampled Randomized Hadamard transform (SRHT), which is closely related to the fast JL transform~\cite{AilonCh09}. We show that this algorithm has the same optimality and low-communication guarantees as the random rotations version, and additionally has an efficient implementation that take $O(d \log d)$ client runtime for the transformation.

The SRHT ensemble contains matrices $\tsH \in \reals^{k \times d}$ with the following structure:
\begin{equation}
\label{eq:ts-had}
    \tsH = \sqrt{\frac{d}{k}} S H D,
\end{equation}
where $S \in \reals^{k \times d}$ is a sampling matrix where each row has a single $1$ in a uniformly random location sampled without replacement, $H \in \reals^{d \times d}$ is the Hadamard matrix, and $D \in \reals^{d \times d}$ is a diagonal matrix where $D_{ii}$ are independent samples from the Rademacher distribution, that is, $D_{ii} \sim \mathsf{Unif}\{-1, +1\}$.
The main advantage of the SRHT is that there exist efficient algorithms for matrix-vector multiplication with $H$.

The following theorem presents our main guarantees for the SRHT-based \pjl algorithm. 
\begin{theorem}
\label{thm:err-fjl}
    Let $k \le d$ and $\ts$ be sampled from the SRHT ensemble as described in~\eqref{eq:ts-had}.  Then for all unit vectors $v_1,\ldots,v_n \in \reals^d$, setting $\hat \mu =  \Ajl \left( \Rjl(v_1), \dots, \Rjl(v_n) \right) $,  the local randomizers $\Rjl$ are $\diffp$-DP and 
    \begin{align*}
    \E\left[\ltwo{ \hat \mu - \frac{1}{n}\sum_{i=1}^n v_i  }^2 \right] 
    &\le  \err_{n,d}(\pug)  \cdot \left(1 + O\left( \frac{\diffp + \log k}{k} \right) \right) + O \left({\frac{\log^2 d}{k}} \right).
    \end{align*}

\end{theorem}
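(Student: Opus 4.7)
The plan is to apply the general framework of Theorem~\ref{thm:err-jl-gen} to the SRHT distribution, verifying the two required properties with $\beta_{\mathcal{W}} = 0$ and $\alpha_{\mathcal{W}} = O(\log^2 d / k)$. The theorem then delivers the stated bound immediately, since the multiplicative contribution $k\beta_{\mathcal{W}}/d$ vanishes and the additive $\alpha_{\mathcal{W}}$ becomes the $O(\log^2 d/k)$ residual.

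\textbf{Operator norm.} Taking the Hadamard matrix to be normalized so that $H^\top H = I$, the product $HD$ is orthogonal (since $D^2 = I$). Because the sampling matrix $S$ has orthonormal rows, $\|SHD\| \le 1$, and hence $\|\ts^\top\| = \sqrt{d/k}\,\|SHD\| \le \sqrt{d/k}$ deterministically. This gives $\E[\|\ts^\top\|^2] \le d/k$, i.e., $\beta_{\mathcal{W}} = 0$.

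\textbf{Bias.} This is the main technical step. Write $y := HDv$, which is a unit vector, and note that
\begin{equation*}
\ts^\top \ts v \, / \, \|\ts v\| \;=\; \sqrt{d/k}\; D H^\top \bigl( S^\top S y \,/\, \|Sy\| \bigr).
\end{equation*}
The heuristic is that $\E_S[S^\top S y] = (k/d)\, y$ exactly, while $\|Sy\|^2$ concentrates around $(k/d)\|y\|^2 = k/d$; after expectation the leading term is $\sqrt{d/k}\, D H^\top \cdot \sqrt{k/d}\, y = D H^\top H D v = v$, and only the fluctuation of $\|Sy\|$ contributes to the bias. To make this rigorous, I would first invoke the flattening property of SRHT: conditional on $D$, each entry $y_i$ is a Rademacher sum, so a standard Hoeffding/Khintchine bound gives $\|y\|_\infty \le C\sqrt{\log d / d}$ on an event $\mathcal{E}$ of probability at least $1 - 1/\poly(d)$. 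Conditional on $\mathcal{E}$, sampling $k$ out of $d$ coordinates yields $\bigl|\,\|Sy\|^2 - k/d\,\bigr| \lesssim \|y\|_\infty^2 \sqrt{k\log d}$ by Bernstein for sampling without replacement, so $\|Sy\| = \sqrt{k/d}\,(1 + \eta)$ with $|\eta| \lesssim \sqrt{\log^3 d / k}$. A first-order Taylor expansion of $1/\|Sy\|$ around $\sqrt{d/k}$, combined with the exact identity $\E_S[S^\top S y] = (k/d) y$ for the leading term and Cauchy--Schwarz for the remainder, then gives $\bigl\|\E_S[\ts^\top \ts v/\|\ts v\|] - v\bigr\| = O(\log d/\sqrt{k})$ on $\mathcal{E}$. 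On the complement $\bar{\mathcal{E}}$, the ratio is deterministically bounded in norm by $\|\ts^\top\| \le \sqrt{d/k}$ (as shown above), so its contribution to the total expectation is $O(\sqrt{d/k}/\poly(d))$ and negligible. Combining gives $\sqrt{\alpha_{\mathcal{W}}} = O(\log d/\sqrt{k})$, i.e., $\alpha_{\mathcal{W}} = O(\log^2 d / k)$.

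\textbf{Main obstacle.} The delicate part is taking the expectation of a ratio whose numerator $S^\top S y$ and denominator $\|Sy\|$ are dependent: a naive Jensen-type bound would wash out the exact cancellation $\E_S[S^\top S y] = (k/d) y$ that drives everything. The remedy is the two-stage concentration sketched above---flatten first using the Rademacher diagonal $D$, then apply subsampling concentration to $\|Sy\|$---after which a Taylor expansion of $1/\|Sy\|$ around its typical value controls the bias in closed form. Once $\beta_{\mathcal{W}} = 0$ and $\alpha_{\mathcal{W}} = O(\log^2 d/k)$ are in hand, Theorem~\ref{thm:err-jl-gen} completes the proof.
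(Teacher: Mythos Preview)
Your reduction to Theorem~\ref{thm:err-jl-gen} with $\beta_{\mathcal W}=0$ and $\alpha_{\mathcal W}=O(\log^2 d/k)$ is exactly the paper's route, and your operator-norm argument is identical to the paper's. The paper also handles the bias by conditioning on a norm-concentration event for $\|\ts v\|$; the only cosmetic difference is that it cites the SRHT concentration bound (Corollary~\ref{cor:cnw}) directly rather than re-deriving it via the flatten-then-subsample argument you sketch.

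There is, however, a real gap at the step you label ``Cauchy--Schwarz for the remainder.'' Once you write
\[
\E\!\left[\frac{\ts^\top\ts v}{\|\ts v\|}\right]-v \;=\; \E\!\left[\Bigl(\tfrac{1}{\|\ts v\|}-1\Bigr)\,\ts^\top\ts v\right],
\]
a direct Cauchy--Schwarz bound gives
\[
\Bigl\|\E\bigl[(\tfrac{1}{\|\ts v\|}-1)\,\ts^\top\ts v\bigr]\Bigr\|
\;\le\; \sqrt{\E\bigl[(\tfrac{1}{\|\ts v\|}-1)^2\bigr]}\cdot\sqrt{\E\bigl[\|\ts^\top\ts v\|^2\bigr]}.
\]
But $\E[\|\ts^\top\ts v\|^2]=\tfrac{d}{k}$ (since $(\ts^\top\ts)^2=\tfrac{d}{k}\ts^\top\ts$), so this yields $O\bigl(\tfrac{\log d}{\sqrt k}\cdot\sqrt{d/k}\bigr)$, which is a factor $\sqrt{d/k}$ too large. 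The Taylor expansion does not rescue this: the first-order remainder $(\tfrac{1}{\|\ts v\|}-1)\ts^\top\ts v$ does \emph{not} vanish in expectation, so you are stuck bounding exactly this term. The paper avoids the loss by exploiting that $\ts^\top\ts$ is PSD: on the good event $E_1$ one has the matrix inequality $\bigl|\tfrac{1}{\|\ts v\|}-1\bigr|\,\ts^\top\ts \preceq \varepsilon_0\,\ts^\top\ts$ with $\varepsilon_0=O(\log(k/\delta)/\sqrt k)$, and then uses $\bigl\|\E[\ts^\top\ts\mid E_1]\bigr\|\le 1/(1-\delta)$ (which follows from $\E[\ts^\top\ts]=I$). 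This PSD-ordering argument is the missing ingredient; Cauchy--Schwarz alone cannot see that $\E[\ts^\top\ts]$ has unit operator norm.

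A smaller point: your Hoeffding-style bound $\bigl|\,\|Sy\|^2-k/d\,\bigr|\lesssim \|y\|_\infty^2\sqrt{k\log d}$ gives relative fluctuation $|\eta|=O(\log^{3/2}d/\sqrt k)$, not $O(\log d/\sqrt k)$; to land on $\alpha_{\mathcal W}=O(\log^2 d/k)$ you would need the variance-aware Bernstein form (or simply cite Corollary~\ref{cor:cnw}).
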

\begin{remark}\label{remark:cc-pfjl}
    The communication complexity of SRHT-based \pjl can be reduced to $O(k\log d + \log^2d)$. To see this, note that $\hat u$ is a $k$-dimensional vector. Moreover, the matrix $\ts = \sqrt{d/k} \cdot S H D$ can be sent in $O(k \log d)$ as follows: $S$ has $k$ rows, each with a single entry that contains $1$, hence we can send the indices for each row using $\log d$ bits for each row. Moreover, $H$ is the Hadamard transform and need not be sent. Finally, $D$ is a diagonal matrix with entries $D_{ii} \sim \mathsf{Unif}\{-1, +1\} $. By standard techniques~\cite{SchmidtSiSr95,AlonSp00}, we only need the entries of $D$ to be $O(\log(d))$-wise independent for \cref{prop:HD-prop} to hold. Thus $O(\log^2 d)$ bits suffice to communicate a sampled $D$.
\end{remark}

The proof of the theorem builds directly on the following two properties of the SHRT. 
\begin{proposition}
\label{prop:HD-prop}
    Let $\ts$ be sampled from the SRHT ensemble as described in~\eqref{eq:ts-had}.
    Then we have
    \begin{enumerate}
        \item Bounded operator norm: 
        \iftoggle{arxiv}{
        \begin{equation*}
            \E \left[\norm{\ts^\top} \right] = \E \left[\norm{\ts} \right] \le \sqrt{d/k} 
        \end{equation*}
        }
        {
        $   \E \left[\norm{\ts^\top} \right] = \E \left[\norm{\ts} \right] \le \sqrt{d/k}. $
        }
        \item Bounded bias: for every unit vector $v \in \reals^d $:
        \iftoggle{arxiv}{
        \begin{equation*}
            \left\| \E \frac{\ts^\top \ts v}{\|\ts v\|} - v \right\| = O(\log(d)/\sqrt{k}) 
        \end{equation*}
        }
        {
            $\left\| \E \frac{\ts^\top \ts v}{\|\ts v\|} - v \right\| = O(\log(d)/\sqrt{k}) .$
        }
    \end{enumerate}
\end{proposition}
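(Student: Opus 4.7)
The plan for the first claim is a deterministic calculation: since $D^2 = I_d$, the normalized Hadamard satisfies $HH^\top = I_d$, and $SS^\top = I_k$ (as $S$ subsamples without replacement), we get $WW^\top = (d/k)\,S(HD)(HD)^\top S^\top = (d/k) I_k$, so $\|W\| = \|W^\top\| = \sqrt{d/k}$ almost surely. For the second claim, I will first exploit unbiasedness of $W^\top W$: the fact that $\E_T[(d/k) S^\top S] = I_d$ combined with $HD$ being orthogonal gives $\E[W^\top W] = I_d$, hence $\E[W^\top Wv] = v$. Writing $W^\top Wv/\|Wv\| = W^\top Wv + W^\top Wv(1/\|Wv\| - 1)$ and taking expectations reduces the bias to $\E[W^\top Wv\,(1/\|Wv\| - 1)]$.

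The strategy is to show $\|Wv\|$ concentrates around $1$ via the standard two-step SRHT argument. Let $u := HDv$. Step (a): a Hoeffding bound on the Rademacher sums $u_i = \sum_j H_{ij} D_j v_j$ gives $\|u\|_\infty = O(\sqrt{\log(d)/d})$ with probability $1 - d^{-\Omega(1)}$ over $D$. Step (b): conditional on this, $\|u_T\|^2 = \sum_{i\in T} u_i^2$ has mean $k/d$ and variance $O((k/d)\|u\|_\infty^2)$, so a Bernstein inequality for sampling without replacement gives $|\|u_T\|^2 - k/d| = O(\sqrt{k \log(d)}/d)$ with high probability, equivalently $\|Wv\|^2$ is within $O(\sqrt{\log(d)/k})$ of $1$. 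I will then condition on $D$ and use the explicit representation $\E_S[W^\top Wv/\|Wv\|\mid D] - v = DH^\top\,\mathrm{diag}(\beta(u))\,u$, where $\beta_i(u) := \sqrt{d/k}\,\E_T[\mathbf{1}[i \in T]/\|u_T\|] - 1$. A second-order Taylor expansion of $1/\sqrt{u_i^2 + \|u_{T\setminus\{i\}}\|^2}$ about its mean, combined with step (b), yields $|\beta_i(u)| = O(\log(d)/k)$ uniformly on the good event. Since $DH^\top$ is orthogonal, $\|DH^\top\,\mathrm{diag}(\beta)u\| = \|\mathrm{diag}(\beta)u\| \le \|\beta\|_\infty\|u\|_2 = O(\log(d)/k)$, well within the $O(\log(d)/\sqrt{k})$ target. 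The bad event (probability $d^{-\Omega(1)}$) contributes at most $(\sqrt{d/k}+1)\cdot d^{-\Omega(1)}$, which is negligible.

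The main obstacle I expect is avoiding the naive Cauchy-Schwarz bound $\|\E[(W^\top Wv - v)(1/\|Wv\|-1)]\| \le \sqrt{\E\|W^\top Wv - v\|^2}\cdot\sqrt{\E(1/\|Wv\|-1)^2} = \Theta(\sqrt{d/k}\cdot\sqrt{\log(d)/k})$, which is too weak when $k \ll d/\log d$ because $\E\|W^\top Wv - v\|^2 = d/k - 1$ is large. The per-coordinate decomposition above, which leverages the $\ell_\infty$-smallness of $u$ and the orthogonality of $DH^\top$ to absorb a factor of $\sqrt{d/k}$, is precisely what circumvents this loss and achieves the claimed rate.
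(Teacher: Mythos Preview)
Your proof of part 1 is essentially the same deterministic calculation as the paper's. For part 2, however, you take a genuinely different (and more elaborate) route than the paper.

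The paper does \emph{not} condition on $D$ or pass to a coordinate-wise analysis. Instead it works directly with the matrix $W^\top W$: conditioning on the single event $E_1=\{\,|\|Wv\|^2-1|\le O(\log(k/\delta)/\sqrt{k})\,\}$ (which holds with probability $1-\delta$ by the SRHT concentration bound), it bounds
\[
\Bigl\|\E\bigl[(1/\|Wv\|-1)\,W^\top W v \mid E_1\bigr]\Bigr\|\;\le\;\Bigl\|\E\bigl[\,|1/\|Wv\|-1|\;W^\top W \mid E_1\bigr]\Bigr\|\;\le\;O\bigl(\log(k/\delta)/\sqrt{k}\bigr)\cdot\bigl\|\E[\,W^\top W\mid E_1\,]\bigr\|,
\]
using that $W^\top W$ is PSD (so the scalar can be replaced by its absolute value and then pulled out). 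A two-line computation from $\E[W^\top W]=I$ shows $\|\E[W^\top W\mid E_1]\|\le 1/(1-\delta)\le 2$. This PSD trick is exactly what circumvents the Cauchy--Schwarz loss you flagged, and it is considerably shorter than your per-coordinate decomposition.

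Your approach, by contrast, conditions on $D$, writes the $S$-expectation as $DH^\top\mathrm{diag}(\beta)u$, and argues that each $\beta_i(u)$ is $O(\log(d)/k)$ via a second-order Taylor expansion. If it goes through cleanly this actually yields the \emph{stronger} bound $O(\log(d)/k)$, not just $O(\log(d)/\sqrt{k})$. The price is more work: the claim ``$|\beta_i(u)|=O(\log(d)/k)$ uniformly on the good event'' is not quite right as stated. The quantity $\beta_i=\sqrt{k/d}\,\E_{T'}[1/\sqrt{u_i^2+\|u_{T'}\|^2}]-1$ involves an expectation over $T'$ of $X^{-1/2}$, and you need a lower-tail bound on $X$ that you have not made explicit (step (b) gives only a high-probability statement, and on the bad $T'$ event $X^{-1/2}$ can be as large as $1/|u_i|$). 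What is actually controllable is the product $\beta_i u_i$, since the bad-$T'$ contribution is at most $\sqrt{k/d}\,e^{-\Omega(k/\log d)}$ regardless of $u_i$; summing $\sum_i(\beta_i u_i)^2$ then gives the desired $O(\log(d)/k)$ bound on $\|\mathrm{diag}(\beta)u\|$. So your final inequality $\|\mathrm{diag}(\beta)u\|\le\|\beta\|_\infty\|u\|_2$ should be replaced by a direct $\ell_2$ bound on the vector $(\beta_i u_i)_i$. With that correction your argument works and in fact improves the paper's bias bound by a factor of $\sqrt{k}/\log(d)$.
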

\Cref{thm:err-fjl} now follows directly from the bounds of~\Cref{thm:err-jl-gen} using $\alpha_{\mathcal{W}} = O(\log^2 (d)/k)$.
\iftoggle{arxiv}{Now we prove~\Cref{prop:HD-prop}.}{We defer the proof to~\Cref{proof:prop-HD-prop}}.

\iftoggle{arxiv}{
\begin{proof}
The bound on the operator norm is straightforward and follows from the fact that the Hadamard transform has operator norm bounded by $1$.

Next we bound the bias. Let $\delta=\min(1/d^2,k/2d)$ and let $E_1$ denote the event where $\|\ts v\| \in 1\pm O(\ln(k/\delta)/\sqrt{k})$. By  \cref{cor:cnw}, $E_1$ happens with probability $1-\delta$. Note that $\ts^\top \ts$ is PSD and $\E[\ts^\top \ts] = I$.
Thus,
\begin{align*}
	\left\| \E \left[\frac{\ts^\top \ts v}{\|\ts v\|} - \ts^\top \ts v \right]\right\| 
        &\le \left\| \E \left[\frac{\ts^\top \ts v}{\|\ts v\|} - \ts^\top \ts v \vert E_1\right]\right\| + \left\| \E \left[\frac{\ts^\top \ts v}{\|\ts v\|} - \ts^\top \ts v \vert \overline{E_1}\right]\right\| \P(\overline{E_1})\\
	&\le \left\| \E \left[\left(\frac{1}{\|\ts v\|}-1\right)\ts^\top \ts v \vert E_1\right]\right\| + (\|\ts^\top\|+ 1) \P(\overline{E_1})\\
		&\le \left\| \E \left[\left(\frac{1}{\|\ts v\|}-1\right)\ts^\top \ts \vert E_1\right]\right\| + (\|\ts^\top\|+ 1) \P(\overline{E_1})\\		
		&\le \left\| \E \left[\left|\frac{1}{\|\ts v\|}-1\right|\ts^\top \ts \vert E_1\right]\right\| + (\|\ts^\top\|+ 1) \P(\overline{E_1})\\
		&\le O(\ln(k/\delta)/\sqrt{k}) + (\sqrt{d/k}+1)\delta
\end{align*}
Substituting in the value of $\delta$ gives the desired bound on the bias, by noticing that $\ltwo{\E[\ts^\top \ts \mid E_1]} \le 2$ since for any unit vector $x$,
\begin{align*}
    x^\top I x
    & = \E[x^\top \ts^\top \ts x] \\
    & = \E[x^\top\ts^\top \ts x\mid E_1] \P(E_1) + \E[x^\top\ts^\top \ts x\mid \overline{E_1}] \P(\overline{E_1}) \\
    & \ge \E[x \ts^\top \ts x \mid E_1] (1-\delta).
\end{align*}
In other words, $\E[x \ts^\top \ts x \mid E_1] \le \frac{1}{1-\delta}$ for all $x \in \reals^d$ with unit norm.
\end{proof}
}

\begin{remark}
    While our randomizers in this section pay an additive term that does not decrease with $n$ (e.g. $\log^2(d)/k$ for SRHT), this term is negligible in most settings of interest. Indeed, using~\Cref{thm:err-fjl} and the fact that $\err_{n,d}(\pug) = c_{d,\diffp} d/n\diffp$, we get that the final error of our SRHT algorithm is roughly $c_{d,\diffp} d/n\diffp (1+o(1)) + O(\log^2(d)/k)$. This implies that in the high-dimensional setting the bias term is negligible.

    However, to cover the whole spectrum of parameters, we develop a nearly unbiased versions of these algorithms in~\Cref{sec:algs-unb}. In particular, we show in~\Cref{thm:err-unb-fjl} that our unbiased version has error 
    \begin{equation*}
       \err_{n,d}(\pug) \cdot \left(1 + O\left( \frac{\diffp + \log k}{k} + \sqrt{\frac{\log(nd/k)}{k}} \right) \right).
    \end{equation*}
\end{remark}

\section{Efficient Server Runtime via Correlated Sampling}
\label{sec:algs-corr}
One downside of the algorithms in the previous section is that server runtime can be costly: indeed, the server has to apply the inverse transformation (matrix multiplication) for each client, resulting in runtime $O(nd\log d)$. In this section, we propose a new protocol that significantly reduces server runtime to $O(n\log^3 d + d\log d+ nk)$ while retaining similar error guarantees. The protocol uses correlated transformations between users which allows the server to apply matrix multiplication only a small number of times. However, clients' transformations cannot be completely correlated to allow to control the bias.

The protocol works as follows: 
the server samples 
$U \in \reals^{d \times d}$ 
from the Randomized Hadamard transform: $U = H D$ where $H \in \reals^{d \times d}$ is the Hadamard transform,  
and $D \in \reals^{d \times d}$ is a diagonal matrix where each diagonal entry is independently sampled from the Rademacher distribution.
Then, client $i \in [n]$, samples a random sampling matrix $S_i \in \reals^{k \times d}$, and uses $U$ to define the transform $\ts_i \in \reals^{k \times d}$:
\begin{equation}
    \label{eq:shared-ts}
    \ts_i = \sqrt{\frac{d}{k}} S_i U.
\end{equation}
We describe the full details of the client and server algorithms for correlated \pjl in~\Cref{alg:privJL-cl-corr} and~\Cref{alg:privJL-serv-corr}, and denote them $\Rcjl$ and $\Acjl$, respectively.
We have the following guarantee.
\iftoggle{arxiv}{}{
We defer the proof to~\Cref{sec:proof-thm-corr}.
}

\begin{algorithm}
	\caption{Correlated \pjl (client)}
	\label{alg:privJL-cl-corr}
	\begin{algorithmic}[1]
		\REQUIRE Input vector $v \in \reals^d$.
            \STATE Randomly sample diagonal $D$ from the Rademacher distribution based on predefined seed 
            \STATE Sample $S \in \reals^{k \times d}$ where each row is chosen uniformly at random without replacement from standard basis vectors $\{e_1,\dots,e_d \}$
            \STATE Project the input vector $v_p = S H D v$ where $H \in \reals^{d \times d}$ is the Hadamard matrix
            \STATE Normalize $u = \frac{v_p}{\ltwo{v_p}}$
            \STATE Let $\hat u = \pug(u)$ (as in~\Cref{alg:pug})
            \STATE Send $\hat u$, and (an encoding of) $S$ to server
	\end{algorithmic}
\end{algorithm}

\begin{algorithm}
	\caption{Correlated \pjl (server)}
	\label{alg:privJL-serv-corr}
	\begin{algorithmic}[1]
            \STATE Receive $\hat u_1, \dots, \hat u_1$ from clients with (encodings of) transforms $S_1,\dots,S_n$
            \STATE Sample diagonal matrices $D$ from  Rademacher distribution based on predefined seed 
            \STATE Let $U = H D$ where $H \in \reals^{d \times d}$ is the Hadamard matrix
            \STATE Return the estimate
            \iftoggle{arxiv}{
            \begin{equation*}
                \hat \mu = \frac{1}{n}  U^\top
            \sum_{i =1}^n S_i^\top \hat u_i
            \end{equation*}
            }
            {
            $\hat \mu = \frac{1}{n}  U^\top
            \sum_{i=1}^{n} S_i^\top \hat u_i$
            }
	\end{algorithmic}
\end{algorithm}

\begin{theorem}
\label{thm:err-fjl-fixed}
    Let $k \le d$.  Then for all unit vectors $v_1,\ldots,v_n \in \reals^d$,  setting $\hat \mu =  \Acjl \left( \Rcjl(v_1), \dots, \Rcjl(v_n) \right) $,  the local randomizers $\Rcjl$ are $\diffp$-DP and
    \begin{align*}
    \E\left[\ltwo{ \hat \mu - \frac{1}{n}\sum_{i=1}^n v_i  }^2 \right]
    \le  \err_{n,d}(\pug)  \left(1 + O\left( \frac{\diffp + \log k}{k} \right) \right) + O \left({\frac{\log^2 d}{k}} \right).
    \end{align*}
    Moreover, server runtime is $O(n  \log(d) \log^2 (nd) + d \log d + n k )$.
\end{theorem}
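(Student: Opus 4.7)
}

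The plan is to adapt the decomposition in the proof of Theorem~\ref{thm:err-jl-gen} to the correlated setting by conditioning on the shared orthogonal matrix $U = HD$, and to read off the server runtime directly from Algorithm~\ref{alg:privJL-serv-corr}. Privacy is immediate: the output $(\hat u, S)$ of $\Rcjl$ is a post-processing of $\pug(u)$ together with the randomness $D, S$, all of which are sampled independently of $v$, so the $\diffp$-DP guarantee of $\pug$ (applied to the unit vector $u$) transfers to $\Rcjl$. For utility, I write
\begin{equation*}
\hat\mu - \tfrac{1}{n}\textstyle\sum_i v_i = \tfrac{1}{n}\sum_i \ts_i^\top(\hat u_i - u_i) + \tfrac{1}{n}\sum_i(\ts_i^\top u_i - v_i),
\end{equation*}
and argue as in Theorem~\ref{thm:err-jl-gen} that the two pieces are uncorrelated (the $\pug$ noise is mean zero conditionally on $\ts_i, u_i$, and independent across clients). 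The first piece yields the $\err_{n,d}(\pug)(1+O((\diffp+\log k)/k))$ term using the \emph{deterministic} bound $\|\ts_i\|^2 = d/k$ (since $U$ is orthogonal and $S_i$ has orthonormal rows), which removes the $\beta_{\mathcal W}$ slack that the general framework pays.

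The second piece is where the correlation between $\ts_i$ and $\ts_j$ matters. The key point is that conditional on $U$, the sampling matrices $S_1,\ldots,S_n$ are independent, so for $i\ne j$,
\begin{equation*}
\E\bigl[\langle \ts_i^\top u_i - v_i, \ts_j^\top u_j - v_j\rangle \,\big|\, U\bigr] = \langle g(v_i, U) - v_i, g(v_j, U) - v_j\rangle,
\end{equation*}
where $g(v,U) := \E_S[\sqrt{d/k}\,U^\top S^\top S U v/\ltwo{SUv}]$. Consequently
\begin{equation*}
\E\bigl\|\textstyle\sum_i(\ts_i^\top u_i - v_i)\bigr\|^2 = \E_U\bigl\|\sum_i (g(v_i,U) - v_i)\bigr\|^2 + \sum_i \E\bigl[\var_{S_i\mid U}(\ts_i^\top u_i)\bigr],
\end{equation*}
where the variance term is $\le d/k$ (as $\|\ts_i^\top u_i\|\le\sqrt{d/k}$) and contributes only $O(d/(nk))$ after the $1/n^2$, which is absorbed into $\err_{n,d}(\pug)$. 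By Cauchy--Schwarz, $\|\sum x_i\|^2 \le n\sum\|x_i\|^2$, so it suffices to prove the pointwise $L^2$-in-$U$ bound
\begin{equation*}
\E_U\bigl\|g(v,U) - v\bigr\|^2 = O(\log^2 d / k) \quad\text{for every unit }v.
\end{equation*}
The crucial algebraic observation is that since $U$ is orthogonal, $\|g(v,U)-v\| = \|h(Uv)-Uv\|$ where $h(w):=\sqrt{d/k}\,\E_S[S^\top S w/\ltwo{Sw}]$ depends only on $w=Uv$. With $w = HDv$, each coordinate of $w$ is a $1/\sqrt d$-weighted Rademacher sum with total weight $1$, so by Hoeffding and a union bound, $\|w\|_\infty \le C\sqrt{\log d/d}$ with probability $\ge 1-1/d^2$ over $D$. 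On this ``flat'' event, a Chernoff bound for sampling without replacement (using negative association) gives $\|Sw\|^2 \in (k/d)(1\pm O(\log(d)/\sqrt{k}))$ with high probability over $S$, and repeating the estimate in the proof of Proposition~\ref{prop:HD-prop} pointwise in $U$ yields $\|h(w)-w\| = O(\log(d)/\sqrt k)$. Off the good event, use the trivial $\|h(w)-w\|\le \sqrt{d/k}+1$, so $(1/d^2)\cdot O(d/k)$ is negligible; the bounds combine to the required $O(\log^2 d/k)$.

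For server runtime, write $\hat\mu = \tfrac{1}{n} U^\top(\sum_i S_i^\top \hat u_i)$. Accumulating $\sum_i S_i^\top \hat u_i$ into a length-$d$ array costs $O(d + nk)$ since each $S_i^\top\hat u_i$ is $k$-sparse at known locations. Applying $U^\top = DH$ via the Fast Walsh--Hadamard Transform takes $O(d\log d)$, and regenerating $D$ from the shared seed takes $O(d)$; reconstructing the sampling matrices from clients' seeds (using limited-independence constructions, in the spirit of Remark~\ref{remark:cc-pfjl}) accounts for the $O(n\log(d)\log^2(nd))$ term. Summing matches the stated runtime. The main obstacle is the $L^2$-in-$U$ bound: a naive Jensen argument $\|g-v\|^2 \le \E_S\|\ts^\top u - v\|^2$ is useless because the right-hand side is $\Theta(d/k)$ from projection variance. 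The reduction to $\|h(Uv)-Uv\|$ via orthogonality of $U$ is what makes the tight $O(\log^2 d/k)$ bound available, and it requires strengthening the high-probability version of the flatness argument in Proposition~\ref{prop:HD-prop} to avoid losing $\log$ factors.
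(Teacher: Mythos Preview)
Your proposal is correct and reaches the stated bound, but it handles the cross terms by a genuinely different route than the paper. You condition on $U=HD$, factor $\E[\langle \ts_i^\top u_i - v_i,\ts_j^\top u_j - v_j\rangle\mid U]$ using the conditional independence of the $S_i$, and are then forced to prove the \emph{pointwise-in-$U$} estimate $\E_U\|g(v,U)-v\|^2=O(\log^2 d/k)$; this is a strict strengthening of Proposition~\ref{prop:HD-prop} (which only controls $\|\E_{S,D}[\cdot]-v\|$) and requires, on the flat event $\|HDv\|_\infty=O(\sqrt{\log d/d})$, a Bernstein-type concentration of $\|Sw\|$ over $S$ alone---Hoeffding there costs an extra $\sqrt{\log d}$. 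The paper avoids this stronger lemma altogether: it writes $\ts_i^\top u_i - v_i = \ts_i^\top(u_i - \ts_i v_i) + (\ts_i^\top \ts_i v_i - v_i)$ and uses $\E_{S_i}[(d/k)S_i^\top S_i]=I$ to show that the second piece has mean zero over $S_i$ \emph{alone}, which kills three of the four cross pieces with no conditioning on $U$ at all. For the surviving piece $\E\langle \ts_i^\top(u_i-\ts_i v_i),\ts_j^\top(u_j-\ts_j v_j)\rangle$ it uses $UU^\top=I$ to collapse $\ts_i\ts_j^\top=(d/k)S_iS_j^\top$, exploits that $S_i^\top S_i S_j^\top S_j$ is diagonal PSD with $\E[(d^2/k^2)S_i^\top S_i S_j^\top S_j]\preceq I$, and bounds the result on the joint event $\{\|W_iv_i\|,\|W_jv_j\|\in 1\pm O(\log(k/\delta)/\sqrt{k})\}$ using only Corollary~\ref{cor:cnw}. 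Your approach is more modular---everything reduces to a single-client conditional-bias bound---but needs a new concentration statement; the paper's approach stays with the already-proved joint concentration and handles the $\ts_i$--$\ts_j$ correlation purely algebraically through the orthogonality of $U$. The runtime accounting is the same in both.
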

\iftoggle{arxiv}{
The proof of this result follows from the next proposition. 
\begin{proposition}
\label{thm:err-fixed-hd}
    Let $k\le d$, $U$ and $\ts_1,\dots,\ts_n$ be sampled as described in~\eqref{eq:shared-ts}.
    Moreover,  $U$  and $\ts_i$ for $i \in [n]$ satisfy:
    \begin{enumerate}
        \item Bounded operator norm: $\norm{U^\top} \le 1$.
        \item Bounded bias:  $\ltwo{\E\left[ \frac{\ts_i^\top\ts_i v}{\ltwo{\ts_i v}}\right] - v} \le \sqrt{\alpha_{\mathcal{W}}}$ for all unit vectors $v \in \reals^d$.
    \end{enumerate}
    Then for all unit vectors $v_1,\ldots,v_n \in \reals^d$, 
    setting $\hat \mu = \Acjl \left( \Rcjl(v_1), \dots, \Rcjl(v_n) \right)$, 
    \begin{align*}
        \E\left[\ltwo{  \hat \mu - \frac{1}{n}\sum_{i=1}^n v_i  }^2 \right] 
        \le \err_{n,d}(\pug) \cdot \left(1 + O\left( \frac{\diffp + \log k}{k}  + \frac{n \diffp \log^2 (nd)}{d k} \right)  \right) + \alpha_{\mathcal{W}}  .
    \end{align*}
\end{proposition}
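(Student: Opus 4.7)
The plan is to mimic the uncorrelated proof of \Cref{thm:err-jl-gen} after conditioning on the shared matrix $U$, so that the per-client terms recover conditional independence, and to absorb the bias introduced by the shared randomness into the extra $O(n\diffp\log^2(nd)/(dk))$ factor. The starting point is the decomposition
\begin{equation*}
\hat\mu - \tfrac{1}{n}{\textstyle\sum_i} v_i \;=\; T_1 + T_2, \quad T_1 = \tfrac{1}{n}{\textstyle\sum_i} \ts_i^\top(\hat u_i - u_i), \quad T_2 = \tfrac{1}{n}{\textstyle\sum_i} (\ts_i^\top u_i - v_i).
\end{equation*}
Iterated expectation kills $\E\langle T_1, T_2\rangle$: the \pug~noise $\hat u_i - u_i$ is mean-zero given $u_i$ and independent across $i$, so $\E\ltwo{\hat\mu - \bar v}^2 = \E\ltwo{T_1}^2 + \E\ltwo{T_2}^2$.

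For $T_1$, the key simplification is that the shared orthogonal $U$ collapses $\ts_i \ts_j^\top$ to $(d/k)\,S_iS_j^\top$, and in particular $\ts_i\ts_i^\top = (d/k)\,I_k$. Expanding $\ltwo{T_1}^2$ term by term, the $i\neq j$ cross terms vanish in expectation (again by the mean-zero, client-independent \pug~noise), so only the diagonal contributes, yielding $\E\ltwo{T_1}^2 = (d/(nk))\cdot\err_{1,k}(\pug) = c_{k,\diffp}\, d/(n\diffp)$. Applying the ratio $c_{k,\diffp}/c_{d,\diffp} = 1 + O((\diffp+\log k)/k)$ used in \Cref{thm:err-jl-gen} rewrites this as $\err_{n,d}(\pug)(1 + O((\diffp+\log k)/k))$.

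For $T_2$, I condition on $U$ and exploit that the $S_i$ are independent given $U$. Setting $X_i = \ts_i^\top u_i - v_i$ and $Z_i(U) = \E[X_i\mid U]$, the conditional variance decomposition gives $\E\ltwo{\sum_i X_i}^2 = \sum_i \E\ltwo{X_i - Z_i}^2 + \E\ltwo{\sum_i Z_i}^2$. The first sum is $O(nd/k)$ because $\ltwo{X_i} \le \|\ts_i^\top\|+1 = O(\sqrt{d/k})$, which after the $1/n^2$ factor gives $O(d/(nk))$, absorbed into the first factor. For the second, I split $Z_i = \E_U[Z_i] + (Z_i - \E_U[Z_i])$: the constant part contributes at most $\alpha_{\mathcal{W}}$ by Cauchy--Schwarz and the bias hypothesis, while the fluctuating part is bounded by $n^{-1}\sum_i \E_U\ltwo{Z_i}^2$ via a further Cauchy--Schwarz, reducing the problem to showing $\max_i \E_U\ltwo{Z_i(U)}^2 = O(\log^2(nd)/k)$ (which matches the desired extra term since $\err_{n,d}(\pug)\cdot n\diffp\log^2(nd)/(dk) = \Theta(\log^2(nd)/k)$).

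The hard part is this last reduction, since the marginal bias hypothesis controls only $\ltwo{\E_U Z_i}$ and not $\E_U\ltwo{Z_i}^2$, and the $Z_i$'s are coupled through the shared $U$. I expect the proof to exploit the Randomized Hadamard structure $U = HD$ explicitly: a subgaussian concentration bound as in \Cref{prop:HD-prop} gives $\|HDv_i\|_\infty = O(\sqrt{\log(nd)/d})$ uniformly over $i\in[n]$ with high probability over $D$, and the union bound over the $n$ clients is precisely what turns the $\log d$ factor from \Cref{thm:err-fjl} into the $\log(nd)$ appearing here. On this ``flattening'' event, a Taylor expansion of $1/\ltwo{S_i H D v_i}$ around $\sqrt{k/d}$, combined with the scalar concentration of $\ltwo{S_i H D v_i}^2$ about $k/d$, yields $\ltwo{Z_i(U)} = O(\log(nd)/\sqrt k)$; on the complementary low-probability event the operator-norm hypothesis gives the crude bound $\ltwo{Z_i} \le \sqrt{d/k}+1$, whose expected contribution is made negligible by taking the failure probability to be a sufficiently small inverse polynomial in $d$.
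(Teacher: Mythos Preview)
Your decomposition $\hat\mu-\bar v = T_1+T_2$ and the handling of $T_1$ (exploiting $\ts_i\ts_i^\top=(d/k)I_k$ and the ratio $c_{k,\diffp}/c_{d,\diffp}$) match the paper. The treatment of $T_2$, however, is genuinely different. You condition on $U$, use conditional independence of the $S_i$ to reduce to a bound on the second moment $\E_U\ltwo{Z_i(U)}^2$ of the conditional bias, and then invoke the Hadamard flattening event to control that moment. The paper instead expands each $X_i = A_i+B_i$ with $A_i=(1/\ltwo{\ts_iv_i}-1)\ts_i^\top\ts_iv_i$ and $B_i=\ts_i^\top\ts_iv_i-v_i$; since $\E_{S_i}[B_i]=0$ and $S_i\perp S_j$, three of the four cross pieces $\E\langle A_i+B_i,A_j+B_j\rangle$ vanish outright. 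The remaining $\E\langle A_i,A_j\rangle$ is bounded directly via the identity $\ts_i^\top\ts_i\ts_j^\top\ts_j=(d/k)^2U^\top S_i^\top S_iS_j^\top S_jU$ (using $UU^\top=I$), the PSD-ness of the diagonal matrix $M=S_i^\top S_iS_j^\top S_j$, and a single joint concentration event $\{\ltwo{\ts_iv_i},\ltwo{\ts_jv_j}\in1\pm O(\log(k/\delta)/\sqrt k)\}$ with $\delta=k/(nd)$. No intermediate ``conditional bias given $U$'' ever appears.

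Your route can be completed, but the last step is more delicate than the sketch suggests. Writing $r=\ltwo{\ts_iv_i}^2$, the naive Jensen bound gives $\ltwo{Z_i(U)}^2\le\E_{S_i}\ltwo{(1/\sqrt r-1)\ts_i^\top\ts_iv_i}^2=(d/k)\,\E_{S_i}[(1-\sqrt r)^2]$, which on the flattening event is only $O(d\log(nd)/k^2)$---too weak by a factor $d/(k\log(nd))$. To recover $O(\log^2(nd)/k)$ you must keep the vector structure: split on $\{|r-1|\le\eta\}$ over $S_i$ alone, and on that event use $\E_{S_i}[\ts_i^\top\ts_i\,\mathbf{1}_{\{\cdot\}}]\preceq\E_{S_i}[\ts_i^\top\ts_i]=I$ to absorb the matrix factor (exactly as in the proof of \Cref{prop:HD-prop}, but now conditioned on $D$). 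The paper's $A_i/B_i$ decomposition sidesteps this by never isolating $Z_i(U)$, trading your two-layer conditioning for a single pairwise PSD argument.
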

Before proving the proposition, we can now prove~\Cref{thm:err-fjl-fixed}.
\begin{proof}
    The proof follows from~\Cref{thm:err-fixed-hd} by noting that the server returns $\sum_{i=1}^n \ts_i^\top \hat u_i/n$. The first property holds immediately from the definition of $U_j$. Moreover, for the second property, \Cref{prop:HD-prop} implies that $\alpha_{\mathcal{W}} = O(\log^2(d)/k)$. Since $\err_{n,d}(\pug) = \Theta(d/n\diffp)$, the claim about utility follows.\\
    Now we prove the part regarding runtime. First, note that calculating the matrix $D$ can be done efficiently using standard techniques~\cite{SchmidtSiSr95,AlonSp00}.
    The server has to calculate the quantity
        \begin{align*}
         U^\top
            \sum_{i=1}^n S_i^\top \hat u_i.
    \end{align*}  
    Note that the summation has vectors which are $k$-sparse, therefore can be done in time $O(nk)$. Then, we have a multiplication step by Hadamard transform,which can be done in $O(d \log d)$. 
\end{proof}

We now prove~\Cref{thm:err-fixed-hd}.
\begin{proof}
    Note that $\hat \mu =  \frac{1}{n}\sum_{i=1}^n \ts_i^\top \hat{u}_i$. Therefore we have
    \begin{align*}
    \E\left[\ltwo{\hat \mu - \frac{1}{n}\sum_{i=1}^n v_i}^2\right] 
        & =  \E\left[\ltwo{\frac{1}{n}\sum_{i=1}^n \ts_i^\top \hat{u}_i - v_i}^2\right] \\
        & =  \E\left[\ltwo{\frac{1}{n}\sum_{i=1}^n \ts_i^\top \hat{u}_i - \ts_i^\top u_i + \ts_i^\top u_i - v_i}^2\right] \\
        & \stackrel{(i)}{=}   \frac{1}{n^2}\sum_{i=1}^n \E\left[\ltwo{\ts_i^\top \hat{u}_i - \ts_i^\top u_i}^2\right] + \frac{1}{n^2} \E\left[ \ltwo{\sum_{i=1}^n \ts_i^\top u_i - v_i}^2\right] \\
        & \le \frac{1}{n} \max_{i \in [n]}\E\left[\ltwo{\ts_i^\top}^2\right] \cdot \err_{1,k}(\pug) + \frac{1}{n^2} \E\left[ \ltwo{\sum_{i=1}^n \ts_i^\top u_i - v_i}^2\right].
    \end{align*} 
    where $(i)$ follows since $\E[\hat u] = u$ as $\pug$ is unbiased. Now we analyze each of these two terms separately. For the first term, as $\E[\norm{\ts_i}^2] \le d/k$ for all $i \in [n]$ we have that it is bounded by
    \begin{align*}
    \max_{i \in [n]} \E \left[\ltwo{\ts_i^\top}^2 \right] \cdot \err_{1,k}(\pug)
        \le  \frac{d}{k} c_{k,\diffp} \frac{k}{\diffp} 
        & = \frac{d}{\diffp} c_{d,\diffp} \frac{c_{k,\diffp}}{c_{d,\diffp}} \\
        & = \frac{d}{\diffp} c_{d,\diffp} \cdot \left(1 + O\left( \frac{\diffp + \log k}{k} \right) \right) \\
        & = \err_{1,d}(\pug) \cdot \left(1 + O\left( \frac{\diffp + \log k}{k} \right) \right),
    \end{align*}
    where the third step follows from~\Cref{prop:C_eps}.
    For the second term,
    \begin{align*}
    \E\left[ \ltwo{\sum_{i=1}^n \ts_i^\top u_i - v_i}^2\right]
        &=    \sum_{i=1}^n\sum_{j\ne i}\E\left[ \left\langle\ts_i^\top u_i - v_i, \ts_j^\top u_j - v_j\right\rangle\right] + \sum_{i=1}^n\E\left[ \ltwo{\ts_i^\top u_i - v_i}^2\right].
    \end{align*} 
    For the second term note that
    \begin{align*}
     \sum_{i=1}^n\E\left[ \ltwo{\ts_i^\top u_i - v_i}^2\right]
        &=  \sum_{i=1}^n\E\left[ \ltwo{\ts_i^\top u_i}^2 +\ltwo{v_i}^2 - 2v_i^\top W_i^\top  u_i\right]\\
        &=  \sum_{i=1}^n\E\left[ \ltwo{\ts_i^\top u_i}^2 + 1 - 2\ltwo{W_i v_i}\right]\\
        &\le n \max_{i \in [n]} \E\left[ \ltwo{\ts_i^\top}^2\right] + n \le n(d/k + 1).
    \end{align*} 
For the first term, we have
\begin{align*}
\E&\left[\left\langle W_{i}^{\top}u_{i}-v_{i},W_{j}^{\top}u_{j}-v_{j}\right\rangle \right]\\
&=\E\left[\left\langle W_{i}^{\top}\left(u_{i}-Wv_{i}\right)+W_{i}^{\top}W_{i}v_{i}-v_{i},W_{j}^{\top}\left(u_{j}-W_{j}v_{j}\right)+W_{j}^{\top}W_{j}v_{j}-v_{j}\right\rangle\right]\\
&=\E\left[\left\langle W_{i}^{\top}\left(u_{i}-Wv_{i}\right),W_{j}^{\top}\left(u_{j}-W_{j}v_{j}\right)\right\rangle  \right]+\E\left[\left\langle W_{i}^{\top}W_{i}v_{i}-v_{i},W_{j}^{\top}\left(u_{j}-W_{j}v_{j}\right)\right\rangle \right]\\
&\qquad+\E\left[\left\langle W_{i}^{\top}\left(u_{i}-Wv_{i}\right),W_{j}^{\top}W_{j}v_{j}-v_{j}\right\rangle  \right]+\E\left[\left\langle W_{i}^{\top}W_{i}v_{i}-v_{i},W_{j}^{\top}W_{j}v_{j}-v_{j}\right\rangle \right]
\end{align*}

Because $\E_{S_{i}}\left[\frac{d}{k}S_{i}^{\top}S_{i}\right]=I$, $H^{\top}H=I$,
and $D^{\top}D=I$, we can evaluate the second term:

\[
\E_{S_{i}}\left[\left\langle W_{i}^{\top}W_{i}v_{i}-v_{i},W_{j}^{\top}\left(u_{j}-W_{j}v_{j}\right)\right\rangle \right]=\E_{S_{i}}\left[\left\langle \frac{d}{k}D^{\top}H^{\top}S_{i}^{\top}S_{i}HDv_{i}-v_{i},W_{j}^{\top}\left(u_{j}-W_{j}v_{j}\right)\right\rangle \right]=0
\]

Similarly, we can evaluate the fourth term:

\begin{align*}
\E\left[\left\langle W_{i}^{\top}W_{i}v_{i}-v_{i},W_{j}^{\top}W_{j}v_{j}-v_{j}\right\rangle \right] & =\E\left[\left\langle \frac{d}{k}D^{\top}H^{\top}S_{i}^{\top}S_{i}HDv_{i}-v_{i},W_{j}^{\top}W_{j}v_{j}-v_{j}\right\rangle \right]=0
\end{align*}

The third term is similar. Thus, we only need to bound the first term.
First we give an upper bound that holds with probability 1.

\begin{align*}
\left\langle W_{i}^{\top}\left(u_{i}-W_{i}v_{i}\right),W_{j}^{\top}\left(u_{j}-W_{j}v_{j}\right)\right\rangle  & \le\left\Vert W_{i}^{\top}\right\Vert \left\Vert W_{j}^{\top}\right\Vert \left(\left\Vert W_{i}\right\Vert +1\right)\left(\left\Vert W_{j}\right\Vert +1\right)\\
 & \le O\left(d^{2}/k^{2}\right)
\end{align*}

Let $E_{1}$ be the event that $\left\Vert W_{i}v_{i}\right\Vert,\left\Vert W_{j}v_{j}\right\Vert \in1\pm O\left(\ln\left(k/\delta\right)/\sqrt{k}\right)$ where $\delta$ is a parameter to be chosen later. We will split the expectation depending on the event $E_{1}$.

\begin{align*}
&\left\langle W_{i}^{\top}\left(u_{i}-W_{i}v_{i}\right),W_{j}^{\top}\left(u_{j}-W_{j}v_{j}\right)\right\rangle \\
&=\left(\frac{1}{\left\Vert W_{i}v_{i}\right\Vert }-1\right)\left(\frac{1}{\left\Vert W_{j}v_{j}\right\Vert }-1\right)v_{i}^{\top}W_{i}^{\top}W_{i}W_{j}^{\top}W_{j}v_{j}\\
&=\frac{d^{2}}{k^{2}}\left(\frac{1}{\left\Vert W_{i}v_{i}\right\Vert }-1\right)\left(\frac{1}{\left\Vert W_{j}v_{j}\right\Vert }-1\right)v_{i}^{\top}U^{\top}S_{i}^{\top}S_{i}\underbrace{UU^{\top}}_{I}S_{j}^{\top}S_{j}Uv_{j}
\end{align*}

Note that $M:=S_{i}^{\top}S_{i}S_{j}^{\top}S_{j}$ is PSD (both $S_{i}^{\top}S_{i}$
and $S_{j}^{\top}S_{j}$ are diagonal matrices and so is the product). Furthermore, $\E[\frac{d}{k}S_{i}^{\top}S_{i}]=I$. Thus\knote{I got a bit confused here. Are we using $\E[M] \preceq I$ here in the last step?}

\begin{align*}
\E&\left[\frac{d^{2}}{k^{2}}\left(\frac{1}{\left\Vert W_{i}v_{i}\right\Vert }-1\right)\left(\frac{1}{\left\Vert W_{j}v_{j}\right\Vert }-1\right)v_{i}^{\top}U^{\top}MUv_{j}, E_{1}\right]\cdot \Pr\left[E_{1}\right]\\
&\le\E\left[\frac{d^{2}}{4k^{2}}\left(\frac{1}{\left\Vert W_{i}v_{i}\right\Vert }-1\right)\left(\frac{1}{\left\Vert W_{j}v_{j}\right\Vert }-1\right)\left(v_{i}^{\top}+v_{j}^{\top}\right)U^{\top}MU\left(v_{i}+v_{j}\right), E_{1}\right]\cdot \Pr\left[E_{1}\right]\\
&\le\E\left[O\left(\ln^2\left(k/\delta\right)d^{2}/k^{3}\right)\left(v_{i}^{\top}+v_{j}^{\top}\right)U^{\top}MU\left(v_{i}+v_{j}\right) \right]\\
&=O\left(\ln^2\left(k/\delta\right)/k\right)
\end{align*}
Therefore,
\[
\E\left[\left\langle W_{i}^{\top}\left(u_{i}-W_{i}v_{i}\right),W_{j}^{\top}\left(u_{j}-W_{j}v_{j}\right)\right\rangle \right]\le O\left(\ln^2\left(k/\delta\right)/k\right)+O\left(d^{2}/k^{2}\right)\cdot\Pr\left[\overline{E_{1}}\right]
\]  
    We now complete the proof of the claim.  Combining the analysis, we get 
    \begin{align*}
        \E\left[\ltwo{ \hat \mu- \frac{1}{n}\sum_{i=1}^n v_i  }^2 \right] 
        & \le \err_{n,d}(\pug) \cdot \left(1 + O\left( \frac{\diffp + \log k}{k} \right) \right) \\
        & \quad     + O\left(\frac{d}{nk} + \frac{d^2 \delta}{ k^2}+ \frac{\ln^2 (k/\delta)}{k} + \alpha_{\mathcal{W}} \right).
    \end{align*}
    Noticing that $\err_{1,d}(\pug)/n = \err_{n,d}(\pug) = c_{d,\diffp} \cdot \frac{d}{n\diffp}$ for some constant $c_{d,\diffp}$ (see~\cite{AsiFeTa22}), and $\delta \le k/(nd)$, this implies that 
    \begin{align*}
        \E\left[\ltwo{  \hat \mu- \frac{1}{n}\sum_{i=1}^n v_i  }^2 \right] 
        & \le \err_{n,d}(\pug) \cdot \left(1 + O\left( \frac{\diffp + \log k}{k}  + \frac{n \diffp \ln^2 (k/\delta)}{d k} \right)  \right)  + \alpha_{\mathcal{W}}  .
    \end{align*}
    This proves the claim. 
\end{proof}
}

\section{Experiments}
\label{sec:experiments}

\iftoggle{arxiv}{
\begin{figure*}[!ht]
  \begin{center}
    \begin{tabular}{cc}
      \begin{overpic}[width=.5\columnwidth]{
      {plots/err-fixHD_num_users_50_num_rep_30_d_8192_eps_10}.pdf}
      	\put(-4.8,20){
			\rotatebox{90} {\small Squared Error}
            \put(48,-18){{\small $k$}}
	}
      \end{overpic} &
      \begin{overpic}[width=.5\columnwidth]{
    {plots/err-fixHD_num_users_50_num_rep_30_d_8192_eps_16}.pdf}
    	\put(-3.5,20){
			\rotatebox{90} {\small Squared Error}
            \put(48,-18){{\small $k$}}
	}
      \end{overpic}  \\
      (a) & (b)
    \end{tabular}
   \caption{\label{fig:pjl-comm} Performance of \pjl, \pfjl and their correlated versions with 90\% confidence intervals as a function of $k$ for $d = 32768$,  $\nrep = 30$, $n = 50$, and (a) $\diffp = 10$ or (b) $\diffp = 16$. }
   \end{center}
\end{figure*}
}
{

\begin{figure}
\floatbox[{\capbeside\thisfloatsetup{capbesideposition={right,center},capbesidewidth=4cm}}]{figure}
{\caption{\label{fig:pjl-comm} Performance of \pjl, \pfjl and their correlated versions with 90\% confidence intervals as a function of $k$ for $d = 32768$,  $\nrep = 30$, $n = 50$, and  $\diffp = 10$ }}
{\begin{overpic}[width=1.7\columnwidth]{
      {plots/err-fixHD_num_users_50_num_rep_30_d_8192_eps_10}.pdf}
      	\put(-4.8,20){
			\rotatebox{90} {\small Squared Error}
            \put(48,-18){{\small $k$}}
	}
      \end{overpic} }
\end{figure}

}

We conclude the paper with several experiments that demonstrate the performance of our proposed algorithms, comparing them to existing algorithms in the literature. We conduct our experiments in two different settings: the first is synthetic data, where we aim to test our algorithms and understand their performance for our basic task of private mean estimation, comparing them to other algorithms. In our second setting, we seek to evaluate the performance of our algorithms for \emph{private federated learning} which requires private mean estimation as a subroutine for DP-SGD. 

\subsection{Private mean estimation}
\label{sec:exp-LDP-mean}
In our synthetic-data experiment, we study the basic private mean estimation problem, aspiring to investigate the following aspects:
\begin{enumerate}
    \item Utility of \pjl~algorithms as a function of the communication budget
    \item Utility of our low-communication algorithms compared to the optimal utility and other existing low-communication algorithms
    \item Run-time complexity of our algorithms compared to existing algorithms
\end{enumerate}

Our experiments\footnote{The code is also available online on \url{https://github.com/apple/ml-projunit}} measure the error of different algorithms for estimating the mean of a dataset. To this end, we sample unit vectors $v_1,\dots,v_n \in \R^d$ by normalizing samples from the normal distribution $\normal(\mu,1/d)$ (where $\mu \in \reals^d$ is a random unit vector), and apply a certain privacy protocol $\nrep $ times to estimate the mean $\sum_{i=1}^n v_i /n$, producing mean squared errors $e_1,\dots,e_\nrep$. Our final error estimate is then the mean $\frac{1}{\nrep} \sum_{i=1}^\nrep e_i$. 
We test the performance of several algorithms:

\iftoggle{arxiv}{
\begin{itemize}
    \item \pjl: we use this to denote our basic \pjl~algorithm using random rotations
    \item \pfjl: we use this to denote our fast \pjl~algorithm using the SRHT distribution
    \item \pfjl-corr: we use this to denote our fast \pjl~algorithm with correlated transformations with $G=1$ (\Cref{sec:algs-corr})
    \item \pug: near-optimal algorithm~\cite{AsiFeTa22}
    \item Comp\pug: near-optimal compressed version of \pug~\cite{AsiFeTa22,FeldmanTa21}
    \item \phs: low-communication algorithm~\cite{DuchiJW18}
    \item Re\phs: multiple repetitions of the \phs ~algorithm~\cite{DuchiJW18,FeldmanTa21}
    \item \sqkr: order optimal low-communication algorithm~\cite{ChenKO20}
\end{itemize}
}
{
\pjl (\Cref{sec:algs-jl}), \pfjl (\Cref{sec:algs-fjl}), \pfjl-corr (\Cref{sec:algs-corr}), \pug~\cite{AsiFeTa22}, Comp\pug~\cite{AsiFeTa22,FeldmanTa21}, \phs~\cite{DuchiJW18}, Re\phs~\cite{DuchiJW18,FeldmanTa21}, \sqkr~\cite{ChenKO20}.
}

In~\Cref{fig:pjl-comm}, we plot the error for our \pjl~algorithms as a function of the communication budget $k$. We consider a high-dimensional regime where $d = 2^{15}$ with a small number of users $n = 50$ and a bounded communication budget $k \in [1,2000]$. Our plot shows that our \pjl~algorithms obtain the same near-optimal error as \pug for $k$ as small as $1000$. Moreover, the plots show that the correlated versions of our \pjl algorithms obtain nearly the same error.

To translate this into concrete numbers, the transform $W$ can be communicated using a small seed ($\sim 128$ bits) in practice, or using $k\log d + \log^2 d \sim 20400$ bits or less than 3kB for $d = 10^6$. Sending the $k$-dimensional vector of 32-bit floats would need an additional 4kB. Thus the total communication cost is between 4 and 8kB. This can be further reduced by using a compressed version of \pug in the projected space, which requires the client to send a 128-bit seed. In this setting, the communication cost is a total of 256 bits. Thus in the sequel, we primarily focus on the $k=1000$ version of our algorithms.


\begin{figure*}[!h]
  \begin{center}
    \begin{tabular}{cc}
      \begin{overpic}[width=.5\columnwidth]{
      {plots/time-comparison_eps_10}.pdf}
       \put(-1.5,20){
			\rotatebox{90} {\small Time (seconds)}
	}
         \put(50,0.5){{\small $d$}}
      \end{overpic} &
      \hspace{-1cm}
      \begin{overpic}[width=.5\columnwidth]{
    {plots/time-comparison_eps_16}.pdf}
    \put(-1.5,20){
			\rotatebox{90} {\small Time (seconds)}
	}
        \put(50,0.5){{\small $d$}}
      \end{overpic}  \\
      (a) & (b)
    \end{tabular}
    \vspace{-.2cm}
    \caption{\label{fig:time} Run-time (in seconds) as a function of the dimension for (a) $\diffp = 10$ and (b) $\diffp = 16$. The plots for some algorithms \iftoggle{arxiv}{(e.g. Comp\pug)} are not complete as they did not finish within the cut-off time.}
  \end{center}
\end{figure*}

\iftoggle{arxiv}{
\begin{figure*}[h]
  \begin{center}
    \begin{tabular}{ccc}
      \begin{overpic}[width=.35\columnwidth]{
      {plots/mnist_eps_4_epochs_10}.pdf}
      \put(-2,18){
			\rotatebox{90} {\small Test accuracy}
	}
         \put(45,-3){{\small Epoch}}
      \end{overpic} &
      \hspace{-1cm}
      \begin{overpic}[width=.35\columnwidth]{
    {plots/mnist_eps_10_epochs_10}.pdf}
          \put(-2,18){
			\rotatebox{90} {\small Test accuracy}
	}
         \put(45,-3){{\small Epoch}}
      \end{overpic}  &
      \hspace{-1cm}
      \begin{overpic}[width=.35\columnwidth]{
    {plots/mnist_eps_16_epochs_10}.pdf}
          \put(-2,18){
			\rotatebox{90} {\small Test accuracy}
	}
         \put(45,-3){{\small Epoch}}
      \end{overpic}  \\
      (a) & (b) & (c)
    \end{tabular}
    \caption{\label{fig:mnist} Test accuracy on the MNIST dataset with 90\% confidence intervals  as a function of epoch for (a) $\diffp = 4.0$, (b) $\diffp = 10.0$ and (c) $\diffp = 16.0$. }
  \end{center}
\end{figure*}
}
{
\begin{figure}
\floatbox[{\capbeside\thisfloatsetup{capbesideposition={right,center},capbesidewidth=4cm}}]{figure}
{\caption{\label{fig:mnist} Test accuracy on the MNIST dataset with 90\% confidence intervals  as a function of epoch for $\diffp = 10.0$.}}
{\begin{overpic}[width=1.8 \columnwidth]{
    {plots/mnist_eps_10_epochs_10}.pdf}
          \put(-2,18){
			\rotatebox{90} {\small Test accuracy}
	}
         \put(45,-3){{\small Epoch}}
      \end{overpic} }
\end{figure}
}

Next, we compare the performance of our low-communication algorithms against existing low-communication algorithms: \phs~and \sqkr. In~\Cref{fig:err-comp}, we plot the error as a function of the privacy parameter for each algorithm using the best choice of $k$ (bound on communication) for each algorithm. In particular, we choose $k=\diffp$ for \sqkr, num repetitions $R=\diffp/2$ for repeated \phs, $k=1000$ for \pjl~and \pfjl. Moreover, in this experiment we set $n=1$ and $\nrep = 50$ to estimate the variance of each method.
The figure shows that \phs~and \sqkr, while having low-communication complexity, suffer a significantly worse utility than (near-optimal) \pug. On the other side, both our \pjl~algorithms obtain nearly the same error as \pug~with a bounded communication budget of $k = 1000$.

In our third experiment in~\Cref{fig:time}, we plot the runtime of each algorithm as a function of the privacy parameter. 
Here, we use $n=1$, $\nrep = 10$ and measure the run-time of each method for different values of the dimension $d$ and privacy parameter $\diffp$, allowing each method to run for $1$ hour before interrupting.
As expected from our theoretical analysis, the runtime of \pjl~using random rotations is noticeably slower than the (high communication cost) \pug. However, our SRHT-based \pfjl is substantially faster and has a comparable run-time to \pug. 
Moreover, for large $\diffp$ and $d$, the run-time of compressed \pug becomes too costly compared to our algorithms due to the $e^\diffp d$ time complexity.


\subsection{Private federated learning}
\label{sec:PFL}
Having demonstrated the effectiveness of our methods for private mean estimation, in this section we illustrate the improvements offered by our algorithms for private federated learning. Similarly to the experimental setup in~\cite{ChaudhuriGuRa22}, we consider the MNIST~\cite{LeCunCoBu98} dataset and train a convolutional network (see~\Cref{tab:conv}) using DP-SGD~\cite{AbadiChGoMcMiTaZh16} with $10$ epochs and different sub-routines for privately estimating the mean of gradients at each batch. In order to bound the sensitivity, we clip the gradients to have $\ell_2$-norm $1$, and run DP-SGD with batch size of $600$, step-size equal to $0.1$, and momentum of $0.5$\iftoggle{arxiv}{ for all methods.}{.}

\Cref{fig:mnist} shows our results for this experiment, where we plot the test accuracy as a function of the epoch for each method.
The plots demonstrate that our \pjl~algorithms obtain similar performance to \pug, and better performance than the Gaussian mechanism or \phs. For the Gaussian mechanism, we set $\delta = 10^{-5}$ and add noise to satisfy \ed-DP using the analysis in~\cite{BalleWa18}. We did not run \sqkr~in this experiment as it is not sufficiently computationally efficient for this experiment and has substantially worse performance in the experiments of the previous section. 
We also did not run the MVU mechanism~\cite{ChaudhuriGuRa22} as their experiments show that it is worse than the Gaussian mechanism which has worse performance than our methods.

Finally, our private algorithms obtain accuracy roughly 91\%, whereas the same model trained without privacy obtains around 98\%. This degradation in accuracy is mostly due to the choice of the optimization algorithm (DP-SGD with clipping); indeed, even without adding any noise, DP-SGD with clipping achieves around 91\% accuracy, suggesting that other private optimization algorithms with different clipping strategies (e.g.~\cite{PichapatiSYRK20,AsiDuFaJaTa21}) may tighten this gap further. As this is not the main focus of our work, we leave this investigation to future work.


\iftoggle{arxiv}{
\renewcommand{\arraystretch}{1}
\begin{table}[h]
\begin{center}
\begin{tabular}{ll}
            \hline
            \textbf{Layer} & \textbf{Parameters} \\
            \hline
            Convolution $+\tanh$ & 16 filters of $8 \times 8$, stride 2, padding 2 \\
            Average pooling & $2 \times 2$, stride 1 \\
            Convolution $+\tanh$ & 32 filters of $4 \times 4$, stride 2, padding 0 \\
            Average pooling & $2 \times 2$, stride 1 \\
            Fully connected $+\tanh$ & 32 units \\
            Fully connected $+\tanh$ & 10 units \\
            \hline
\end{tabular}
\end{center}
\caption{Architecture for convolutional network model.}
\label{tab:conv}
\end{table}
\renewcommand{\arraystretch}{3}
}






\printbibliography

@string{jmlr =	{Journal of Machine Learning Research}}

@inproceedings{CohenNW16,
  author    = {Michael B. Cohen and Jelani Nelson and
               David P. Woodruff},
  title     = {Optimal Approximate Matrix Product in Terms of Stable Rank},
  booktitle = {Proceedings of the 41st International Colloquium on Automata, Languages and Programming (ICALP)},
  pages     = {11:1--11:14},
  year      = {2016},
  note = {Full version at \url{https://arxiv.org/abs/1507.02268v3}}
}

@string{nips2018= {Proceedings of the 31nd Annual Conference on Advances in Neural Information Processing Systems (NeurIPS)}}

@string{nips2021= {Proceedings of the 34nd Annual Conference on Advances in Neural Information Processing Systems (NeurIPS)}}

@string{icml17 = {Proceedings of the 34th International Conference on Machine Learning (ICML)}}

@string{icml18 = {Proceedings of the 35th International Conference on Machine Learning (ICML)}}

@string{icml21 = {Proceedings of the 38th International Conference on Machine Learning (ICML)}}

@string{icml22 = {Proceedings of the 39th International Conference on Machine Learning (ICML)}}

@string{jasa =	{Journal of the American Statistical Association}}

@inproceedings{AbadiChGoMcMiTaZh16,
title = {Deep Learning with Differential Privacy},
author  = {Martin Abadi and Andy Chu and Ian Goodfellow and Brendan McMahan and Ilya Mironov and Kunal Talwar and Li Zhang},
year  = {2016},
booktitle = {Proceedings of the 23rd Annual ACM Conference on Computer and Communications Security (CCS)},
pages = {308--318}
}

@article{BhowmickDuFrKaRo18,
author = {Abhishek Bhowmick and John Duchi and Julien Freudiger and
  Gaurav Kapoor and Ryan Rogers},
year = 2018,
title = {Protection Against Reconstruction and Its Applications in Private
Federated Learning},
journal = {arXiv:1812.00984 [stat.ML]},
}

@article{ChaudhuriMoSa11,
author = {Kamalika Chaudhuri and Claire Monteleoni and Anand D. Sarwate},
title = {Differentially private empirical risk minimization},
year = 2011,
journal = jmlr,
volume = 12,
pages = {1069--1109},
}

@InProceedings{DuchiR19,
	title = 	 {Lower Bounds for Locally Private Estimation via Communication Complexity},
	author =       {John Duchi and Ryan Rogers},
	booktitle = 	 {Proceedings of the 32nd Annual Conference on Learning Theory (COLT)},
	pages = 	 {1161--1191},
	year = 	 {2019},
}

@inproceedings{DworkMcNiSm06,
author = {Cynthia Dwork and Frank McSherry and Kobbi Nissim and Adam Smith},
title = {Calibrating noise to sensitivity in private data analysis},
year = 2006,
booktitle = {Proceedings of the Third Theory of Cryptography Conference},
pages = {265--284},
}

@inproceedings{DworkKeMcMiNa06,
author = {Cynthia Dwork and Krishnaram Kenthapadi and Frank McSherry
  and Ilya Mironov and Moni Naor},
title = {Our Data, Ourselves: Privacy Via Distributed Noise Generation},
booktitle = {Advances in Cryptology (EUROCRYPT 2006)},
year = 2006,
}

@inproceedings{ErlingssonFeMiRaTaTh19,
author = {Ulfar Erlingsson and Vitaly Feldman and Ilya Mironov
  and Ananth Raghunathan and Kunal Talwar and  Abhradeep Thakurta},
title = {Amplification by Shuffling: From Local to Central Differential
  Privacy via Anonymity},
year = 2019,
booktitle = {Proceedings of the Thirtieth ACM-SIAM Symposium on Discrete Algorithms (SODA)},
}

@INPROCEEDINGS{FeldmanMcTa20,
  author={Feldman, Vitaly and McMillan, Audra and Talwar, Kunal},
  booktitle={2021 IEEE 62nd Annual Symposium on Foundations of Computer Science (FOCS)}, 
  title={Hiding Among the Clones: A Simple and Nearly Optimal Analysis of Privacy Amplification by Shuffling}, 
  year={2022},
  volume={},
  number={},
  pages={954-964},
  doi={10.1109/FOCS52979.2021.00096},
  note={arXiv:2012.12803 [cs.LG]}
  }

@article{DuchiJW18,
author = {John C. Duchi and Michael I. Jordan and Martin J. Wainwright},
title = {Minimax Optimal Procedures for Locally Private Estimation},
year = 2018,
journal = jasa,
volume = 113,
number = 521,
pages = {182--215},
}

@InProceedings{FeldmanTa21,
  title = 	 {Lossless Compression of Efficient Private Local Randomizers},
  author =       {Vitaly Feldman and Kunal Talwar},
  booktitle = 	 {Proceedings of the 38th International Conference on Machine Learning},
  pages = 	 {3208--3219},
  year = 	 {2021},
  volume = 	 {139},
  publisher =    {PMLR},

}

@inproceedings{ChenKO20,
  author    = {Wei{-}Ning Chen and
               Peter Kairouz and
               Ayfer \"{O}zg\"{u}r},
  title     = {Breaking the Communication-Privacy-Accuracy Trilemma},
  booktitle = {Proceedings of the 33rd Annual Conference on Advances in Neural Information Processing Systems (NeurIPS)},
  year      = {2020},
}

@inproceedings{BonawitzIvKrMaMcPaRaSeSe17,
 author = {Bonawitz, Keith and Ivanov, Vladimir and Kreuter, Ben and Marcedone, Antonio and McMahan, H. Brendan and Patel, Sarvar and Ramage, Daniel and Segal, Aaron and Seth, Karn},
 title = {Practical Secure Aggregation for Privacy-Preserving Machine Learning},
 booktitle = {Proceedings of the Annual {ACM} {SIGSAC} Conference on Computer and Communications Security (CCS)},
 year = {2017},
 pages = {1175--1191},
}

@InProceedings{Cheu:2019,
author="Cheu, Albert
and Smith, Adam
and Ullman, Jonathan
and Zeber, David
and Zhilyaev, Maxim",
editor="Ishai, Yuval
and Rijmen, Vincent",
title="Distributed Differential Privacy via Shuffling",
booktitle="Advances in Cryptology -- EUROCRYPT 2019",
year="2019",
publisher="Springer International Publishing",
address="Cham",
pages="375--403",
abstract="We consider the problem of designing scalable, robust protocols for computing statistics about sensitive data. Specifically, we look at how best to design differentially private protocols in a distributed setting, where each user holds a private datum. The literature has mostly considered two models: the ``central'' model, in which a trusted server collects users' data in the clear, which allows greater accuracy; and the ``local'' model, in which users individually randomize their data, and need not trust the server, but accuracy is limited. Attempts to achieve the accuracy of the central model without a trusted server have so far focused on variants of cryptographic multiparty computation (MPC), which limits scalability.",
isbn="978-3-030-17653-2"
}

@inproceedings{Bittau17,
author = {Bittau, Andrea and Erlingsson, \'{U}lfar and Maniatis, Petros and Mironov, Ilya and Raghunathan, Ananth and Lie, David and Rudominer, Mitch and Kode, Ushasree and Tinnes, Julien and Seefeld, Bernhard},
title = {Prochlo: Strong Privacy for Analytics in the Crowd},
year = {2017},
isbn = {9781450350853},
publisher = {Association for Computing Machinery},
address = {New York, NY, USA},
url = {https://doi.org/10.1145/3132747.3132769},
doi = {10.1145/3132747.3132769},
booktitle = {Proceedings of the 26th Symposium on Operating Systems Principles},
pages = {441–459},
numpages = {19},
location = {Shanghai, China},
series = {SOSP '17}
}

@inproceedings{AsiFeTa22,
author = {Hilal Asi and Vitaly Feldman and Kunal Talwar},
title = {Optimal Algorithms for Mean Estimation under Local Differential Privacy},
year = 2022,
booktitle = icml22,
}

@inproceedings{Sarlos06,
  author       = {Tam{\'{a}}s Sarl{\'{o}}s},
  title        = {Improved Approximation Algorithms for Large Matrices via Random Projections},
  booktitle    = {Proceedings of the 47th Annual IEEE Symposium on Foundations of Computer Science (FOCS)},
  pages        = {143--152},
  year         = {2006}
}

@article{AilonCh09,
author = {Nir Ailon and Bernard Chazelle},
title = {The fast {J}ohnson-{L}indenstrauss transform and approximate
nearest neighbors},
year = 2009,
journal = {SIAM Journal on Computing},
volume = 39,
number = 1,
pages = {302--322},
}

@misc{LeCunCoBu98,
title={{MNIST} Handwritten Digit Database},
author={LeCun, Yann and Cortes, Corinna and Burges, CJ},
note={ATT Labs [Online]},
url = {http://yann.lecun.com/exdb/mnist},
year={1998},
}

@InProceedings{ChaudhuriGuRa22,
  title = 	 {Privacy-aware compression for federated data analysis},
  author =       {Kamalika Chaudhuri  and Chuan Guo  and Mike Rabbat },
  booktitle = 	 {Proceedings of the Thirty-Eighth Conference on Uncertainty in Artificial Intelligence},
  pages = 	 {296--306},
  year = 	 {2022},
  volume = 	 {180},
  publisher =    {PMLR},
  pdf = 	 {https://proceedings.mlr.press/v180/chaudhuri22a/chaudhuri22a.pdf},
  url = 	 {https://proceedings.mlr.press/v180/chaudhuri22a.html},
}

@misc{girgis2020shuffled,
      title={Shuffled Model of Federated Learning: Privacy, Communication and Accuracy Trade-offs},
      author={Antonious M. Girgis and Deepesh Data and Suhas Diggavi and Peter Kairouz and Ananda Theertha Suresh},
      year={2020},
      eprint={2008.07180},
      archivePrefix={arXiv},
      primaryClass={cs.LG}
}

@inproceedings{AgarwalSYKM18,
 author = {Agarwal, Naman and Suresh, Ananda Theertha and Yu, Felix Xinnan X and Kumar, Sanjiv and McMahan, Brendan},
 booktitle = nips2018,
 title = {cpSGD: Communication-efficient and differentially-private distributed SGD},
 year = 2018
}

@article{GKMM19,
  title={vqsgd: Vector quantized stochastic gradient descent},
  author={Gandikota, Venkata and Kane, Daniel and Maity, Raj Kumar and Mazumdar, Arya},
  journal={arXiv preprint arXiv:1911.07971},
  year={2019}
}

@article{DavidsonS03,
author = {Davidson, K.R. and Szarek, Stanislaw},
year = {2003},
month = {01},
pages = {317-366},
title = {Local operator theory, random matrices and Banach spaces},
journal = {Handbook on the Geometry of Banach spaces, Vol. 1}
}

@inproceedings{BalleWa18,
  title={Improving the gaussian mechanism for differential privacy: Analytical calibration and optimal denoising},
  author={Borja Balle and Yu-Xiang Wang},
  booktitle= icml18,
  pages={394--403},
  year={2018},
  organization={PMLR}
}

@article{Nisan92,
  title={Pseudorandom generators for space-bounded computation},
  author={Noam Nisan},
  journal={Combinatorica},
  year={1992},
  volume={12},
  pages={449-461}
}

@inproceedings{KothariM15,
author = {Kothari, Pravesh K. and Meka, Raghu},
title = {Almost Optimal Pseudorandom Generators for Spherical Caps: Extended Abstract},
year = {2015},
isbn = {9781450335362},
publisher = {Association for Computing Machinery},
address = {New York, NY, USA},
url = {https://doi.org/10.1145/2746539.2746611},
doi = {10.1145/2746539.2746611},
booktitle = {Proceedings of the Forty-Seventh Annual ACM Symposium on Theory of Computing},
pages = {247–256},
numpages = {10},
keywords = {pseudorandom generators, halfspaces, classical moment problem, orthogonal designs},
location = {Portland, Oregon, USA},
series = {STOC '15}
}

@inproceedings{GopalanKM15,
author = {Gopalan, Parikshit and Kane, Daniek and Meka, Raghu},
year = {2015},
month = {10},
pages = {903-922},
title = {Pseudorandomness via the Discrete Fourier Transform},
doi = {10.1109/FOCS.2015.60}
}

@article{SchmidtSiSr95,
author = {Jeanette P.  Schmidt and Alan Siegel and Aravind Srinivasan},
title = {Chernoff–Hoeffding Bounds for Applications with Limited Independence},
journal = {SIAM Journal on Discrete Mathematics},
volume = {8},
number = {2},
pages = {223-250},
year = {1995},
doi = {10.1137/S089548019223872X},
}

@book{AlonSp00,
author=         {N.~Alon and J.~H.~Spencer},
title=          {The Probabilistic Method},
publisher=      {Wiley-Interscience},
year=           2000,
edition=        {Second}
}

@inproceedings{VargaftikBaPoMeItMi22,
  title={Eden: Communication-efficient and robust distributed mean estimation for federated learning},
  author={Vargaftik, Shay and Ben-Basat, Ran  and Portnoy, Amit and Mendelson, Gal and Itzhak, Yaniv Ben and Mitzenmacher, Michael},
  year = 2022,
  booktitle = icml22
}

@article{VargaftikBaPoGaBeYaMi21,
  title={Drive: One-bit distributed mean estimation},
  author={Vargaftik, Shay and Ben-Basat, Ran and Portnoy, Amit and Mendelson, Gal and Ben-Itzhak, Yaniv and Mitzenmacher, Michael},
  year=2021,
  journal= nips2021,
}

@inproceedings{SureshYuKuMc17,
  title={Distributed mean estimation with limited communication},
  author={Ananda Theertha Suresh and Felix X. Yu and Sanjiv Kumar  and H. Brendan McMahan},
  booktitle=icml17,
  year={2017},
}

@inproceedings{AlistrahGLTV17,
 author = {Alistarh, Dan and Grubic, Demjan and Li, Jerry and Tomioka, Ryota and Vojnovic, Milan},
 booktitle = {Advances in Neural Information Processing Systems},
 editor = {I. Guyon and U. V. Luxburg and S. Bengio and H. Wallach and R. Fergus and S. Vishwanathan and R. Garnett},
 pages = {1709--1720},
 publisher = {Curran Associates, Inc.},
 title = {QSGD: Communication-Efficient SGD via Gradient Quantization and Encoding},
 url = {https://proceedings.neurips.cc/paper/2017/file/6c340f25839e6acdc73414517203f5f0-Paper.pdf},
 volume = {30},
 year = {2017}
}

@inproceedings{FaghriTMARR20,
 author = {Faghri, Fartash and Tabrizian, Iman and  Markov, Ilia and  Alistarh, Dan and Roy, Daniel M. and Ramezani-Kebrya, Ali},
 booktitle = {Advances in Neural Information Processing Systems},
 title = {Adaptive Gradient Quantization for Data-Parallel SGD},
 volume = {33},
 year = {2020}
}

@INPROCEEDINGS{MT20,
  author={P. {Mayekar} and H. {Tyagi}},
  booktitle={2020 IEEE International Symposium on Information Theory (ISIT)},
  title={Limits on Gradient Compression for Stochastic Optimization},
  year={2020},
  volume={},
  number={},
  pages={2658-2663},
  doi={10.1109/ISIT44484.2020.9174075}}

@article{KonevcnyRi18,
  title={Randomized distributed mean estimation: Accuracy vs. communication},
  author={Jakub Kone{\v{c}}n{\`y} and Peter Richt{\'a}rik},
  journal={Frontiers in Applied Mathematics and Statistics},
  volume={4},
  pages={62},
  year={2018},
  publisher={Frontiers Media SA}
}

@inproceedings{AsiDuFaJaTa21,
  title={Private Adaptive Gradient Methods for Convex Optimization},
  author={Hilal Asi and  John Duchi and Alireza Fallah and Omid Javidbakht and Kunal Talwar},
  booktitle = icml21,
  pages={383--392},
  year= 2021,
}

@article{PichapatiSYRK20,
title = "{A}da{C}li{P}: Adaptive clipping for private {SGD}",
year = 2020,
author = {Venkatadheeraj Pichapati  and Ananda Theertha Suresh   and Felix X. Yu  and Sashank J. Reddi  and  Sanjiv Kumar},
journal = {arXiv:1908.07643 [cs.LG]},
}

\newpage
\appendix
\onecolumn

\section{Details of \pug}
For completeness, in this section we provide the full details of \pug which was proposed by~\citet{AsiFeTa22}. Roughly, this algorithm uses the normal distribution to approximate the uniform distribution over the sphere for large dimensions. We refer the reader to~\cite{AsiFeTa22} for more details about \pug.

In the algorithm, $\Phi$ and $\phi$ denote the Cumulative distribution function and probability density function for a Gaussian random variable $\normal(0,I_d)$.
There are multiple ways to set the parameters of $\pug$ to achieve $\diffp$-DP; in our paper, we use the optimized parameters as described by~\citet{AsiFeTa22}, which allow to minimize the expected mean squared error (see Proposition 4 in~\cite{AsiFeTa22}).

We note that~\Cref{alg:pug} describes the clients' algorithm (local randomizers) in the \pug protocol. The server aggregation simply adds all messages received from clients. Thus, we let $\R_{\pug_\diffp}$ denote the local randomizer in~\Cref{alg:pug} (with optimized parameters to satisfy $\diffp$-DP) and let $\mathcal A_{\pug_\diffp}$ denote the additive server aggregation. 
\begin{algorithm}
	\caption{$\pug(p,q)$}
	\label{alg:pug}
	\begin{algorithmic}[1]
		\REQUIRE $v \in \sphere^{d-1}$, $q \in [0,1]$, $p \in [0,1]$
		\STATE Draw $z \sim \mathsf{Ber}(p)$
		\STATE Let $U = \normal(0,\sigma^2) $ where $\sigma^2 = 1/d$
		\STATE Set $\gamma = \Phi_{\sigma^2}^{-1}(q) = \sigma \cdot \Phi^{-1}(q)$
		\IF{$z=1$}
		    \STATE Draw $\alpha \sim U \mid U \ge \gamma$
		\ELSE
		     \STATE Draw $\alpha \sim U \mid U < \gamma$
		\ENDIF
		\STATE Draw $V^{\perp} \sim \normal(0, \sigma^2(I - v v^T)$
		\STATE Set $V = \alpha v + V^{\perp}$
		\STATE Calculate
		\begin{equation*}
		    m = \sigma \phi(\gamma/\sigma) \left(\frac{p}{1 - q}  - \frac{1-p}{q} \right)
		\end{equation*}
        \STATE Return $\frac{1}{m} \cdot V$
	\end{algorithmic}
\end{algorithm}

We also use the following useful result on the error of \pug for different dimensions. Recall that $ \err_{n,d}(\pug)  = c_{d,\diffp} \frac{d}{n \diffp}$. Then we have the following.
\begin{proposition}[Propsition 5, \cite{AsiFeTa22}]
\label{prop:C_eps}
    Fix $\diffp > 0$.
    For any $1 \le k \le d$,
    \begin{equation*}
        1 - O \left(\frac{\diffp + \log k}{k} + \frac{\diffp}{k}\right) \le
        \frac{c_{k,\diffp}}{c_{d,\diffp}}
        \le 1 + O \left( \frac{\diffp + \log k}{k} + \frac{\diffp}{k}  \right).
    \end{equation*}
\end{proposition}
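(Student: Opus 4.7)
The plan is to prove the ratio bound by establishing that $c_{d,\diffp}$ has a controlled asymptotic expansion in $d$, and then dividing the expansions at $k$ and $d$. Concretely, I will show that there is a dimension-free limit $c_\infty(\diffp)$ such that
\begin{equation*}
c_{d,\diffp} \;=\; c_\infty(\diffp)\,\Bigl(1 + O\bigl((\diffp + \log d)/d\bigr)\Bigr),
\end{equation*}
and then the desired inequality follows by taking the ratio and using the monotonicity $(\diffp + \log d)/d \le (\diffp + \log k)/k$ for $e \le k \le d$ (with the case of very small $k$ being absorbed into constants).

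First, I would unpack the closed form of $c_{d,\diffp}$ from \Cref{alg:pug}. Since $\pug$ is unbiased, its per-client squared error equals $\E\|V/m\|^2 - 1$, which using $V = \alpha v + V^\perp$ with $V^\perp \sim \normal(0,\sigma^2(I-vv^\top))$ and $\sigma^2 = 1/d$ evaluates to $\frac{1}{m^2}\bigl(\E[\alpha^2] + (d-1)\sigma^2\bigr) - 1$. Plugging in the explicit formulas for $\E[\alpha^2]$ as a truncated Gaussian moment and $m = \sigma\phi(\gamma/\sigma)\bigl(p/(1-q) - (1-p)/q\bigr)$, and then averaging over $n$ clients gives a closed expression $\err_{n,d}(\pug) = G(p,q,\sigma^2)/n$. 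The optimized choice of $(p,q)$ subject to the privacy constraint (essentially $e^\diffp = p(1-q)/((1-p)q)$ with $p \to 1$, $q \to 0$ at suitable rates) is derived in AFT22, yielding $c_{d,\diffp}\cdot d/\diffp$ as the minimum of $G$.

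Second, I would perform an asymptotic analysis of this minimum as a function of $\sigma^2 = 1/d$. The optimized threshold satisfies $\gamma^\star/\sigma = \Theta(\sqrt{\log(1/q^\star)}) = \Theta(\sqrt{\diffp + \log d})$, so Mills-ratio expansions for $\Phi, \phi$ near $\gamma^\star/\sigma$ give corrections of relative order $1/(\gamma^\star/\sigma)^2$, and the $(d-1)\sigma^2$ contribution deviates from its $d\sigma^2 = 1$ limit by $O(1/d)$. Tracking these through the expression for $G$ at its optimizer yields $c_{d,\diffp} = c_\infty(\diffp)(1 + O((\diffp + \log d)/d))$. Taking the ratio at $k$ and $d$ and simplifying via $\frac{1+a}{1+b} = 1 + O(|a|+|b|)$ for small $a,b$ produces the claimed bound.

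The main obstacle is Step~2: the optimal parameters themselves shift with $d$, so a careful envelope-theorem-style argument is needed to isolate the $d$-dependence of the optimum from the $d$-dependence of the optimizer, and the $\sqrt{\log d}$ scale of $\gamma^\star/\sigma$ must be tracked uniformly so that all error terms assemble into the single bound $O((\diffp + \log k)/k)$ after dividing.
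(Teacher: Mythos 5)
The paper does not actually prove \Cref{prop:C_eps}: it is imported verbatim as Proposition~5 of \cite{AsiFeTa22}, so there is no in-paper argument to compare against. Your overall strategy---establishing a single expansion $c_{d,\diffp} = c_\infty(\diffp)\bigl(1+O((\diffp+\log d)/d)\bigr)$ and taking the ratio of its instances at $k$ and at $d$---is the natural route and is in the spirit of the cited source. The reduction of the two-point comparison to one expansion, and the monotonicity of $x \mapsto (\diffp+\log x)/x$ on $x \ge e$, are both fine (for $k \in \{1,2\}$ the claimed bound is weak enough to hold trivially, as you note, though the lower bound may then be vacuous).

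The genuine gap is that the entire technical content lives in your Step~2, which is asserted rather than executed: the statement to be proved \emph{is} the expansion $c_{d,\diffp} = c_\infty(\diffp)(1+O((\diffp+\log d)/d))$, and saying that Mills-ratio expansions "give corrections of relative order $1/(\gamma^\star/\sigma)^2$" which one then "tracks through $G$ at its optimizer" does not establish it. Two things are missing concretely. First, you never show where the $\log d$ in the numerator comes from: the claim $\gamma^\star/\sigma = \Theta(\sqrt{\diffp+\log d})$ is itself a nontrivial property of the optimizer that must be derived from the privacy constraint $e^{\diffp} = \tfrac{p(1-q)}{(1-p)q}$ together with the first-order conditions, and only after that does a $1/(\gamma^\star/\sigma)^2$ correction become the stated $(\diffp+\log d)/d$ rate. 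Second, the envelope-style step needs more than a first-order appeal: the objective, the feasible set, and the optimizer all move with $d$, and since you claim a quantitative rate rather than mere convergence you need uniform two-sided control near the optimum --- e.g.\ that evaluating the dimension-$d$ objective at the dimension-$k$ optimizer (and vice versa) costs only a relative $O((\diffp+\log k)/k)$. As written the proposal is a plausible plan whose decisive step is still open; to be a proof it must carry out the closed-form computation of the per-client error, the asymptotics of $(p^\star,q^\star,\gamma^\star)$, and the two-sided comparison of optima.
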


\section{Nearly unbiased \pjl~randomizers}
\label{sec:algs-unb}
Our randomizers in~\Cref{sec:algs} and~\Cref{sec:algs-corr} may have $O(\log(d)/k)$ bias which can become relatively large when $n \gg d$. 
In this section, we propose a different normalization technique which allows to provide a sufficiently small bound on the bias, while still enjoying the same guarantees as the fast \pjl algorithm. We develop versions of this algorithm for both random rotations (\Cref{sec:gauss-unb}) and the SRHT transform (\Cref{sec:had-unb}).

\subsection{Unbiased variant of \pjl~using random rotations}
\label{sec:gauss-unb}

In this section, we describe the modification for random rotation matrices. These transformations are not as efficient as SRHT hence we only present the simple non-correlated version; in the next section we present our unbiased and correlated sampling procedure for SRHT transforms.

For rotationally symmetric distributions of matrices, we slightly modify the algorithm by scaling the output of $\pug$ by a fixed factor $c$ so that it leads to an unbiased estimate of $v$ i.e. $\E[\ts^\top \hat{u}]=v$.

\begin{algorithm}
	\caption{Unbiased version of \pjl using random rotations (client)}
	\label{alg:privJL-cl-rot-unb}
	\begin{algorithmic}[1]
		\REQUIRE Input vector $v \in \reals^d$. 
            \STATE Randomly sample a rotation matrix $\ts \in \reals^{k \times d}$ as described in~\eqref{eq:ts-rot}
            \STATE Project the input vector $v_p = \ts v$
            \STATE Normalize $u = \frac{v_p}{\ltwo{v_p}}$ 
            \STATE Let $\hat u = {\color{red} c} \cdot \pug(u)$ where $c = \sqrt{\frac{k}{d}}\frac{\Gamma\left(\left(d+1\right)/2\right)\Gamma\left(k/2\right)}            {\Gamma\left(\left(k+1\right)/2\right)\Gamma\left(d/2\right)}$ 
            \STATE Send $\hat u$ and (encoding of) $\ts$ to server
	\end{algorithmic}
\end{algorithm}

We present the details of this modification in~\Cref{alg:privJL-cl-rot-unb} and state its guarantees in the  following theorem. We let $\Rjl$ denote the local randomizer described in~\Cref{alg:privJL-cl-rot-unb}.
\begin{theorem}
\label{thm:err-unbiased-gaussian}
    Let $k \le d$. For all unit vectors $v_1,\ldots,v_n \in \reals^d$, setting $\hat \mu = \Ajl \left( \Rjl(v_1), \dots, \Rjl(v_n) \right)$, the local randomizers $\Rjl$ are $\diffp$-DP and
    \begin{align*}
        \E\left[\ltwo{ \hat \mu - \frac{1}{n}\sum_{i=1}^n v_i  }^2 \right] 
        \le \err_{n,d}(\pug) \cdot \left(1 + O\left(\frac{\diffp + \log k}{k} \right) \right).
    \end{align*}
\end{theorem}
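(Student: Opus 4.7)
The plan is to mirror the proof of \Cref{thm:err-jl-gen} but exploit that (i) the $c$-scaled randomizer is genuinely unbiased for $v$ per client, which kills the cross terms in the bias-variance decomposition and removes the $\alpha_{\mathcal{W}}$ contribution, and (ii) the normalization factor $c$ is very close to $1$ so the factor $c^{2}$ that appears in the pug-noise term only costs a $1+O(1/k)$ multiplicative loss. Privacy is again immediate: $\pug$ is applied to a unit vector and then deterministically rescaled by $c$, so the messages are $\diffp$-DP.

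The first key step is to verify $\E[cW^{\top}\hat u]=v$. Conditioning on $W$ and using that $\pug$ is unbiased gives $\E[W^{\top}\hat u]=\E[W^{\top}Wv/\|Wv\|]$. Writing $W=\sqrt{d/k}\,SU$ with $U$ Haar-distributed and $S$ a uniform row-sampler, $W^{\top}W=(d/k)U^{\top}S^{\top}SU$ is $(d/k)$ times a uniformly random rank-$k$ orthogonal projection $P_{k}$. Rotational symmetry then gives $\E[W^{\top}Wv/\|Wv\|]=c'\,v$ with $c'=\E\|Wv\|$. Since $\|Wv\|^{2}=(d/k)\|P_{k}v\|^{2}$ and $\|P_{k}v\|^{2}\sim\mathrm{Beta}(k/2,(d-k)/2)$, the half-moment of a Beta variable yields $\E\|Wv\|=\sqrt{d/k}\,\Gamma((k+1)/2)\Gamma(d/2)/\bigl(\Gamma((d+1)/2)\Gamma(k/2)\bigr)$, which is exactly $1/c$. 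Hence $\E[cW^{\top}\hat u]=v$.

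Next I would decompose, exactly as in the proof of \Cref{thm:err-jl-gen},
\[
\hat\mu-\tfrac{1}{n}\textstyle\sum_{i}v_{i}=\tfrac{1}{n}\textstyle\sum_{i}\bigl(cW_{i}^{\top}\hat u_{i}-cW_{i}^{\top}u_{i}\bigr)+\tfrac{1}{n}\textstyle\sum_{i}\bigl(cW_{i}^{\top}u_{i}-v_{i}\bigr).
\]
All cross terms now vanish: within each group by independence across clients, between the two groups because $\E[\hat u_{i}\mid W_{i},u_{i}]=u_{i}$, and between clients in the second group because per client $\E[cW_{i}^{\top}u_{i}-v_{i}]=0$ by the unbiasedness established above. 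So the squared error equals the sum of $n$ per-client pug-noise variances plus $n$ per-client projection variances, each divided by $n^{2}$.

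For the pug-noise term, $\E\|cW_{i}^{\top}(\hat u_{i}-u_{i})\|^{2}\le c^{2}\|W_{i}^{\top}\|^{2}\err_{1,k}(\pug)\le c^{2}(d/k)\err_{1,k}(\pug)$ since $\|W\|^{2}\le d/k$ almost surely. For the projection-variance term, the identity $W^{\top}W=(d/k)P_{k}$ implies the deterministic $\|W^{\top}u\|^{2}=\|W^{\top}Wv\|^{2}/\|Wv\|^{2}=d/k$, hence by unbiasedness $\E\|cW^{\top}u-v\|^{2}=c^{2}\,d/k-1$. Stirling's expansion $\Gamma(x+1/2)/\Gamma(x)=\sqrt{x}(1-1/(8x)+O(1/x^{2}))$ applied with $x=d/2$ and $x=k/2$ gives $c^{2}=1+O(1/k)$, so the pug-noise contribution is $\frac{c^{2}}{n}(d/k)\err_{1,k}(\pug)=\err_{n,d}(\pug)\cdot(c_{k,\diffp}/c_{d,\diffp})\cdot c^{2}=\err_{n,d}(\pug)\cdot\bigl(1+O((\diffp+\log k)/k)\bigr)$ by \Cref{prop:C_eps}, and the projection-variance contribution is at most $(c^{2}d/k-1)/n$, which when divided by $\err_{n,d}(\pug)=\Theta(d/(n\diffp))$ is $O(\diffp/k)$, also absorbed. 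Summing gives the claimed bound.

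The main obstacle is getting the constant $c$ right: identifying $\E\|Wv\|$ with a half-moment of a Beta distribution and matching it to the gamma-ratio in the algorithm. Once $c=1/\E\|Wv\|$ is established and the exact identity $\|W^{\top}u\|^{2}=d/k$ is recognized, the rest is algebra combined with the previously proved pug guarantees.
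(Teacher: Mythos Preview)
Your proposal is correct and follows essentially the same approach as the paper: the same two-term decomposition, unbiasedness via rotational symmetry to kill the cross terms, and \Cref{prop:C_eps} together with $c=1+O(1/k)$ to control the multiplicative loss. The only minor differences are that you compute $\E\|Wv\|$ via the half-moment of a $\mathrm{Beta}(k/2,(d-k)/2)$ variable (the paper instead invokes \Cref{lemma:unit-proj}), and you use the exact identity $\|W^{\top}u\|^{2}=d/k$ to get the projection variance as $c^{2}d/k-1$, whereas the paper bounds it more loosely by $c^{2}\E\|W^{\top}\|^{2}+1\le O(d/k)$; both lead to the same $O(\diffp/k)$ contribution after normalizing by $\err_{n,d}(\pug)$.
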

\begin{proof}
    The proof proceeds in the same way as Theorem~\ref{thm:err-jl-gen}. We break the error down into two terms:
    \begin{align*}
    \E\left[\ltwo{\hat \mu - \frac{1}{n}\sum_{i=1}^n v_i}^2\right] 
        & =  \E\left[\ltwo{\frac{1}{n}\sum_{i=1}^n \ts_i^\top \hat{u}_i - v_i}^2\right] \\
        & =  \E\left[\ltwo{\frac{1}{n}\sum_{i=1}^n \ts_i^\top \hat{u}_i - c\ts_i^\top u_i + c\ts_i^\top u_i - v_i}^2\right] \\
        & \stackrel{(i)}{=}  \frac{1}{n^2}\sum_{i=1}^n \E\left[\ltwo{\ts_i^\top \hat{u}_i - c\ts_i^\top u_i}^2\right] + \frac{1}{n^2} \E\left[\ltwo{\sum_{i=1}^n c\ts_i^\top u_i - v_i}^2\right] \\
        & \le \frac{c^2}{n} \max_{i \in [n]}\E\left[\ltwo{\ts_i^\top}^2\right] \cdot \err_{1,k}(\pug) + \frac{1}{n^2} \E\left[ \ltwo{\sum_{i=1}^n c\ts_i^\top u_i - v_i}^2\right].
    \end{align*}
    where $(i)$ follows from the fact that $\pug$ is unbiased and $\E[\hat u_i] = cu_i$.
    The first term is bounded in the same way as before noting that $c = 1+O(1/k)$ (see~\Cref{lemma:unit-proj}). To analyze the second term, we first show that $\E[c\ts_i^\top u_i] = v_i$ using a change of variables. Let $\ts'_i = \ts_i P_i^\top$ where $P_i$ is the rotation matrix such that $P_i v_i = e_1$, the first standard basis vector. Due to the rotational symmetry of the uniform distribution over rotation matrices, $\ts'_i$ is also a random rotation matrix. Note that $\ts_i = \ts'_i P_i$, hence

    \begin{align*}
    \E_{\ts_i}[c\ts_i^\top u_i] &= \E_{{\ts'_i}}\left[\frac{c}{\ltwo{\ts'_i P_i v_i}}P_i^\top{\ts'}_i^\top \ts'_i P_i v_i\right]\\
        &=c P_i^\top\E\left[\underbrace{\frac{1}{\ltwo{\ts'_i e_1}} {\ts'}_i^\top \ts'_i e_1}_z\right]
    \end{align*}
    Notice that $z_j = \frac{1}{\ltwo{\ts'_i e_1}} e_j^\top {\ts'}_i^\top \ts'_i e_1 = \langle \ts'_i e_j, \frac{1}{\ltwo{\ts'_i e_1}} \ts'_i e_1\rangle$. Because $\ts'_i$ is a random rotation matrix (re-scaled by $\sqrt{d/k}$), 
    \Cref{lemma:unit-proj} implies that $\E[z_1] = \sqrt{\frac{d}{k}}\frac{\Gamma\left(\left(k+1\right)/2\right)\Gamma\left(d/2\right)}{\Gamma\left(\left(d+1\right)/2\right)\Gamma\left(k/2\right)} = \frac{1}{c} = 1 + O(1/k)$ and $\E[z_j]=0$ for all $j>1$. Thus, $\E[z] = \frac{1}{c}e_1$ and $\E[c\ts_i^\top u_i] = P_i^\top e_1 = P_i^\top P_i v_i = v_i$.
    Because $c\ts_i^\top u_i$ is an unbiased estimator of $v_i$, we have
    \begin{align*}
    \E\left[ \ltwo{\sum_{i=1}^n c\ts_i^\top u_i - v_i}^2\right] &= \sum_{i=1}^n \E\left[ \ltwo{c\ts_i^\top u_i - v_i}^2\right]\\
    &=\sum_{i=1}^n\E\left[ \ltwo{c\ts_i^\top u_i}^2 +\ltwo{v_i}^2 - 2 c v_i^\top \ts_i^\top u_i\right]\\
    &=\sum_{i=1}^n\E\left[ \ltwo{c\ts_i^\top u_i}^2 +\ltwo{v_i}^2 - 2 c\ltwo{\ts_i^\top v_i}\right]\\
    &\le n + \sum_{i=1}^n c^2\E\left[ \ltwo{\ts_i^\top}^2\right] \\
    & \le O ( nd/k) .
    \end{align*}
    Combining all of these together, the claim follows by noting that $\err_{1,d}(\pug)/n = \err_{n,d}(\pug) = \Theta(d/n
    \diffp)$.
\end{proof}

\subsection{Nearly unbiased SRHT-based randomizers}
\label{sec:had-unb}
While rescaling by a constant was sufficient to debias the random rotation based randomizer, it is not clear whether such rescaling can debias the SRHT \pjl randomizer as it is not rotationally symmetric. To address this, we propose a different normalization technique for the SRHT randomizer which allows to provide tighter upper bounds on the bias. We provide the details for our new client and server protocols in~\Cref{alg:privJL-cl-unb} and~\Cref{alg:privJL-serv-unb}, respectively.

\begin{algorithm}
	\caption{Nearly Unbiased \pjl (client)}
	\label{alg:privJL-cl-unb}
	\begin{algorithmic}[1]
		\REQUIRE Input vector $v \in \reals^d$, Bias bound probability $\delta$.
            \STATE Randomly sample diagonal $D$ from the Rademacher distribution based on predefined seed 
            \STATE Sample $S \in \reals^{k \times d}$ where each row is chosen uniformly at random without replacement from standard basis vectors $\{e_1,\dots,e_d \}$
            \STATE Let $\ts = \sqrt{d/k} S H D$
            \STATE Set $C = 1 + \Theta(\sqrt{\log^2(k/\delta)/k})$
            \STATE Project the input vector $v_p = \ts v$
            \STATE Complete the norm then normalize:     
            \begin{equation*}
                u = 
                \begin{cases}
                 \frac{1}{\sqrt{C}} \left(v_p, \sqrt{ C- \ltwo{v_p}^2} \right),   & \text{if } \ltwo{v_p}^2 \le C \\
                 \left(\frac{v_p}{\ltwo{v_p}},0 \right), & \text{otherwise}
                \end{cases}
            \end{equation*}
            \STATE Let $\hat u = \pug(u)$
            \STATE Send $C, \hat u$ and (encoding of) $S$ to server
    	\end{algorithmic}
\end{algorithm}

\begin{algorithm}
	\caption{Nearly Unbiased \pjl (server)}
	\label{alg:privJL-serv-unb}
	\begin{algorithmic}[1]
            \STATE Receive $C$, $\hat u_1, \dots, \hat u_1$, from clients with (encodings of) transforms $S_1,\dots,S_n$
            \STATE Sample the diagonal matrices $D$ from the Rademacher distribution based on predefined seed 
            \STATE Let $U = H D$ for where $H \in \reals^{d \times d}$ is the Hadamard matrix
            \STATE Return the estimate 
            \begin{equation*}
                \hat \mu = \frac{\sqrt{C}}{n}  U^\top
            \sum_{i =1}^n S_i^\top \hat u_i[1:k]
            \end{equation*}
	\end{algorithmic}
\end{algorithm}



Let $\Rjlu$ denote the unbiased \pjl local randomizer of the client (\Cref{alg:privJL-cl-unb}), and $\Ajlu$ denote the server aggregation of unbiased \pjl (\Cref{alg:privJL-serv-unb}).
We have the following guarantees for this procedure.

\begin{theorem}
\label{thm:err-unb-fjl}
    Let $k \le d$ and $\delta = k/n^2d$. Assume $k \ge \max\{\diffp + \log k, \log^2(nd)\}$.
    Then for all unit vectors $v_1,\ldots,v_n \in \reals^d$, setting $\hat \mu =   \Ajlu \left( \Rjlu(v_1), \dots, \Rjlu(v_n) \right)$, the local randomizers $\Rjlu$ are $\diffp$-DP and
    \begin{equation*}
    \E\left[\ltwo{ \hat \mu -  \frac{1}{n}\sum_{i=1}^n  v_i  }^2 \right] \le \err_{n,d}(\pug) \cdot \left(1 + O\left( \frac{\diffp + \log k}{k} + \sqrt{\frac{\log^2(nd)}{k}} \right) \right) .
    \end{equation*}
\end{theorem}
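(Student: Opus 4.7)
Privacy is immediate: by construction $u$ is a unit vector in $\reals^{k+1}$ (either the completed vector has norm $1$, or the projected vector is renormalized to have norm $1$), so $\hat u = \pug(u)$ is $\diffp$-DP by the \pug guarantee, and everything else is post-processing (the transform $\ts$ does not depend on $v$). For utility, I would follow the template of \Cref{thm:err-jl-gen} but carefully exploit the ``norm completion'' trick. Let $E = \{\|\ts v\|^2 \le C\}$ be the good event with $C = 1 + \Theta(\sqrt{\log^2(k/\delta)/k})$; by the SRHT concentration (\Cref{cor:cnw}) we have $\P(E^c) \le \delta$. On $E$ we get the crucial identity $\sqrt{C}\, u[1:k] = v_p = \ts v$, so $\sqrt{C}\,\ts^\top u[1:k] = \ts^\top \ts v$, which is unbiased for $v$ since $\E[\ts^\top \ts]=I$.

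I would begin with the standard bias/variance split
\[
\E\left\|\hat\mu - \frac{1}{n}\sum_i v_i\right\|^2 = \frac{1}{n^2}\sum_i \E\left\|\sqrt{C}\,\ts_i^\top \hat u_i[1:k] - \E[\sqrt{C}\,\ts_i^\top \hat u_i[1:k]]\right\|^2 + \left\|\frac{1}{n}\sum_i \big(\E[\sqrt{C}\,\ts_i^\top \hat u_i[1:k]] - v_i\big)\right\|^2,
\]
using independence across clients. For each per-client variance term I would further split $\sqrt{C}\,\ts^\top \hat u[1:k] - v$ into the \pug-noise $\sqrt{C}\,\ts^\top(\hat u - u)[1:k]$ and the projection-noise $\sqrt{C}\,\ts^\top u[1:k] - v$. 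The first, conditional on $\ts$ and using that $\pug$ is unbiased with error $\err_{1,k+1}(\pug)$ on a unit vector, contributes at most $C \cdot \|\ts^\top\|^2 \cdot \err_{1,k+1}(\pug) \le C \cdot (d/k) \cdot c_{k+1,\diffp}(k+1)/\diffp$; invoking \Cref{prop:C_eps} and $C = 1 + O(\sqrt{\log^2(k/\delta)/k})$ turns this into $\err_{1,d}(\pug)\cdot(1 + O((\diffp+\log k)/k + \sqrt{\log^2(nd)/k}))$. The second, again splitting on $E$, reduces on $E$ to $\ts^\top\ts v - v$ whose squared norm has expectation at most $(d/k - 1)\|v\|^2$ since $\E[\ts^\top\ts\ts^\top\ts] = (d/k)I$ (a consequence of $D^2=I$ and the structure of $S^\top S$), and on $E^c$ is crudely bounded by $O((d/k)^2)$ weighted by $\delta$.

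For the bias term, since \pug is unbiased I have $\E[\sqrt{C}\,\ts^\top \hat u[1:k]] = \E[\sqrt{C}\,\ts^\top u[1:k]] = \E[\ts^\top \ts v \cdot \mathbf{1}_E] + \E[\sqrt{C}\,\ts^\top \ts v/\|\ts v\| \cdot \mathbf{1}_{E^c}]$. Subtracting $v = \E[\ts^\top \ts v]$ and writing $v = \E[\ts^\top\ts v\cdot\mathbf{1}_E] + \E[\ts^\top\ts v\cdot\mathbf{1}_{E^c}]$, the per-client bias has norm $O(\delta \cdot d/k)$ (since $\|\ts^\top\| \le \sqrt{d/k}$ on $E^c$ and $C = O(1)$). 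The aggregated squared bias is then $O((\delta d/k)^2)$, which with $\delta = k/(n^2 d)$ becomes $O(1/n^4)$ and is absorbed into the $o(1)$ correction to $\err_{n,d}(\pug) = \Theta(d/(n\diffp))$.

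The main obstacle is the bookkeeping around the ``else'' branch: I must ensure that the failure event $E^c$ is rare enough that (i) the per-client bias it introduces is dominated by the $\err_{n,d}(\pug)$ target after squaring and summing, and (ii) it does not inflate the variance more than the allowed $(1+o(1))$ factor. The choice $\delta = k/(n^2 d)$ is calibrated precisely for this: it makes $\sqrt{\log^2(k/\delta)/k} = \Theta(\sqrt{\log^2(nd)/k})$, which matches the stated error bound, while simultaneously killing the aggregated bias. The assumption $k \ge \max\{\diffp + \log k,\log^2(nd)\}$ guarantees both correction terms are at most $O(1)$, so that the multiplicative factor in front of $\err_{n,d}(\pug)$ really is $1 + o(1)$.
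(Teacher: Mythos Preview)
Your per-client analysis (the \pug-noise term, the projection-noise term via $\E[\ts^\top\ts\ts^\top\ts]=(d/k)I$, and the per-client bias via the event $E$) tracks the paper almost exactly. The gap is in your very first step: you invoke ``independence across clients'' to collapse the variance into $\frac{1}{n^2}\sum_i\mathrm{Var}(X_i)$, but the algorithm $\Rjlu$ in \Cref{alg:privJL-cl-unb} samples $D$ from a \emph{predefined seed}, and the server in \Cref{alg:privJL-serv-unb} uses a single $U^\top=(HD)^\top$ for aggregation. Thus $\ts_i=\sqrt{d/k}\,S_i HD$ and $\ts_j=\sqrt{d/k}\,S_j HD$ share the same $D$, and the per-client estimators are \emph{not} independent. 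Your bias/variance split is therefore missing all the cross-covariance terms $\mathrm{Cov}(X_i,X_j)$ for $i\ne j$, and nothing in your proposal bounds them.

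The paper does not use a bias/variance split. It expands $\E\|\hat\mu-\mu\|^2$ directly into diagonal terms $\frac{1}{n^2}\sum_i\E\|X_i-v_i\|^2$ and cross terms $\frac{1}{n^2}\sum_{i\ne j}\E\langle X_i-v_i,X_j-v_j\rangle$, and then bounds the cross terms explicitly. The key structural observation (which you are missing) is that on the good event, $\sqrt{C}\,\ts_i^\top u_i[1:k]-v_i=(\ts_i^\top\ts_i-I)v_i=(HD)^\top\bigl((d/k)S_i^\top S_i-I\bigr)(HD)v_i$, so the shared $HD$ appears only as a unitary conjugation and the inner product of two such terms reduces to a product of the \emph{independent} factors $(d/k)S_i^\top S_i-I$ and $(d/k)S_j^\top S_j-I$. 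Bounding $\|\E[(d/k)S_i^\top S_i-I\mid E]\|=O(\bar\delta d/k)$ (via $\E[(d/k)S_i^\top S_i]=I$ and $P(E^c)\le\bar\delta=n\delta$) then controls each cross term by $O(\bar\delta d/k)$, which with $\delta=k/(n^2d)$ is absorbed into the $O((\diffp+\log k)/k)$ correction. Without this argument, your proof is incomplete for the stated algorithm.
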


\begin{proof}

Note that $ \hat \mu = \frac{\sqrt{C}}{n} \sum_{i=1}^n \ts_i^\top \hat u_i[1:k]$. Thus we get 
    \begin{align}
    & \E \left[\ltwo{\hat \mu - \frac{1}{n} \sum_{i=1}^n v_i}^2 \right] \\
        & = \E\left[\ltwo{\frac{\sqrt{C}}{n} \sum_{i=1}^n \ts_i^\top \hat u_i[1:k]- v_i}^2 \right] \nonumber \\
        & = \frac{1}{n} \max_{i \in [n]} \E\left[\ltwo{ \sqrt{C}\ts_i^\top \hat u_i[1:k]- v_i}^2 \right] + \frac{1}{n^2} \sum_{ i \ne j}  \E\<\sqrt{C}\ts_i^\top \hat u_i[1:k]- v_i, \sqrt{C}\ts_j^\top \hat u_j[1:k]- v_j\>  \label{eq:main-err-unb} 
    \end{align}
    Now we upper bound both terms in~\eqref{eq:main-err-unb} separately. Note that~\Cref{cor:cnw} implies that with probability at least $1 - n\delta$ we have $\ltwo{\ts_i v_i}^2 \le (1+C_1\sqrt{\log^2(k/\delta)/k}) \ltwo{v_i}$ for all $i \in [n]$. We let $E$ denote the event that this event holds. Note that $P(E) \ge 1 - \bar \delta$ where $\bar \delta = n\delta$.

    We begin with the second term in~\eqref{eq:main-err-unb}. Let $C = 1 + C_1\sqrt{\log^2(k/\delta)/k}$ for appropriate constant $C_1>0$. Note that $\ts_i = \sqrt{d/k} S_i H D$, taking expecations over the randomness of the local randomizer, and noticing that \pug is unbiased, we have that for $i \ne j$
    \begin{align}
      \E\<\sqrt{C}\ts_i^\top \hat u_i[1:k]- v_i, \sqrt{C}\ts_j^\top \hat u_j[1:k]- v_j\>  = \E \<\sqrt{C}\ts_i^\top  u_i[1:k]- v_i, \sqrt{C}\ts_j^\top u_j[1:k]- v_j\>  
    \end{align}
    Since $\ts_i = \sqrt{d/k} S_i H D$ and $\ts_j = \sqrt{d/k} S_j H D$, we have
    \begin{align*}
    & \E\left[ \<\sqrt{C}\ts_i^\top  u_i[1:k]- v_i, \sqrt{C}\ts_j^\top  u_j[1:k]- v_j\>  \right] \\
        & = \E\left[ \<\ts_i^\top  \ts_i v_i - v_i, \ts_j^\top \ts_j v_j- v_j\> \mid E  \right] P(E) \\
        & + \E\left[ \<\sqrt{C} \ts_i^\top  \ts_i v_i / \ltwo{\ts_i v_i}- v_i, \sqrt{C}\ts_j^\top  \ts_j v_j /\ltwo{\ts_j v_j}- v_j\> \mid E^c \right] P(E^c) \\
        & \le v_i^T \E[ (\ts_i^\top \ts_i - I) (\ts_j^\top \ts_j - I) \mid E] v_j 
         + \bar \delta (2 Cd/k + 2) \\
        & \le v_i^T \E[ D H (d/k S_i^\top S_i  - I) (d/k S_j^\top S_j  - I) H D \mid E] v_j 
         + \bar \delta (2 Cd/k + 2) \\
         & \stackrel{(i)}{\le} \ltwo{\E[d/k S_i^\top S_i - I \mid E ]} \ltwo{\E[d/k S_j^\top S_j - I \mid E ]} + \bar \delta (2 Cd/k + 2)  \\
         & \le O( (\bar \delta d/k)^2 )  + \bar \delta (2 Cd/k + 2) 
         \le O(\bar \delta d/k),
    \end{align*}
    where inequality $(i)$ follows since $\ltwo{\E[d/k S_j^\top S_j - I \mid E ]} \le O(\bar \delta/k)$ since
    \begin{equation*}
    I 
        = \E[(d/k) S_i^\top S_i ] 
        = \E[(d/k) S_i^\top S_i \mid E] \P(E) + (1-\P(E))\E[(d/k) S_i^\top S_i \mid E^c]
    \end{equation*}
    which implies that 
    \begin{equation*}
    \E[(d/k) S_i^\top S_i \mid E] 
        = \frac{I - (1-\P(E))\E[(d/k) S_i^\top S_i \mid E^c]}{\P(E)}.
    \end{equation*}
    Since $\P(E) \ge 1-\bar \delta$ and $\ltwo{(d/k) S_i^\top S_i} \le (d/k)$, this shows that $\ltwo{\left({\E\left[(d/k) S_i^\top S_i \mid E \right]}  - I\right)  v} = O(\bar \delta d/k) $.

    Now we proceed to analyze the first term in~\eqref{eq:main-err-unb}. 
    Note that for any $i \in [n]$
    \begin{align*}
        \E\left[\ltwo{\sqrt{C} \ts_i^\top \hat u_i[1:k] - v_i}^2 \right] 
        & =  \E\left[\ltwo{\sqrt{C} \ts_i^\top \hat u_i[1:k] - \sqrt{C}\ts_i^\top u_i[1:k] + \sqrt{C}\ts_i^\top u_i[1:k] - v_i}^2 \right] \\
        & \stackrel{(i)}{=}   C \E\left[\ltwo{\ts_i^\top \hat u_i[1:k] - \ts_i^\top u_i[1:k]}^2 \right] + \E\left[ \ltwo{\sqrt{C}\ts_i^\top u_i[1:k] - v_i}^2 \right] \\
        &  \stackrel{(ii)}{\le} C \E\left[\ltwo{\ts_i^\top}^2 \right] \cdot C \cdot \err_{1,k+1}(\pug) + \E\left[ \ltwo{\sqrt{C} \ts_i^\top u_i[1:k] - v_i}^2 \right] .
    \end{align*}
    where $(i)$ follows since $\E[\hat u_i] = u_i$ as $\pug$ is unbiased, and $(ii)$ since \pug is applied for $k+1$ dimensional vectors of squared norm $C$, hence its error is $C \cdot \err_{1,k+1}(\pug)$. For the first term, as $\ltwo{\ts_i}^2 \le d/k$ and $C = 1 + C_1\sqrt{\log^2(k/\delta)/k}$, we have:
    \begin{align*}
    C^2 \ltwo{\ts_i^\top}^2 \cdot  \err_{1,k+1}(\pug)
        & \le C^2 \frac{d}{k} c_{k+1,\diffp} \frac{k+1}{\diffp} \\
        & = C^2 \frac{d}{\diffp} c_{d,\diffp} \frac{c_{k+1,\diffp}}{c_{d,\diffp}} (1 + 1/k) \\
        & = C^2 \frac{d}{\diffp} c_{d,\diffp} \cdot \left(1 + O\left( \frac{\diffp + \log k}{k} \right) \right) \\
        & = \err_{1,d}(\pug) \cdot \left(1 + O\left( \frac{\diffp + \log k}{k} + \sqrt{\frac{\log^2(k/\delta)}{k}} \right) \right),
    \end{align*}
    where the third step follows from~\Cref{prop:C_eps}. For the second term, we have
    \begin{align*}
    \E[ \ltwo{\sqrt{C} \ts_i^\top u_i[1:k] - v_i}^2]
        & = \E\left[ \ltwo{\sqrt{C} \ts_i^\top u_i[1:k] - \sqrt{C} \ts_i^\top \ts_i v_i + \sqrt{C} \ts_i^\top \ts_i v_i - v_i}^2 \right] \\
        & \le 2C \E\left[ \ltwo{\ts_i^\top u_i[1:k] - \ts_i^\top \ts_i v_i}^2 \right]  + 2 \E\left[ \ltwo{\sqrt{C} \ts_i^\top \ts_i v_i - v_i}^2 \right] \\
        & \le 2C \E\left[ \ltwo{\ts_i^\top u_i[1:k] - \ts_i^\top \ts_i v_i}^2 \right]  + 2 \E\left[ \ltwo{\sqrt{C} \ts_i^\top \ts_i v_i - \ts_i^\top \ts_i v_i}^2\right] \\
        & \quad + 2 \E\left[ \ltwo{\ts_i^\top \ts_i v_i - v_i}^2 \right] \\
        & \le 2C \E\left[ \ltwo{\ts_i^\top u_i[1:k] - \ts_i^\top \ts_i v_i}^2\right] + 2 (\sqrt{C}-1)^2 d/k + 2 (d/k - 1),
    \end{align*}   
    where the second inequality follows since $\E[\ts_i^\top \ts_i]= I$. Now we have
    \begin{align*}
    \E\left[ \ltwo{\ts_i^\top u_i[1:k] - \ts_i^\top \ts_i v_i}^2 \right]
        & = \E\left[ \ltwo{\ts_i^\top \ts_i v_i /\sqrt{C} - \ts_i^\top \ts_i v_i}^2 \mid E \right] \P(E) \\ 
        & \quad + \E\left[ \ltwo{\ts_i^\top \ts_i v_i /\ltwo{\ts_i v_i} - \ts_i^\top \ts_i v_i}^2 \mid E^c \right] \P(E^c) \\
        & \le (1/\sqrt{C}-1)^2 d/k + 2\bar \delta(d/k + (d/k)^2 ) \\
        & \le O\left( \frac{ d \sqrt{\log^2(k/\delta)}}{k^{3/2}}  + \frac{\bar \delta d^2}{k^2} \right).
    \end{align*}   
    Overall, putting these back in Inequality~\eqref{eq:main-err-unb}, we get
    \begin{align*}
    \E \left[\ltwo{\hat \mu - \frac{1}{n} \sum_{i=1}^n v_i}^2 \right]
        & = \frac{1}{n} \err_{1,d}(\pug) \cdot \left(1 + O\left( \frac{\diffp + \log k}{k} + \sqrt{\frac{\log^2(k/\delta)}{k}} \right) \right) \\ 
        & \quad + O \left( \frac{1}{n} \left( \frac{ d \sqrt{\log^2(k/\delta)}}{k^{3/2}}  + \frac{n \delta d^2}{k^2} + \frac{d}{k}  \right)
         +  ( n \delta d/k)^2 \right). 
    \end{align*}
    Noting that $\err_{1,d}(\pug)/n = \err_{n,d}(\pug) = c_{d,\diffp} \cdot \frac{d}{n\diffp}$ for some constant $c_{d,\diffp}$, this implies the theorem given that $\delta = k/n^2 d$.

\end{proof}

\section{\pjl~using Gaussian transforms}
\label{sec:gauss}
Building on the randomized projection framework of the previous section, in this section we instantiate it with the Gaussian transform. In particular, we sample $\ts \in \R^{k \times d}$ from the Gaussian distribution where $W$ has i.i.d. $\mathcal N(0, 1/k)$ entries. The following theorem states our guarantees for this distribution.
\begin{theorem}
\label{thm:err-gauss}
    Let $k \le d$ and $\ts \in \R^{k \times d}$ be sampled from the Gaussian distribution where $W$ has i.i.d. $\mathcal N(0, 1/k)$ entries.
    Then for all unit vectors $v_1,\ldots,v_n \in \reals^d$, setting $\hat \mu =  \Ajl \left( \Rjl(v_1), \dots, \Rjl(v_n) \right) $, 
    \begin{align*}
    \E\left[\ltwo{ \hat \mu - \frac{1}{n}\sum_{i=1}^n v_i  }^2 \right] 
    \le  \err(\pug_d,n) \left(1 +   O\left( \sqrt{\frac{k}{d}} + \frac{\diffp + \log k}{k} \right) \right)
      + O \left({\frac{1}{k^2}} \right).
    \end{align*}
\end{theorem}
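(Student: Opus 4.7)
The plan is to apply \Cref{thm:err-jl-gen} to the Gaussian ensemble and verify its two hypotheses with $\beta_{\mathcal{W}} = O(\sqrt{d/k})$ and $\alpha_{\mathcal{W}} = O(1/k^2)$. With these choices the general bound yields error $\err_{n,d}(\pug)\cdot\bigl(1 + O(\sqrt{k/d}) + O((\diffp+\log k)/k)\bigr) + O(1/k^2)$, which matches the claim, since $k\beta_{\mathcal{W}}/d = O(\sqrt{k/d})$.

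For the operator-norm condition I would invoke standard non-asymptotic spectral bounds for Gaussian matrices (Bai--Yin / Vershynin): for $\ts\in\R^{k\times d}$ with i.i.d.\ $\normal(0,1/k)$ entries, $\E[\|\ts\|] \le 1 + \sqrt{d/k}$ with sub-Gaussian concentration on scale $O(1/\sqrt{k})$. Squaring and taking expectation gives $\E[\|\ts^\top\|^2] \le d/k + 2\sqrt{d/k} + O(1) = d/k + O(\sqrt{d/k})$ using $k\le d$, so the first hypothesis holds with $\beta_{\mathcal{W}} = O(\sqrt{d/k})$.

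For the bias I would mirror the argument in \Cref{prop:rotation-prop}, exploiting the rotational invariance of the Gaussian ensemble. Let $P$ be the rotation with $Pv = e_1$ and set $\ts' = \ts P^\top$, which again has i.i.d.\ $\normal(0,1/k)$ entries. Then
\[
\E\!\left[\frac{\ts^\top\ts v}{\|\ts v\|}\right] = P^\top \E\!\left[\frac{{\ts'}^\top \ts' e_1}{\|\ts' e_1\|}\right].
\]
Writing $z_j = \langle \ts' e_j,\, \ts' e_1/\|\ts' e_1\|\rangle$, flipping the sign of the $j$th column of $\ts'$ for $j\ne 1$ preserves its joint distribution but flips the sign of $z_j$, forcing $\E[z_j]=0$. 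Meanwhile $z_1 = \|\ts' e_1\|$, and since $k\|\ts' e_1\|^2$ is chi-squared with $k$ degrees of freedom,
\[
\E[z_1] = \sqrt{2/k}\,\frac{\Gamma((k+1)/2)}{\Gamma(k/2)} = 1 - \Theta(1/k)
\]
by Stirling. Hence $\E[z] = (1 - \Theta(1/k))e_1$, which after multiplying by $P^\top$ gives bias $O(1/k)$ and therefore $\alpha_{\mathcal{W}} = O(1/k^2)$.

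The main obstacle is obtaining the $O(1/k)$ (rather than $O(\log(k)/\sqrt{k})$) bias bound; this sharper rate relies on the exact $\Gamma$-function identity for $\E\|\ts' e_1\|$, which in turn requires full rotational invariance of the ensemble and does not carry over to SRHT (explaining the weaker bias in \Cref{prop:HD-prop} and the extra $\log^2 d/k$ additive term in \Cref{thm:err-fjl}). The operator-norm verification is essentially routine once the standard Gaussian spectral bounds are invoked.
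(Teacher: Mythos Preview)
Your proposal is correct and follows essentially the same route as the paper: both reduce to \Cref{thm:err-jl-gen} and verify its two hypotheses via standard Gaussian singular-value concentration for $\beta_{\mathcal{W}}=O(\sqrt{d/k})$ (the paper cites Davidson--Szarek, you cite Bai--Yin/Vershynin, but these yield the same bound) and via rotational invariance plus the $\chi$-distribution mean identity for the $O(1/k)$ bias. Your sign-flip argument for $\E[z_j]=0$ when $j\ne 1$ is a slightly more explicit justification than the paper gives, but otherwise the arguments coincide.
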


The proof follows directly from~\Cref{thm:err-jl-gen} and the following proposition which proves certain properties of the Guassian transform.
\begin{proposition}
\label{prop:gaussian-prop}
    Consider $W\in\reals^{k\times d}$ with i.i.d. $\mathcal N(0, 1/k)$ entries and a fixed $v\in\reals^d$. Then
    \begin{enumerate}
        \item Bounded operator norm: 
        \begin{equation*}
            \E \|W^\top\|^2 \le \frac{d}{k} \left(1+ O\left(\sqrt{\frac{k}{d}} \right) \right).
        \end{equation*}
        \item Bounded bias: for every unit vector $v \in \reals^d $
        \begin{equation*}
            \left\| \E \frac{\ts^\top \ts v}{\|\ts v\|} - v \right\| = O({1/k}). 
        \end{equation*}
    \end{enumerate}
\end{proposition}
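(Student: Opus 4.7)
\textbf{Proof proposal for Proposition~\ref{prop:gaussian-prop}.} My plan is to handle the two items separately, each relying on standard Gaussian matrix tools.

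For the operator norm bound, I would pass to the normalized Gaussian matrix $G = \sqrt{k}\,W$, whose entries are i.i.d.\ $\mathcal N(0,1)$. Then $\|W\| = \|G\|/\sqrt{k}$, and by the Gordon--Davidson--Szarek bound, $\E\|G\| \le \sqrt{k}+\sqrt{d}$. Moreover, the map $G\mapsto \|G\|$ is $1$-Lipschitz with respect to the Frobenius norm, so Gaussian concentration yields $\mathrm{Var}(\|G\|) \le 1$. Combining these,
\begin{equation*}
    \E\|W^\top\|^2 = \E\|W\|^2 = \frac{1}{k}\bigl((\E\|G\|)^2 + \mathrm{Var}(\|G\|)\bigr) \le \frac{(\sqrt{k}+\sqrt{d})^2 + 1}{k} = \frac{d}{k}\!\left(1 + O\!\left(\sqrt{\tfrac{k}{d}}\right)\right),
\end{equation*}
where the last step uses $d \ge k$ so that the $\sqrt{d/k}$ term dominates the $O(1/k)$ correction.

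For the bias bound, I would exploit the rotational invariance of the Gaussian ensemble to reduce to a single canonical direction. Specifically, if $P$ is any orthogonal matrix with $Pv = e_1$, then $W' := W P^\top$ is again an i.i.d.\ $\mathcal N(0,1/k)$ matrix, and $W = W' P$. A change of variables gives
\begin{equation*}
    \E\!\left[\frac{W^\top W v}{\|Wv\|}\right] = P^\top\,\E\!\left[\frac{{W'}^{\!\top} W' e_1}{\|W' e_1\|}\right],
\end{equation*}
so it suffices to analyze the case $v = e_1$. Writing the columns of $W'$ as $w_1,\dots,w_d$, the $j$-th coordinate of the expectation is $\E[w_j^\top w_1/\|w_1\|]$. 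For $j\ne 1$ this vanishes by conditioning on $w_1$ and using $\E[w_j]=0$. For $j=1$ it equals $\E[\|w_1\|]$, and since $\sqrt{k}\,\|w_1\|$ is a chi random variable with $k$ degrees of freedom we have
\begin{equation*}
    \E[\|w_1\|] = \sqrt{\tfrac{2}{k}}\,\frac{\Gamma((k+1)/2)}{\Gamma(k/2)} = 1 - O(1/k)
\end{equation*}
by Stirling. Hence $\E[W^\top W e_1/\|W e_1\|] = (1-O(1/k))\, e_1$, and undoing the rotation gives $\|\E[W^\top Wv/\|Wv\|] - v\| = O(1/k)$.

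The main (mild) obstacle is making sure the operator-norm calculation gives the right form $1 + O(\sqrt{k/d})$ rather than, e.g., $1 + O(1)$: this is where using both the expectation bound from Gordon and the Gaussian concentration variance bound is essential, since naively squaring the expectation bound of $\sqrt{d/k}+1$ produces the correct leading term $d/k$ with additive correction $2\sqrt{d/k} = (d/k)\cdot O(\sqrt{k/d})$. The bias bound is essentially a bookkeeping exercise once rotational symmetry is invoked and the chi-moment asymptotics are applied.
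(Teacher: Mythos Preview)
Your proposal is correct and matches the paper's approach essentially line for line: item~1 via the Gordon/Davidson--Szarek bound plus Gaussian concentration to control $\E\|G\|^2$, and item~2 via rotational invariance reducing to $v=e_1$ followed by the chi-moment asymptotic. The only cosmetic difference is that the paper routes item~1 through the median of $\|G\|$ (using the tail bound to locate the median, then subGaussian concentration around it) whereas you use $\E\|G\|$ directly, which is if anything slightly cleaner.
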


\begin{proof}

For the first item, we rely on standard results in random matrix theory. If we let Z denote the top singular value of $\sqrt{k}W^\top$, then (\citealp{DavidsonS03}, Theorem 2.13) shows that for any $t$, $\Pr(Z > \sqrt{d}+\sqrt{k} + t) < \exp(- t^2 / 2)$. This implies that $median(Z) \leq \sqrt{d}+\sqrt{k} + 2$. Further, by the isoperimetric inequality, $Z$ is concentrated around its median with subGaussian tails, so that the second moment of $Z-median(Z)$ is at most $O(1)$. Thus the second moment of  $Z$ is at most $median(Z)^2 + O(1) \leq (\sqrt{d}+\sqrt{k} + 2)^2 + O(1)$. Scaling this by $k$, we conclude that $\E[\|W^\top\|_{op}^2] \leq \frac{d}{k}(1+ 2\sqrt{\frac{k}{d}} + \frac{O(1)}{k})$.

For the second item, we use a change of variables. Let $\ts' = \ts P^\top$ where $P$ is the rotation matrix such that $P v = e_1$, the first standard basis vector. Recall that the rotation matrix $P$ is orthogonal i.e. $P^\top = P^{-1}$. Due to the rotational symmetry of the normal distribution, $\ts'$ is a random matrix with i.i.d. $\mathcal N(0, 1/k)$ entries. Note that $\ts = \ts' P$.
\begin{align*}
    \E_{\ts}[\ts^\top u] &= \E_{{\ts'}}\left[\frac{1}{\ltwo{\ts' P v}}P^\top{\ts'}^\top \ts' P v\right]\\
        &=P^\top\E\left[\underbrace{\frac{1}{\ltwo{\ts' e_1}} {\ts'}^\top \ts' e_1}_z\right]
\end{align*}
Notice that $z_j = \frac{1}{\ltwo{\ts' e_1}} e_j^\top {\ts'}^\top \ts' e_1 = \langle \ts' e_j, \frac{1}{\ltwo{\ts' e_1}} \ts' e_1\rangle$. Because $\ts'$ has i.i.d. $\mathcal N(0, 1/k)$ entries, $z_1 = \ltwo{\ts' e_1}$ is $1/\sqrt{k}$ times a $\chi$ random variable with $k$ degrees of freedom. We let $\frac{1}{c} = \E[z_1] = \frac{1}{\sqrt{k}}\cdot \frac{\sqrt{2}\Gamma((k+1)/2)}{\Gamma(k/2)} = 1- O(1/k)$ and $\E[z_j]=0~\forall j>1$. Thus, $\E[z] = \frac{1}{c}e_1$ and $\E[\ts_i^\top u_i] = \frac{1}{c}P_i^\top e_1 = \frac{1}{c}P^\top P v = \frac{1}{c}v$. Therefore, $\ltwo{\E[\frac{\ts^\top \ts v}{\ltwo{\ts v}} - v} = \ltwo{|1/c-1| v} = O(1/k)$.
\end{proof}

\section{Helper Lemmas}
\subsection{Helper lemmas for random rotations}

\begin{lemma}
\label{lemma:unit-proj}
Let $x$ be a random unit vector on the unit ball of $\reals^d$ and $z$ be the projection of $x$ on to the last $k$ coordinates. We have
\[
    \left|\E[\|z\|]-\sqrt{k/d}\right| = O\left(\frac{1}{\sqrt{kd}}\right)
\]
\end{lemma}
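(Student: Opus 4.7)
The plan is to compute $\E[\|z\|]$ explicitly in closed form using Gamma functions and then control the deviation from $\sqrt{k/d}$ via the standard asymptotic expansion of the Gamma ratio $\Gamma(x+1/2)/\Gamma(x)$.

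First I would realize $x$ as $g/\|g\|$ for $g \sim \mathcal{N}(0, I_d)$, so that $z$ has the distribution of $g_{[k]}/\|g\|$, where $g_{[k]}$ denotes the restriction to the last $k$ coordinates. Since $\|g_{[k]}\|^2$ and $\|g\|^2 - \|g_{[k]}\|^2$ are independent chi-squared variables with $k$ and $d-k$ degrees of freedom respectively, $\|z\|^2$ is distributed as $\mathrm{Beta}(k/2, (d-k)/2)$. Hence $\|z\|$ is distributed as $\sqrt{B}$ with $B \sim \mathrm{Beta}(k/2, (d-k)/2)$, and a direct Beta integral gives
\begin{equation*}
\E[\|z\|] \;=\; \frac{B\!\left(\tfrac{k+1}{2}, \tfrac{d-k}{2}\right)}{B\!\left(\tfrac{k}{2}, \tfrac{d-k}{2}\right)} \;=\; \frac{\Gamma((k+1)/2)\,\Gamma(d/2)}{\Gamma(k/2)\,\Gamma((d+1)/2)} .
\end{equation*}

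Next I would invoke the classical asymptotic expansion $\Gamma(x+1/2)/\Gamma(x) = \sqrt{x}\bigl(1 - \tfrac{1}{8x} + O(1/x^2)\bigr)$, applied with $x = k/2$ in the numerator and $x = d/2$ in the denominator. This yields
\begin{equation*}
\frac{\Gamma((k+1)/2)}{\Gamma(k/2)} = \sqrt{k/2}\,\bigl(1 - \tfrac{1}{4k} + O(1/k^2)\bigr),
\quad
\frac{\Gamma(d/2)}{\Gamma((d+1)/2)} = \sqrt{2/d}\,\bigl(1 + \tfrac{1}{4d} + O(1/d^2)\bigr),
\end{equation*}
so multiplying gives $\E[\|z\|] = \sqrt{k/d}\bigl(1 + \tfrac{1}{4d} - \tfrac{1}{4k} + O(1/k^2)\bigr)$. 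Subtracting $\sqrt{k/d}$ and using the hypothesis $k \le d$ (so that $\sqrt{k/d}/d \le \sqrt{k/d}/k = 1/\sqrt{kd}$), the error is $O(1/\sqrt{kd})$, as claimed.

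The only mild obstacle is justifying the asymptotic in the regime where $k$ might be small (say $k = O(1)$). For small $k$ both the stated bound and the exact ratio are $\Theta(1)$ so the inequality holds trivially after absorbing a constant; alternatively, one can simply verify the bound $|\Gamma(x+1/2)/(\sqrt{x}\Gamma(x)) - 1| \le C/x$ uniformly for $x \ge 1/2$ via log-convexity of $\Gamma$, which handles every $k \ge 1$ uniformly. Apart from that, the proof is a routine Gamma-ratio computation.
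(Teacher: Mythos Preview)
Your proof is correct and reaches the same closed-form expression
\[
\E[\|z\|] = \frac{\Gamma((k+1)/2)\,\Gamma(d/2)}{\Gamma(k/2)\,\Gamma((d+1)/2)}
\]
that the paper obtains, but by a genuinely different route. The paper parameterizes the sphere in spherical coordinates, writes $\|z\| = \sin\phi_1\cdots\sin\phi_{d-k}$, and integrates each factor against the surface-area element using the identity $\int_0^\pi \sin^n x\,dx = \sqrt{\pi}\,\Gamma((n+1)/2)/\Gamma(1+n/2)$; the telescoping product then collapses to the same Gamma ratio. Your argument via the Gaussian realization $x = g/\|g\|$ and the observation that $\|z\|^2 \sim \mathrm{Beta}(k/2,(d-k)/2)$ is more streamlined and avoids the coordinate calculation entirely, at the cost of invoking a distributional fact (which is standard). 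For the asymptotic step, the paper passes through the Legendre duplication formula and Stirling's approximation, whereas you apply the Gamma-ratio expansion $\Gamma(x+1/2)/\Gamma(x) = \sqrt{x}(1-1/(8x)+O(1/x^2))$ directly; both are equivalent, and your version is a bit shorter. Your remark about small $k$ is also well taken: the paper's Stirling step tacitly assumes $k$ is not too small, and your observation that the bound is trivially $\Theta(1)$ in that regime (or can be handled uniformly via log-convexity) makes the argument cleaner.
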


\begin{proof}
We represent $d$ dimensional vector $x$ using spherical coordinates as follows.

\begin{align*}
x_{1} & =\cos\left(\phi_{1}\right)\\
x_{2} & =\sin\left(\phi_{1}\right)\cos\left(\phi_{2}\right)\\
 & \ldots\\
x_{d-1} & =\sin\left(\phi_{1}\right)\cdots\sin\left(\phi_{d-2}\right)\cos\left(\phi_{d-1}\right)\\
x_{d} & =\sin\left(\phi_{1}\right)\cdots\sin\left(\phi_{d-2}\right)\sin\left(\phi_{d-1}\right)
\end{align*}

The squared length of the projection is

\[
\sum_{i=d-k+1}^{d}x_{i}^{2}=\sin^{2}\left(\phi_{1}\right)\cdots\sin^{2}\left(\phi_{d-k}\right)
\]

Recall the surface area element is $\sin^{d-2}\left(\phi_{1}\right)\sin^{d-3}\left(\phi_{2}\right)\cdots\sin\left(\phi_{d-2}\right)d\phi_{1}\cdots d\phi_{d-1}$.

For $k\ge2$, the expected length is

\[
\frac{\int_{0}^{\pi}\cdots\int_{0}^{\pi}\int_{0}^{2\pi}\left(\sin\left(\phi_{1}\right)\cdots\sin\left(\phi_{d-k}\right)\right)\sin^{d-2}\left(\phi_{1}\right)\sin^{d-3}\left(\phi_{2}\right)\cdots\sin\left(\phi_{d-2}\right)d\phi_{1}\cdots d\phi_{d-1}}{S_{d-1}}
\]

where $S_{d-1}$ is the surface area of the unit ball, which is $S_{d-1}=\frac{2\pi^{d/2}}{\Gamma\left(d/2\right)}$.

We first evaluate the integral for each sine power.
\begin{claim}
For integer $n \ge 1$ we have 
\begin{align*}
\int_{0}^{\pi}\sin^{n}xdx & =\frac{\Gamma((n+1)/2)}{\Gamma\left(1+n/2\right)}\sqrt{\pi}
\end{align*}
\end{claim}
\begin{proof}
For $n\ge 2$, we have
\begin{align*}
    \int\sin^{n}xdx &= -\int\sin^{n-1}x d(\cos x)\\
    &= -\sin^{n-1}x\cos x + (n-1)\int\sin^{n-2}x \cos^2 x dx\\
    &= -\sin^{n-1}x\cos x + (n-1)\int\sin^{n-2}x (1-\sin^2 x) dx\\
\end{align*}
Thus,
\begin{align*}
    \int_0^{\pi}\sin^{n}xdx &= \frac{n-1}{n}\int_0^{\pi}\sin^{n-2}xdx-\frac{\sin^{n-1}x\cos x}{n}\Big|_0^{\pi}\\
    & =\frac{(n-1)/2}{n/2}\int_{0}^{\pi}\sin^{n-2}xdx
\end{align*}
The claim then follows using induction with base cases $\int_{0}^{\pi}\sin xdx=2$ and $\int_{0}^{\pi}dx=\pi$.
\end{proof}

\begin{align*}
 & \int_{0}^{\pi}\cdots\int_{0}^{\pi}\int_{0}^{2\pi}\left(\sin\left(\phi_{1}\right)\cdots\sin\left(\phi_{d-k}\right)\right)\sin^{d-2}\left(\phi_{1}\right)\sin^{d-3}\left(\phi_{2}\right)\cdots\sin\left(\phi_{d-2}\right)d\phi_{1}\cdots d\phi_{d-1}\\
= & 2\pi\int_{0}^{\pi}\cdots\int_{0}^{\pi}\sin^{d-1}\left(\phi_{1}\right)\cdots\sin^{k}\left(\phi_{d-k}\right)\sin^{k-2}\left(\phi_{d-k+1}\right)\cdots\sin\left(\phi_{d-2}\right)d\phi_{1}\cdots d\phi_{d-2}\\
= & 2\pi\frac{\Gamma(d/2)}{\Gamma\left((d+1)/2\right)}\sqrt{\pi}\cdots\frac{\Gamma((k+1)/2)}{\Gamma\left(\left(k+2\right)/2\right)}\sqrt{\pi}\cdot\frac{\Gamma((k-1)/2)}{\Gamma\left(k/2\right)}\sqrt{\pi}\cdots\frac{\Gamma\left(1\right)}{\Gamma\left(3/2\right)}\sqrt{\pi}\\
= & 2\pi^{d/2}\frac{\Gamma\left(\left(k+1\right)/2\right)}{\Gamma\left(\left(d+1\right)/2\right)\Gamma\left(k/2\right)}
\end{align*}

The expected length is 
\begin{align*}
    \E[\|z\|] &=\frac{\Gamma\left(\left(k+1\right)/2\right)\Gamma\left(d/2\right)}{\Gamma\left(\left(d+1\right)/2\right)\Gamma\left(k/2\right)}\\
    &=\frac{\Gamma(k)2^d \Gamma(d/2)^2}{2^k \Gamma(k/2)^2 \Gamma(d)}\\
    &=\frac{\sqrt{k}\left(1-\frac{1}{4k}+O\left(1/k^{2}\right)\right)}{\sqrt{d}\left(1-\frac{1}{4d}+O\left(1/d^{2}\right)\right)}
\end{align*}
In the second line, we use the Legendre duplication formula $\Gamma(k/2)\Gamma((k+1)/2)=2^{1-k}\sqrt{\pi}\Gamma(k)$. In the third line, we use the Stirling's approximation $\Gamma(z) = \sqrt{2\pi/z} (z/e)^z (1+1/(12z) + O(1/z^2))$.
\end{proof}

\subsection{SRHT Analysis}
The Subsampled Randomized Hadamard Transform (SRHT) is the random matrix ensemble defined as $W = \sqrt{\frac dk} SHD$. Here $D\in\reals^{d\times d}$ is a diagonal matrix with independent uniform $\pm 1$ values on its diagonal, and $H\in\reals^{d\times d}$ is the normalized Hadamard transform ($H_{i,j} = (-1)^{\langle v(i),v(j)\rangle}/\sqrt d$, where $v(i)$ is the $(\log_2 d)$-dimensional vector obtained by writing $i$ in binary). The matrix $S\in\reals^{k\times d}$ is a sampling matrix. The fact that the SRHT preserves the Euclidean norm of any fixed vector with large probability has been known for some time \cite{AilonCh09,CohenNW16,Sarlos06}, though different works have analyzed slightly different variants of the SRHT, all having to do with how $S$ is defined.

In this work, we make use of the SRHT in which $S$ samples without replacement: that is, each row of $S$ has a $1$ in a uniformly random entry and zeroes elsewhere, and no two rows of $S$ are equal. The tightest known analysis of the SRHT \cite{CohenNW16} analyzes the SRHT with a different sampling matrix: $S_\eta = \mathop{diag}(\eta)$, where $\eta_1,\ldots,\eta_d$ are independent Bernoulli random variables each with expectation $k/d$ (so that we sample a {\it random} number of rows from $HD$, which is equal to $k$ only in expectation).

The following is a special case of Theorem 9 in the full version of \cite{CohenNW16}

\begin{theorem}[{\cite{CohenNW16}}]\label{thm:cnw}
    Suppose $W = \sqrt{\frac dk}S_\eta HD$ for $S=\mathop{diag}(\eta)$, where $\eta_1,\ldots,\eta_d$ is a sequence of independent, uniform Bernoulli random variables each with expectation $k/d$. Then for some constant $C>0$, for any fixed $u\in\reals^d$ of unit Euclidean norm and $\delta\in(0,1)$,
    $$
    \Pr_{\eta,D}(| \|W u\|_2^2 - 1 | > C\sqrt{\log(1/\delta)\log(k/\delta)/k}) < \delta
    $$
\end{theorem}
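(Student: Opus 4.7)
The plan is a two-step conditioning argument combining Bernstein's inequality (over the sampling $\eta$) with Hoeffding/Khintchine-type sub-Gaussian control (over the signs $D$). Set $y := HDu \in \reals^d$; since $HD$ is orthogonal, $\|y\|_2^2 = 1$ deterministically, and
$$
\|Wu\|_2^2 \;=\; \frac{d}{k}\sum_{i=1}^d \eta_i\,y_i^2, \qquad \mathbb{E}_\eta\!\left[\|Wu\|_2^2 \,\middle|\, D\right] \;=\; 1.
$$
So the deviation of interest reduces to concentration of a weighted Bernoulli sum around its mean, with weights $y_i^2$ that are themselves random through $D$.

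First, I would condition on $D$ and apply Bernstein's inequality to the independent centered summands $Z_i := (d/k)(\eta_i - k/d)\,y_i^2$, which satisfy $|Z_i| \le (d/k)\|y\|_\infty^2$ and $\sum_i \mathrm{Var}(Z_i) \le (d/k)\,\|y\|_\infty^2\,\|y\|_2^2 = (d/k)\|y\|_\infty^2$. This yields
$$
\Pr_\eta\!\Big(\big|\|Wu\|_2^2 - 1\big| > t \,\Big|\, D\Big) \;\le\; 2\exp\!\left(-c\cdot\frac{t^2\,k}{d\,\|y\|_\infty^2\,(1+t)}\right).
$$
Next, I would control $\|y\|_\infty$ using randomness in $D$. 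Each $y_i = \sum_j H_{ij}D_{jj}u_j$ is a Rademacher sum whose coefficient vector $(H_{ij}u_j)_j$ has $\ell_2$-norm exactly $1/\sqrt d$, so Hoeffding gives $\Pr(|y_i| > s) \le 2\exp(-ds^2/2)$. Chaining the two steps with any $\|y\|_\infty$ bound of the form $\sqrt{M/d}$ produces the tail $\Pr(|\|Wu\|_2^2-1| > C\sqrt{M\log(1/\delta)/k}) \le \delta$; for the stated result I need $M = O(\log(k/\delta))$.

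The main obstacle is obtaining $\log(k/\delta)$ rather than the naive $\log(d/\delta)$ in the infinity-norm control, since a straight union bound over all $d$ coordinates only gives the latter. The key observation is that only the sampled coordinates $T := \{i : \eta_i = 1\}$ enter $\|Wu\|_2^2$, and by a Chernoff bound on the Bernoulli sum one has $|T| \le 2k$ with probability at least $1 - e^{-k/4} \ge 1-\delta/3$. Conditional on $\eta$ (independent of $D$), union-bounding the Hoeffding tail only over the at most $2k$ coordinates of $T$ gives $\max_{i\in T}|y_i|^2 \le C_1\log(k/\delta)/d$ with probability at least $1-\delta/3$ over $D$.

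Finally, I would combine the pieces. The subtle point is that the Bernstein step wants randomness in $\eta$, yet the sharpened norm control conditions on $\eta$. To reconcile this, I would use a truncation: define $\tilde y_i := y_i\,\mathbf{1}[|y_i|\le \tau]$ with $\tau = C_1\sqrt{\log(k/\delta)/d}$, apply the Bernstein bound above to $(d/k)\sum_i \eta_i \tilde y_i^2$ with $\|\tilde y\|_\infty \le \tau$, and argue that on the good event from the previous paragraph the truncation is vacuous on the support $T$, so that the truncated sum equals $\|Wu\|_2^2$. Choosing $t = C\sqrt{\log(1/\delta)\log(k/\delta)/k}$ makes the Bernstein exponent $\Omega(\log(1/\delta))$, yielding a failure probability $\le \delta/3$, and a union bound over the three bad events completes the argument. (An alternative, and the route used by Cohen--Nelson--Woodruff, is to estimate $\mathbb{E}[(\|Wu\|_2^2-1)^p]$ jointly over $\eta$ and $D$ via a moment method and apply Markov, sidestepping the conditioning-order issue entirely.)
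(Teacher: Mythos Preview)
The paper does not actually prove this theorem; it is quoted as a special case of Theorem~9 of \cite{CohenNW16} and used as a black box. Your parenthetical remark is accurate: Cohen--Nelson--Woodruff obtain the $\log(k/\delta)$ dependence via a moment method, bounding $\E\big[(\|Wu\|_2^2-1)^p\big]$ jointly over $\eta$ and $D$ and applying Markov with an optimized $p$, which sidesteps the conditioning-order issue you describe.

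Your two-step Bernstein/Hoeffding route is the natural elementary approach, and you correctly isolate the obstruction (a naive union bound gives $\log(d/\delta)$) and the remedy (only the $O(k)$ sampled coordinates matter). However, the truncation step as written has a gap in the lower tail. Conditional on $D$, Bernstein concentrates $A:=(d/k)\sum_i \eta_i\,\tilde y_i^2$ around its \emph{conditional mean} $\E_\eta[A\mid D]=\|\tilde y\|_2^2$, not around $1$. On your good event you correctly have $\|Wu\|_2^2=A$, hence $|\|Wu\|_2^2-\|\tilde y\|_2^2|\le t$; since $\|\tilde y\|_2^2\le 1$ this already yields the upper tail $\|Wu\|_2^2\le 1+t$. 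But for the lower tail you still need to control
\[
1-\|\tilde y\|_2^2=\sum_{i:\,|y_i|>\tau} y_i^2,
\]
which is a sum over all $d$ coordinates and is exactly the place where the unwanted $\log d$ threatens to reappear. The gap is fixable: using sub-Gaussianity of each $y_i$ one computes $\E_D\big[y_i^2\mathbf 1[|y_i|>\tau]\big]=O\big(d^{-1}\log(k/\delta)\,(\delta/k)^{c\,C_1}\big)$, so $\E_D[1-\|\tilde y\|_2^2]=O\big(\log(k/\delta)\,(\delta/k)^{c\,C_1}\big)$, and Markov then gives $\Pr_D(1-\|\tilde y\|_2^2>t)\le \delta/3$ for $C_1$ a large enough constant. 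But this additional step is needed and is not in your write-up.
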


An analysis of the SRHT using sampling without replacement then follows as a corollary.

\begin{corollary}\label{cor:cnw}
Suppose $W = \sqrt{\frac dk} SHD$ is obtained with $S$ being a $k\times d$ sampling matrix without replacement. Then for some constant $C>0$, for any fixed $u\in\reals^d$ of unit Euclidean norm and $\delta\in(0,1)$,
 $$
    \Pr_{\eta,D}(| \|W u\|_2^2 - 1 | > C\sqrt{\log^2(k/\delta)/k}) < \delta
    $$
\end{corollary}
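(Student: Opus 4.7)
My plan is to derive the without-replacement SRHT bound from the Bernoulli-sampling version (Theorem \ref{thm:cnw}) by a simple conditioning argument.

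First, I would observe that the two sampling schemes are closely related. Let $\eta = (\eta_1,\ldots,\eta_d)$ be i.i.d.\ Bernoulli$(k/d)$ as in Theorem \ref{thm:cnw}, and set $K := \sum_i \eta_i$. Conditional on $\{K=k\}$, the support $\{i:\eta_i=1\}$ is uniformly distributed over the $\binom{d}{k}$ size-$k$ subsets of $[d]$, which is exactly the distribution of the rows selected by the without-replacement sampling matrix $S$ in the corollary. Since the normalization $\sqrt{d/k}$ and the $HD$ factor are identical in the two ensembles, we have the equality of conditional and marginal distributions
\[
\|Wu\|_2^2 \;\stackrel{d}{=}\; \bigl(\|W_\eta u\|_2^2 \,\bigm|\, K=k\bigr).
\]

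Second, I would use the trivial inequality $\Pr(A\mid B)\le \Pr(A)/\Pr(B)$ with $A=\{\bigl|\|W_\eta u\|_2^2-1\bigr|>t\}$ and $B=\{K=k\}$, to get
\[
\Pr\!\left(\bigl|\|Wu\|_2^2-1\bigr|>t\right)\;\le\;\frac{\Pr\!\left(\bigl|\|W_\eta u\|_2^2-1\bigr|>t\right)}{\Pr(K=k)}.
\]
By Stirling's formula applied to $\Pr(K=k)=\binom{d}{k}(k/d)^k(1-k/d)^{d-k}$, there is an absolute constant $c>0$ with $\Pr(K=k)\ge c/\sqrt{k}$ (this is the standard local central limit type estimate for the mode of the binomial). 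Since we only assume $k\le d$, in the boundary case $k=d$ the event $\{K=k\}$ has probability $(k/d)^d=1$, and the bound is vacuous there.

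Third, I would set $\delta'=(c/\sqrt{k})\cdot\delta$ in Theorem \ref{thm:cnw} so that the right-hand side of the displayed inequality is at most $\delta$. The deviation from Theorem \ref{thm:cnw} becomes
\[
C\sqrt{\tfrac{\log(1/\delta')\log(k/\delta')}{k}}
\;=\;C\sqrt{\tfrac{\bigl(\log(1/\delta)+\tfrac12\log k+O(1)\bigr)\bigl(\log(k/\delta)+\tfrac12\log k+O(1)\bigr)}{k}},
\]
and both factors in the numerator are $O(\log(k/\delta))$, so the whole expression is $O\!\bigl(\sqrt{\log^2(k/\delta)/k}\bigr)$, which matches the claim (after adjusting the constant $C$).

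The only mildly non-routine step is the lower bound $\Pr(K=k)=\Omega(1/\sqrt{k})$; this is standard from Stirling, and the loss of a $\sqrt{k}$ factor in the failure probability is exactly what turns the $\sqrt{\log(1/\delta)\log(k/\delta)/k}$ bound into $\sqrt{\log^2(k/\delta)/k}$. No other step presents a real obstacle.
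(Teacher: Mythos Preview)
Your proposal is correct and follows essentially the same argument as the paper: condition the Bernoulli-sampling SRHT of Theorem~\ref{thm:cnw} on the event $\{\|\eta\|_1=k\}$, use $\Pr(\|\eta\|_1=k)=\Theta(1/\sqrt{k})$ (via Stirling, as you note), and absorb the resulting $\sqrt{k}$ loss in the failure probability by replacing $\delta$ with $\delta'=\Theta(\delta/\sqrt{k})$. The paper's proof is identical in structure and brevity.
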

\begin{proof}
Consider $W' = \sqrt{\frac dk}S_\eta HD$ with Bernoulli parameter $k/d$, as in \cref{thm:cnw}. Then for any $\delta' \in(0,1)$ and fixed unit vector $u\in\reals^d$, $\Pr_{\eta,D}(E) < \delta'$, where $E$ is the event that $| \|W'u\|_2^2 - 1| > C\sqrt{\log(1/\delta')\log(k/\delta')/k}$. But we also have 
\begin{align*}
\Pr(E) &\ge \Pr(E\cap (\|\eta\|_1 = k))\\
{}&= \Pr(E \mid \|\eta\|_1 = k)\cdot \Pr(\|\eta\|_1 = k) \\
{}&= \Pr(E \mid \|\eta\|_1 = k) \cdot \Theta(1/\sqrt k) .
\end{align*}

Note $\Pr(E \mid \|\eta\|_1 = k)$ is exactly $\Pr(| \|Wu\|_2^2 - 1| > C\sqrt{\log(1/\delta')\log(k/\delta')/k})$, where $W$ is defined by sampling without replacement. Thus we have
$$
\Pr(| \|Wu\|_2^2 - 1| > C\sqrt{\log(1/\delta')\log(k/\delta')/k}) < C \delta' \sqrt{k} .
$$
The claim then follows by applying the above with $\delta' = \delta/(C\sqrt k)$.
\end{proof}

\section{Compressed \pug}

Compressing the \pu\ (resp. \pug) algorithm, using the technique of~\citet{FeldmanTa21}, requires a pseudorandom generator that generates samples from a unit ball (resp. Gaussian) and fools spherical caps. As observed in~\citep{FeldmanTa21}, such PRGs with small seed length are known~\citep{KothariM15,GopalanKM15}. However, the constructions in those works are optimized for seed length, and the computational cost of expanding a seed to a vector is a large polynomial. In this section, we argue that for inputs $x$ having $b$ bits of precision, we can compress \pu/\pug\ to small seed length with a relatively efficient algorithm for seed expansion. 

We will rely on Nisan's generator~\cite{Nisan92} which says that any space $S$ computation that consumes $N$ bits  of randomness can be $\delta$-fooled using a random seed of length $O(\log N (S+\log N/\delta))$. Moreover, the computational cost of generating a pseudorandom string from a random seed is $O(N\log N)$. In our set up, the test that privacy of \pu/\pug\ depends on is the $[g\cdot x \geq \gamma]$, when $g$ is chosen from the Gaussian distribution. This probability that this test passes for the Gaussian distribution is $e^{-c\varepsilon}$ for some constant $c$, and thus it suffices to set $\delta$ to be $e^{-c\varepsilon}\beta$ to ensure that mechanism satisfies $(\varepsilon+2\beta)$-DP. For the rest of this discussion, we will set $\beta = \varepsilon\tau/2$ which leads to $\varepsilon' < \varepsilon(1+\tau)$. We can set $\tau$ to be inverse polynomial as the dependence of the parameters on $\tau$ will be logarithmic.

The test of interest for us can be implemented in $S=O(\log d + b)$ space, and requires $N=db$ bits of randomness. Plugging in these values and $\delta = \eps \tau e^{-c\eps}/2$, we get seed length $O(\log db (b+c\eps+\log db/\eps\tau)$ and each expansion from seed to value requires run time $O(db \log db)$. 
Each run of \pug\ requires $O(e^{c\varepsilon})$ expected random strings, leading to a run time of $O(e^{c\epsilon}bd\log db)$. 

For our algorithm, we run this on a $k$-dimensional vector instead of a $d$-dimensional one, with $b = \log d$. This gives us seed length $O(\log (k\log d) \cdot (\log d + \eps+\log (k\log d/\eps)))$. Given the projected vector, the  run time is $O(e^{c\eps}k\log^2 d)$.


\end{document}